\theoremstyle{plain}
\newtheorem{theorem}{Theorem}[section]
\newtheorem{proposition}[theorem]{Proposition}
\newtheorem{lemma}[theorem]{Lemma}
\theoremstyle{definition}
\newtheorem{definition}[theorem]{Definition}
\newtheorem{assumption}[theorem]{Assumption}
\newtheorem{abstraction}[theorem]{Abstraction}
\newtheorem{example}[theorem]{Example}
\theoremstyle{remark}
\def\cA{{\mathcal{A}}}
\def\cD{{\mathcal{D}}}
\def\cE{{\mathcal{E}}}
\def\cH{{\mathcal{H}}}
\def\cM{{\mathcal{M}}}
\def\cP{{\mathcal{P}}}
\def\cV{{\mathcal{V}}}
\def\scH{{\mathscr{H}}}
\def\scM{{\mathscr{M}}}
\def\bbR{{\mathbb{R}}}
\newcommand{\E}{\mathbb{E}}
\newcommand{\op}[1]{\operatorname{#1}}
\DeclareMathOperator*{\argmin}{arg\,min}
\newcommand{\yang}[1]{#1}
\newcommand{\fangcong}[1]{#1}
\newcommand{\Bij}[2]{\operatorname{Bij}(#1, #2)}
\icmltitlerunning{Relational Learning in Pre-Trained Models: A Theory from Hypergraph Recovery Perspective}
\begin{document}

\twocolumn[
\icmltitle{Relational Learning in Pre-Trained Models: \\
A Theory from Hypergraph Recovery Perspective}



\icmlsetsymbol{equal}{*}

\begin{icmlauthorlist}
\icmlauthor{Yang Chen}{sist}
\icmlauthor{Cong Fang}{sist,iai}
\icmlauthor{Zhouchen Lin}{sist,iai,pazhou}
\icmlauthor{Bing Liu}{uic}
\end{icmlauthorlist}

\icmlaffiliation{sist}{National Key Lab of General Artificial Intelligence, School of Intelligence Science and Technology, Peking University}
\icmlaffiliation{iai}{Institute for Artificial Intelligence, Peking University}
\icmlaffiliation{pazhou}{Pazhou Laboratory (Huangpu), Guangzhou, China}
\icmlaffiliation{uic}{Department of Computer Science, University of Illinois Chicago}

\icmlcorrespondingauthor{Cong Fang}{fangcong@pku.edu.cn}
\icmlcorrespondingauthor{Zhouchen Lin}{zlin@pku.edu.cn}

\icmlkeywords{Relational Learning, Pre-Trained Model, Hypergraph Recovery}

\vskip 0.3in
]



\printAffiliationsAndNotice{}  

\begin{abstract}
Foundation Models (FMs) have demonstrated remarkable insights into the relational dynamics of the world, leading to the crucial question: \emph{how do these models acquire an understanding of world hybrid relations?}  
Traditional statistical learning, particularly for prediction problems, may overlook the rich and inherently structured information from the data, especially regarding
the relationships between objects. We introduce a mathematical model that formalizes relational learning as hypergraph recovery to study pre-training of FMs. 
In our framework, the world is represented as a hypergraph, with data abstracted as random samples from hyperedges. We theoretically examine the feasibility of a Pre-Trained Model (PTM) to recover this hypergraph and analyze the data efficiency in a minimax near-optimal style.
By integrating rich graph theories into the realm of PTMs, our  mathematical framework offers powerful tools for an in-depth understanding of pre-training from a unique perspective and can be used  under various scenarios. As an example, we  extend the framework to entity alignment in multimodal learning.
\end{abstract}

\section{Introduction}\label{sec:introduction}


Foundation Models (FMs) \citep{bommasani2021opportunities, openai2023gpt4} have emerged as transformative forces in the realm of artificial intelligence, demonstrating impressive performance in various real-world tasks such as knowledge retrieval \citep{liu2023reta}, mathematics problem solving \citep{frieder2023mathematical}, coding \citep{zhang2022planning}, commonsense reasoning \citep{rajani2019explain, zhao2023large}, and text-to-image generation \citep{ramesh2021dalle, li2023gligen}. During interactions with humans, FMs seem to exhibit an understanding of real-world entities to a certain degree, engaging in reasoning based on these entities \citep{bubeck2023sparks}. For example, FMs can deduce the entity ``table" from descriptions of objects placed on it, such as a cup, book, or computer, which raises a fundamental question: \emph{how do FMs learn real-world entities from pre-training?}

To investigate the learning of entities via pre-training, a formidable challenge is to formalize how the relationships between the entities are learned from data. Traditional statistical learning, \yang{such as PAC \citep{valiant1984theory, mohri2018foundations}}, particularly in classification problems, typically treats data as pairs of objects and their corresponding labels, focusing primarily on predicting these absolute labels. However, this approach may overlook the richer, more nuanced information that data inherently carry, especially regarding the relationships between objects. For instance, an image of a camel does not just represent the animal; it may also encapsulate its context, like a desert background, offering deeper relational insights on the camel and the context objects. Similarly, in natural language processing, the meaning of a sentence transcends the mere sum of its words, revealing complex interdependencies between the entities represented by the words. \fangcong{At the same time, PTMs, such
as LLMs, often respond to complex relationships between
objects.} \yang{Recognizing this, a new mathematical model is essential to capture these critical, yet often overlooked, facets of relational learning in pre-training, crucial for understanding the capabilities and generalization of the PTMs.}

In this work, we propose a novel mathematical framework based on hypergraph recovery to more fully capture the essence of relational learning. Specifically, we abstract the world as a hypergraph: entities are nodes, and relationships between entities are hyperedges. Each hyperedge is assigned a weight, signifying the strength of the corresponding relation. We formulate relational learning from pre-training as hypergraph recovery of the world hypergraph using the information of data. We model data generation as random sampling from the hyperedges. 
This data generation process mirrors real-world data collection, where a sample represents a perception of a relation between entities, with stronger relations having a higher likelihood of being observed and recorded. 
Our framework presents two-fold advantages: 1) In contrast to traditional statistical learning, our framework adopts a more nuanced approach. It goes beyond merely capturing individual labels within each data sample, delving into the interrelations between entities. This method yields a richer and more holistic understanding of relational learning in pre-training scenarios. 2) Additionally, the framework integrates rich graph theories into the field of PTMs. This integration invokes powerful analytical tools, providing a novel perspective for relational learning.

Based on the framework, we can answer two important questions about relational learning in PTMs: 1) Identification: Does the data provide sufficient information for relational learning? 2) Data efficiency: If so, what is the essential amount of data required? 
For the first question, we approach it as an estimation problem within a hypergraph framework and give an affirmative answer by demonstrating that the hypergraph can be identified from sufficient hyperedge samples. 
To address the second question, we first establish a lower bound $\Omega\left(\frac{m}{\epsilon^2}\right)$ for $\epsilon$-approximate relational learning of the hypergraph with $m$ hyperedges. 
We further investigate how a model learns relations via Masked Modeling (MM), a common practical pre-training algorithm \citep{kenton2019bert, he2022masked}. In the hypergraph recovery framework, an MM PTM learns a set of relative weight ratios between certain entity relations. We show that MM achieves the near-optimal (in terms of approximation error) sample complexity $\Tilde{O}\left(\frac{m}{\epsilon^2}\right)$, matching the information theoretical lower bound if logarithmic factors are neglected.

\yang{Our hypergraph framework is adaptable to scenarios necessitating the capture of entity relations, including multimodal entity alignment \citep{chen2020mmea, zhao2023multimodal}, social network privacy \citep{korolova2008link}, and relational reinforcement learning \citep{zambaldi2018deep}, etc., allowing for an analysis of key relational learning from pre-training data. We focus on multimodal entity alignment, demonstrating feasible alignment across modalities using sufficient unlabeled data, achieved through hypergraph matching. Although aligning without labeled pairs is theoretically possible, practical computational constraints necessitate labeled pairs to reduce complexity.}

We conduct experiments to back up the validity of our hypergraph formulation for relational learning in PTMs. In the first experiment of synthetic relational learning, we create synthetic entities whose relations compose weighted graphs, showing the power of MM for learning the synthetic relations.
In the second experiment, we examine real-world relational learning of LLMs by evaluating their relational subgraphs and measuring how well the evaluated subgraphs align with the real world. Our results show that the evaluated relations do align with the real world to some degree and more powerful models exhibit better alignment. 

We list the contributions of the paper as follows:
\begin{itemize}[leftmargin=*]
\item We propose a new mathematical model to formalize relational learning in PTMs, which is grounded in the principles of hypergraph recovery.
\item We demonstrate the feasibility of a learning model achieving relational learning and establish a minimax lower bound for the sample complexity involved. Additionally, we show that pre-training using Masked Modeling (MM) approaches near-optimal data efficiency in terms of approximation error within our framework.
\item We extend our framework to entity alignment in multimodal learning. We 
show the feasibility of entity alignment without labeled pairs and demonstrate the role of labeled pairs in reducing the computational complexity.
\end{itemize}

\begin{figure*}[th]
    \centering
    \includegraphics[width=\textwidth]{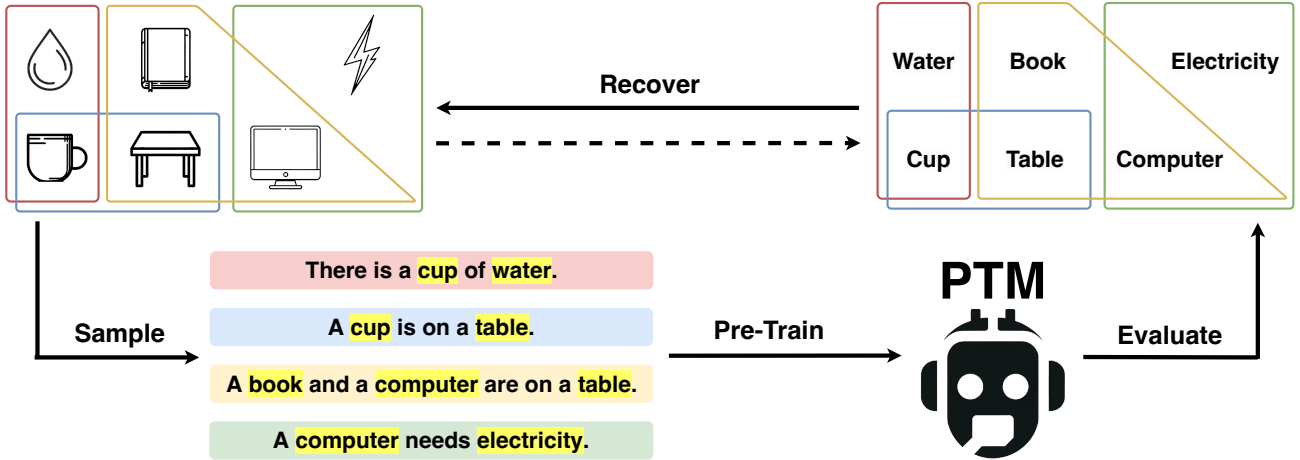}
    \caption{Our hypergraph recovery framework for relational learning in PTMs. The relational model of the world is viewed as a hypergraph. Data are generated by sampling hyperedges from the world relational model and mapping them to perception domains. PTMs learn the entity relations from the data. Recovered relational hypergraphs can be evaluated from the PTMs.}
    \label{fig:framework}
\end{figure*}
\section{Related Work}\label{sec:related}

\textbf{Graph Models.}
Graphs have long been used to characterize structures of data. For instances, 
parsing graphs use graphs to represent the grammatical dependencies of text, 
\citep{chomsky2014aspects, chen2014fast, hewitt2019structural}. 
Semantic networks model the semantic relationships between words and entities by graphical representations \citep{miller1995wordnet, speer2017conceptnet}.
Knowledge graphs represent knowledge as entities and complex relationships within graphs \citep{suchanek2007yago, lin2015learning, dettmers2018convolutional}. 
Following a similar philosophy, we model the concepts and the relations in the world as a weighted hypergraph and pre-training data as samples of hyperedges from the hypergraph. 
\fangcong{Our formulation is, instead, a simplified mathematical model to explain how pre-training can learn the complex relations in the world.}  

\textbf{Combinatorial Statistics.}
Combinatorial statistics studies the statistical properties of data with discrete structures. The most related topic in combinatorial statistics to this work is random graph isomorphism. These works model real-world problems, namely, DNA shotgun assembly \citep{idury1995new}, protein matching \citep{zaslavskiy2009global}, social network privacy \citep{korolova2008link}, etc., by random graph problems such as shotgun assembly \citep{mossel2017shotgun, ding2023shotgun} and random graph matching \citep{cullina2016improved, barak2019nearly, ding2021efficient}, exploiting both the combinatorial and statistical properties of the data. 
Our work takes a step to build the connections between combinatorial statistics and PTM capabilities, harnessing mathematical tools from the former to enhance our understanding of PTMs.

\textbf{Relational Learning.}
Relational learning focuses on identifying the relationships among entities \citep{struyf2010relational}. To understand and
exploit the relational structure of data, various relational learning techniques and methods are employed, including inductive logic programming \citep{de2008logical}, probabilistic logic learning \citep{de2008probabilistic}, relational reinforcement learning \citep{dvzeroski2001relational, zambaldi2018relational}, graph neural networks \citep{chen2021topological, fey2023relational}, etc.
While these works aim to capture entity relations more precisely, our research is dedicated to exploring the emergence of relational learning from pre-training in theory. 

\textbf{Theories of PTMs.}
Various theoretical frameworks have been proposed to elucidate the mechanisms by which PTMs leverage pre-training data and tasks to achieve generalization. 
Multi-task learning suggests that PTMs acquire generalizable representations through simultaneous training on diverse tasks \citep{ando2005framework, xie2020n, hu2021near, chen2022active, yang2022nearly}, under the assumption that these representations are the invariant components across the various tasks. Meta-learning posits that PTMs develop the ability to learn efficiently, postulating that certain meta parameters exist that enable fast adaptation to new tasks, with optimization processes geared towards these meta parameters \citep{finn2017model, finn2018probabilistic, tripuraneni2021provable}. In certain in-context learning scenarios, some in-context learning theories propose that PTMs internalize optimization or learning algorithms, facilitating task and distribution generalization \citep{akyurek2022learning, li2023transformers, von2023transformers}. This work diverges by explicitly modeling generalizable knowledge as a relational hypergraph of the world, framing pre-training as a process of hypergraph recovery.

\vspace{-0.1in}
\section{Preliminary}\label{sec:preliminary}
\textbf{Hypergraph.}
A hypergraph $\cH$ is a tuple $(\cV, \cE)$ where $\cV$ is a finite set called \emph{nodes} and $\cE$ is a family of subsets of $\cV$ called \emph{hyperedges} \citep{bretto2013hypergraph}. A weighted hypergraph $\cH$, denoted by a tuple $(\cV, \cE, w)$, is a hypergraph equipped with an additional weight function $w:\cE\mapsto\bbR_{\ge 0}$. The line graph of the hypergraph $\cH$, denoted by $L(\cH)$, is the graph whose node set is the set of the hyperedges of $\cH$ and edge set is the set of pairs of the hyperedges that intersect.
Consider transformations between hypergraphs.
Suppose that $\phi:\cV\mapsto\cV'$ is a bijection from $\cV$ to a set of nodes $\cV'$. For a hyperedge $e=\{v_1,\dots,v_k\}$, we use $\phi(e)$ to denote the hyperedge $\{\phi(v_1),\dots,\phi(v_k)\}$.
We use $\phi(\cH)$ to denote
the hypergraph $\cH'=(\cV',\cE',w')$ where $\cE'=\{\phi(e)\mid e\in\cE\}$ and 
$w'(e')=w(\phi^{-1}(e))$. 
We write $\cH_1\cong\cH_2$ if $\cH_1$ equals to $\cH_2$ up to some bijection, i.e., there exists a bijection $\phi$ such that $\phi(\cH_1)=\cH_2$.
To measure the differences between two hypergraphs $\cH_1=(\cV_1, \cE_1, w_1)$ and $\cH_2=(\cV_2, \cE_2, w_2)$, we consider the following dissimilarity measure
\begin{equation}\label{eq;dissimilarity}
    d(\cH_1,\cH_2) = \sum_{e\in\cE_1\cup\cE_2}|\bar{w}_1(e)-\bar{w}_2(e)|,
\end{equation}
where the weight function $\bar{w}_i(e)=w_i(e)$ if $e\in\cE_i$ and $\bar{w}_i(e)=0$ otherwise, $i=1,2$. This measure corresponds to the dissimilarity between two graphs constructed from the hypergraphs by the star expansion algorithm \citep{surana2021hypergraph} and captures the hyperedge weight differences between the hypergraphs.

\textbf{Notation.}
We use $A^*$ to denote the Kleene closure of set $A$, i.e., $A^*=\bigcup_{i=0}^{\infty} A^i$ where $A^0 = \{\varepsilon\}$ (the set consisting of only the empty sequence) and $A^i=\{(a_1,\dots, a_i)\mid a_j\in A,\,j=1,\dots,i\}$. We use $\op{Bij}(A, B)$ to denote the set of all bijections from set $A$ to set $B$. The notation $O(k)$ (resp., $\Omega(k)$) represents the upper bound (resp., the lower bound) of $C\cdot k$ for some constant $C$.
\section{Hypergraph Recovery Framework}\label{sec:framework}
This section introduces a mathematical framework of hypergraph recovery for relational learning in PTMs and how it could emerge from pre-training. We first model the entities and their relations in the world as a weighted hypergraph. 

\begin{abstraction}[Relational Model of the World]\label{abs:rel-mw}
    The relational model of the world is a hypergraph $\cH_0=(\cV_0, \cE_0, w_0)$, where each node $v\in\cV_0$ represents an entity, each hyperedge $e\in\cE_0$ represents a relation between entities, and the weight function $w_0: \cE\mapsto\bbR$ represents the strength of the relations.
    Without loss of generality, we assume the weight function is normalized, i.e., $\sum_{e\in\cE_0} w_0(e)=1$.
    We further assume that $|\cV_0|=n$ and $|\cE_0|=m$. 
\end{abstraction}

Since data is the perception of the world, we formalize the data generation as sampling 
from the relational hypergraph of the world, as described in \cref{abs:dg}. 

\begin{abstraction}[Data Generation]\label{abs:dg}
    In the data generation process, the entities are mapped to a perception domain (e.g., language and vision). We denote the perception mapping by $\phi_0$. In this work, we consider the perception mapping $\phi_0$ as a bijection, which keeps the structure of the relational hypergraph $\cH_0$. Each data point $e$ is a perception of the relations in the domain, corresponding to a hyperedge sampled i.i.d. from the hypergraph $\phi_0(\cH_0)$ according to the weights, i.e., $e \sim P_w(e)= w(e) = w_0(\phi_0^{-1}(e))$.
\end{abstraction}

Under this model, we define relational learning as follows.
    
\begin{definition}[Relational Learning]\label{def:rel}
    A hypergraph $\cH=(\cV, \cE, w)$ achieves relational learning for the relational model of the world if $\cH\cong\cH_0$, i.e., there exists a bijection $\phi: \cV\mapsto\cV_0$ such that $\phi(\cH)=\cH_0$.
\end{definition}

In practice, we have only finite samples and it is unrealistic to expect that the estimated relational hypergraph is completely the same as the relational model of the world.
We further define $\epsilon$-approximate relational learning to consider the approximation error of estimation with finite samples.

\begin{definition}[$\epsilon$-Approximate Relational Learning]\label{def:appx-rel}
    A hypergraph $\cH=(\cV, \cE, w)$ achieves $\epsilon$-approximate relational learning for the relational model of the world if there exists a bijection $\phi: \cV\mapsto\cV_0$ such that $d(\phi(\cH),\cH_0)\leq \epsilon$.
\end{definition}

We also say that a model $\cM$ achieves ($\epsilon$-approximate) relational learning if we can reconstruct a hypergraph that 
($\epsilon$-approximate) relational learning from the model.

\begin{definition}[($\epsilon$-Approximate) Relational Learning of Models]\label{def:rel-model}
   A model $\cM$ achieves ($\epsilon$-approximate) relational learning if there exists a testing algorithm $\cA_{\text{test}}: \scM
    \mapsto \scH$ can estimate hypergraphs from models such that $\cA_{\text{test}}(\cM)=\cH_\cM$ achieves ($\epsilon$-approximate) relational learning. Here, $\scM$ and $\scH$ denote the sets of all models and all hypergraphs of interest, respectively. 
\end{definition}

For PTMs, a typical process of relational learning is as follows: a pre-training algorithm $\cA_{\text{pre}}$ learns a model $\cM$ from a dataset $D$ and a testing algorithm $\cA_{\text{test}}$ examines whether the model achieves relational learning, i.e.,
\begin{equation}\label{eq:pipeline}
    \cH_0 \xrightarrow{\text{Sample}} D \overset{\cA_{\text{pre}}}{\longrightarrow} M \overset{\cA_{\text{test}}}{\longrightarrow} \cH.
\end{equation}

From the information perspective, whether ($\epsilon$-approximate) relational learning is achievable from a dataset $D$ is equivalent to whether there exists a pre-training algorithm and a testing algorithm that can reconstruct a relational hypergraph equal to the relational hypergraph of the world (up to some bijection). \yang{The pre-training algorithm and the testing algorithm are expected to work well for a class of target relational hypergraphs. This goal can be captured by the following minimax formula:
\begin{equation}\label{eq:minimax-infomation}
    \inf_{\cA_{\text{pre}}, \cA_{\text{test}}}\sup_{\cH_0\in\scH_0} 
    d\left(\cA_{\text{test}}\left(\cA_{\text{pre}}(D)\right), \phi_0\left(\cH_0\right)\right) \leq \epsilon,
\end{equation}
where the $\scH_0$ is the set of target relational hypergraphs.}

When we consider whether a model pre-trained by a certain algorithm can achieve relational learning, we need to consider how the pre-training algorithm can utilize the data. In this work, we consider Masked Modeling (MM), a common pre-training method that is widely used in various fields. In MM, a model is pre-trained to predict a sample $e$ based on an input $e^{-}$ that is generated by masked several tokens in $e$ according to a masking strategy $\pi=\pi(e^-\mid e)$.

\begin{abstraction}[Masked Modeling]\label{abs:mm}
     Given a masked input $e^-$, a model $\cM$ pretrained by MM complements it and outputs $e$, reflecting the model's belief $\cM(e\mid e^-)$ on 
     \begin{equation*}
         P(e\mid e^-)=\frac{w_0(\phi_0^{-1}(e))\pi(e\mid e^-)}{\sum_{e'} w_0(\phi_0^{-1}(e'))\pi(e^-\mid e')}.
     \end{equation*}
     The model predicts a hyperedge $e\sim\cM(e\mid e^-)$. With a slight abuse of notation, we denote the prediction of $\cM$ given $e^-$ by $\cM(e^-)$.
\end{abstraction}

For two hyperedges $e_1, e_2$ such that $\pi(e^-\mid e_1) > 0$ and $\pi(e^-\mid e_2) > 0$, we can further infer their relative weights from the MM model $\cM$ as $\frac{\hat{w}(e_1)}{\hat{w}(e_2)}=\frac{M(e_1\mid e^-)\pi(e^-\mid e_2)}{M(e_2\mid e^-)\pi(e^-\mid e_1)}$.
To capture such relations between two hyperedges, we define $e_1\overset{\pi}{\leftrightarrow} e_2$ if there exists a masked hyperedge $e^{-}$ such that $\pi(e^{-}\mid e_1) > 0$ and $\pi(e^{-}\mid e_2) > 0$. For the sake of notational simplicity and in cases where it does not lead to ambiguity, we use $e_1 \leftrightarrow e_2$ without the superscript $\pi$. Therefore, under our framework, we can view MM as learning the relative weights between $\leftrightarrow$ related hyperedges. 

We also abstract the data generation process of MM.
\begin{abstraction}[Masked Modeling Data Generation]\label{abs:mmdg}
    In the data generation of MM, each hyperedge $e_t$ is first sampled i.i.d. from $P_w(e)$ where $P_w(e) = w_0(\phi_0^{-1}(e))$, for all $t=1,\dots, N$. For each hyperedge $e_t$, $K$ masked hyperedges $\{e_{tk}^-\}_{k=1}^K$ are generated i.i.d. by a masking strategy $\pi$, i.e., $e_{tk}^-\sim \pi(e_{tk}^-\mid e_{tk})$ where $e_{tk}=e_t$, for all $1\leq k\leq K$. The dataset for MM is $D=\{(e_{tk},e_{tk}^-)\}_{1\leq t\leq N, 1\leq k\leq K}$.
\end{abstraction}

Under Abstractions~\ref{abs:mm} and~\ref{abs:mmdg}, an MM model $\cM$ pre-trained on $D$ with a loss $\ell$ is
\begin{equation}\label{eq:mm}
    \cM = \argmin_{\cM'\in\scM} \sum_{t=1}^N \sum_{k=1}^K \ell(\cM'(e_{tk}^-), e_{tk}).
\end{equation}

For an MM pre-trained model to achieve relational learning, it needs to learn relative weights from an MM dataset such that these relative weights amount to the recovery of the relational hypergraph $\cH_0$. 
Denote the MM pre-training algorithm in \eqref{eq:mm} by $\cA_{\text{MM}}$ under Abstractions~\ref{abs:mm} and~\ref{abs:mmdg}. Following \eqref{eq:pipeline} and \eqref{eq:minimax-infomation}, this is to consider
\begin{equation}\label{eq:minimax-mm}
    \inf_{\cA_{\text{test}}}\sup_{\cH_0} 
    d\left(\cA_{\text{test}}\left(\cA_{\text{MM}}(D)\right), \phi_0\left(\cH_0\right)\right) \leq \epsilon.
\end{equation}
\section{Main Results for Entity Relational Learning}\label{sec:entity-relational-learning}

\subsection{Identification}

We first consider whether identifying the relational hypergraph $\cH_0$ from a pre-training dataset is possible at the population level. The following theorem affirms the feasibility of relational learning if sufficient data are available.

\begin{theorem}[Identifiability]\label{thm:identifiability}
    Under Abstractions~\ref{abs:rel-mw} and~\ref{abs:dg}, suppose that ${e_t}$ is a generated data sequence. Let $D_N$ be the dataset consisting of the first $N$ elements of the sequences, i.e., $D_N=(e_1,\dots, e_N)$.
    Then there exist an pre-training algorithm $\cA_{\text{pre}}$ and a testing algorithm $\cA_{\text{test}}$, $\cA=\cA_{\text{test}}\left(\cA_{\text{pre}}(\cdot)\right):\cE^* \mapsto \scH$ such that $\cA(D_N)$ converges to a hypergraph $\cH$ that achieves relational learning as $N\to\infty$ almost surely, i.e., $\cA(D_N)\overset{\text{a.s.}}{\to}\cH\cong\cH_0$.
\end{theorem}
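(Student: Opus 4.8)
The plan is to construct explicit algorithms that reconstruct the hypergraph \emph{in the perception domain} directly from the empirical distribution of the observed hyperedges, and then to combine the strong law of large numbers with the finiteness of $\cE_0$ to obtain almost-sure convergence. The key reduction is that, because $\phi_0$ is a structure-preserving bijection, each sample $e_t$ is literally a hyperedge of $\phi_0(\cH_0)$ drawn with probability $w(e)=w_0(\phi_0^{-1}(e))$; hence it suffices to recover $\phi_0(\cH_0)$ exactly, since $\phi_0(\cH_0)\cong\cH_0$ by definition. I would therefore define $\cA_{\text{pre}}$ to read $D_N=(e_1,\dots,e_N)$ and store, for each distinct observed hyperedge $e$, its empirical frequency $\hat{w}_N(e)=\frac{1}{N}\sum_{t=1}^N \mathbf{1}[e_t=e]$ (this plays the role of a model holding an estimate of $P_w$), and define $\cA_{\text{test}}$ to output $\hat{\cH}_N=(\hat{\cV}_N,\hat{\cE}_N,\hat{w}_N)$, where $\hat{\cE}_N$ is the set of distinct observed hyperedges, $\hat{\cV}_N$ their union, and the weights are the empirical frequencies.

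The first substantive step is support recovery. Assuming $w_0(e)>0$ for every $e\in\cE_0$ (a zero-weight relation is not a genuine relation and contributes nothing to $d$), each hyperedge $e\in\cE_0':=\phi_0(\cE_0)$ is sampled on any trial with fixed positive probability. Since the trials are i.i.d. and there are only $m=|\cE_0|$ hyperedges, the second Borel--Cantelli lemma (or a direct union bound $\sum_{e}(1-w(e))^N\to 0$ over the finitely many hyperedges) shows that almost surely there is a finite, sample-path-dependent index $N_0$ with $\hat{\cE}_N=\cE_0'$ and $\hat{\cV}_N=\cV_0':=\phi_0(\cV_0)$ for all $N\ge N_0$ (modulo isolated nodes, which carry no relational content). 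The second step passes to the weights: for each fixed $e\in\cE_0'$ the indicators $\mathbf{1}[e_t=e]$ are i.i.d.\ Bernoulli with mean $w(e)$, so by the strong law of large numbers $\hat{w}_N(e)\to w(e)$ almost surely. Intersecting the finitely many almost-sure events over $e\in\cE_0'$ together with the support-recovery event yields, almost surely and for all large $N$,
\[
d(\hat{\cH}_N,\phi_0(\cH_0))=\sum_{e\in\cE_0'}\left|\hat{w}_N(e)-w(e)\right|\longrightarrow 0 .
\]
Thus $\cA(D_N)=\hat{\cH}_N$ converges in the dissimilarity measure $d$ to $\cH:=\phi_0(\cH_0)$, and since $\cH\cong\cH_0$ this limit achieves relational learning, giving $\cA(D_N)\overset{\text{a.s.}}{\to}\cH\cong\cH_0$.

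I expect the main obstacle to be conceptual rather than technical, and it lies in the support-recovery step: one must argue that the \emph{combinatorial} structure --- the discrete hyperedge set, not merely the weight vector --- stabilizes in finite time almost surely, which is exactly what distinguishes exact identification of $\cE_0'$ from the routine convergence of a continuous empirical frequency. Once the support is pinned down, the weight convergence is a standard application of the strong law. A secondary point I would state explicitly, to justify the isomorphism in the conclusion, is the reduction ``recover $\phi_0(\cH_0)$ $\Rightarrow$ recover $\cH_0$ up to bijection,'' which follows immediately from the bijectivity of $\phi_0$ but should be recorded so that the meaning of convergence to $\cH\cong\cH_0$ is unambiguous. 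I would also flag the standing positivity assumption $w_0>0$, since without it the theorem cannot hold as stated (a zero-weight hyperedge is never observed, yet exact isomorphism would demand its recovery).
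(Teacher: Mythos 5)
Your proposal is correct and follows essentially the same route as the paper: both construct the estimator that records empirical hyperedge frequencies as weights and then show the dissimilarity to $\cH_0$ (up to the bijection $\phi_0$) vanishes almost surely. The only cosmetic differences are that the paper derives the almost-sure convergence quantitatively via Hoeffding's inequality plus the first Borel--Cantelli lemma rather than citing the strong law of large numbers directly, and your explicit support-recovery step (with the positivity caveat $w_0>0$) is a harmless refinement that the paper's argument absorbs into the weight convergence, since the dissimilarity measure assigns weight zero to unobserved hyperedges.
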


\yang{
\cref{thm:identifiability} asserts the asymptotic identifiability of the target hypergraph as the dataset size approaches infinity. The proof of \cref{thm:identifiability} leverages the law of large numbers to show that the distance between the estimated hypergraph and the actual relational hypergraph converges to $0$. For detailed proof, refer to \cref{appendix:proof}.}

\subsection{Data Efficiency}

Since relational learning is feasible at the population level, we then consider the data efficiency to achieve $\epsilon$-approximate relational learning at the sample level. 
We first consider an information theoretical lower bound of the sample complexity to achieve $\epsilon$-approximate relational learning.

\begin{theorem}[Information Theoretical Lower Bound]\label{thm:itlb}
    Under Abstractions~\ref{abs:rel-mw} and~\ref{abs:dg} and assuming that the generated dataset $D$ is of size $|D|=N\ge m$ with $m$ sufficiently large, the minimax risk of  reconstruction error satisfies
    \begin{equation*}
        \inf_{\cA_{\text{pre}},\cA_{\text{test}}} \sup_{\cH_0} \E_{D}\left[d(\cA_{\text{test}}\left(\cA_{\text{pre}}(D)\right),\phi_0(\cH_0))\right] \ge \frac{1}{16}\sqrt{\frac{m}{N}}.
    \end{equation*}
\end{theorem}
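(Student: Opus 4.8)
The plan is to establish this minimax lower bound via the standard information-theoretic reduction to a hypothesis-testing problem, combined with Le Cam's two-point method (or more generally Fano's inequality / Assouad's lemma). The key observation is that the dissimilarity measure $d$ from \eqref{eq;dissimilarity} reduces to an $\ell_1$ distance between the weight vectors of the hypergraphs, and that estimating the hypergraph from samples $e_t \sim P_w$ is essentially a density estimation problem for a discrete distribution on $m$ hyperedges. Since the samples only reveal which hyperedge was drawn, recovering the weight function $w_0$ (up to the known bijection $\phi_0$) amounts to estimating a multinomial distribution on $m$ outcomes, and the $d$-distance becomes the total-variation-type $\ell_1$ error of that estimate. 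The $\sqrt{m/N}$ rate is exactly the minimax $\ell_1$ rate for estimating a discrete distribution on $m$ support points from $N$ samples, so the task is to instantiate the classical lower-bound machinery in our hypergraph language.

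First I would fix a base hypergraph topology (fix $\cV_0$, $\cE_0$ with $|\cE_0|=m$) so that the only unknown is the weight vector $w_0$ lying in the probability simplex $\Delta_{m-1}$; this makes the bijection issue vanish because all competing hypergraphs share the same node and hyperedge sets, hence $\phi_0$ can be taken as the identity and $d(\cH_1,\cH_2)=\|w_1-w_2\|_1$. Next I would construct a carefully chosen finite packing $\{\cH_\sigma\}$ of the parameter space indexed by sign vectors $\sigma\in\{-1,+1\}^{m/2}$, perturbing the uniform weights $1/m$ in opposite directions on disjoint pairs of hyperedges, i.e. setting $w_\sigma(e_{2j-1})=\tfrac{1}{m}(1+\delta\sigma_j)$ and $w_\sigma(e_{2j})=\tfrac{1}{m}(1-\delta\sigma_j)$ for a perturbation magnitude $\delta$ to be tuned. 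These remain valid normalized weight functions, and any two of them are separated in $d$-distance proportionally to their Hamming distance times $\delta/m$.

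The core of the argument is then Assouad's lemma: the estimation error is lower bounded by summing over the $m/2$ coordinates, where for each coordinate the error is controlled by the testing affinity (the total variation, or equivalently the Hellinger/KL affinity) between the product measures $P_\sigma^{\otimes N}$ and $P_{\sigma'}^{\otimes N}$ that differ only in that one sign. Here I would bound the KL divergence between two single-sample distributions differing in one coordinate by $O(\delta^2/m)$ using the standard $\chi^2$/KL estimate for nearby multinomials, so the $N$-sample KL is $O(N\delta^2/m)$; tensorization combined with Pinsker keeps the testing affinity bounded away from zero precisely when $N\delta^2/m = \Theta(1)$, i.e. $\delta = \Theta(\sqrt{m/N})$. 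Plugging this choice back, each of the $m/2$ coordinates contributes $\Theta(\delta/m)$ to the expected $\ell_1$ error, yielding a total of $\Theta(m \cdot \delta/m) = \Theta(\delta) = \Theta(\sqrt{m/N})$, and tracking the absolute constants carefully gives the stated $\tfrac{1}{16}\sqrt{m/N}$.

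The main obstacle I anticipate is twofold. First, the condition $N \ge m$ with $m$ sufficiently large is needed to ensure the tuned perturbation $\delta = \Theta(\sqrt{m/N})$ is small enough that all $w_\sigma$ stay in the simplex and the KL linearization is valid, so I must verify $\delta \le 1$ and that the per-coordinate KL bound holds uniformly. Second, and more delicately, I must confirm that the dissimilarity $d$ genuinely collapses to $\|w-w'\|_1$ over the constructed family despite its definition involving $\bar w_i(e)$ over $\cE_1 \cup \cE_2$; fixing a common hyperedge set $\cE_0$ across the packing sidesteps any asymmetry from hyperedges present in one hypergraph but not the other, but I would state explicitly why restricting the supremum to this subfamily only strengthens (never weakens) the lower bound, since the minimax supremum over the full $\scH_0$ dominates the supremum over any subfamily. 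Managing the constants through Assouad's lemma to land on exactly $1/16$ is routine but must be done attentively.
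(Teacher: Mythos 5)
Your proposal is correct, and it shares the paper's central insight---that recovering the weighted hypergraph under $d$ contains, as a special case, estimating a discrete distribution on $m$ atoms under $\ell_1$ loss---but it takes a genuinely different route for the key technical ingredient. The paper fixes a path-shaped subfamily ($\cV_0=\{v_1,\dots,v_{m+1}\}$, $\cE_0=\{\{v_i,v_{i+1}\}\}_{i=1}^m$, $w_0(\{v_i,v_{i+1}\})=p_i$), observes that any pre-training/testing pair induces a distribution estimator whose $\ell_1$ risk is no larger than the hypergraph reconstruction risk, and then simply imports the minimax lower bound of Theorem~2 of \citet{han2015minimax}, evaluating it at $\zeta=1$ and simplifying under $N\ge m$ with $m$ large to land on $\frac{1}{16}\sqrt{m/N}$. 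You instead re-derive that estimation lower bound from scratch via Assouad's lemma with the standard paired-perturbation hypercube, and your constants do work out: between hypotheses differing in one sign, the per-sample KL is $\frac{2\delta}{m}\log\frac{1+\delta}{1-\delta}\le\frac{8\delta^2}{m}$ for $\delta\le\frac14$; tensorization and Pinsker give $TV\le 2\delta\sqrt{N/m}$; and taking $\delta=\frac14\sqrt{m/N}$ (admissible precisely because $N\ge m$) ensures $TV\le\frac12$, so Assouad yields risk at least $\frac{m}{2}\cdot\frac{2\delta}{m}\cdot\frac{1}{2}\left(1-TV\right)\ge\frac{\delta}{4}=\frac{1}{16}\sqrt{m/N}$. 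What the paper's citation buys is brevity (and a bound informative in other regimes, e.g.\ $N\lesssim m$); what your argument buys is a self-contained, elementary proof that makes transparent where the constant and the hypotheses come from ($N\ge m$ keeps $\delta\le 1$ so the perturbed weights stay in the simplex; $m$ large absorbs parity issues). Two details to nail down when writing it up: handle odd $m$ by freezing one edge at weight $1/m$ and pairing the remaining $m-1$ edges, and note that if the estimator outputs a hypergraph whose edge set differs from $\cE_0$, the sum defining $d$ over $\cE_1\cup\cE_2$ only adds nonnegative terms beyond $\sum_{e\in\cE_0}|\bar w(e)-w_\sigma(e)|$, so the rounding step of Assouad applies verbatim to the restriction of the estimated weights to $\cE_0$.
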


\cref{thm:itlb} presents an information theoretical lower bound $\Omega\left(\frac{m}{\epsilon^2}\right)$ of the sample complexity for $\epsilon$-approximate relational learning. This lower bound is derived from the sample complexity lower of the discrete distribution estimation problem under $\ell_1$ distance, by a reduction from the estimation problem to an approximate relational learning problem. 
The lower bound highlights that the number of the hyperedges $m$ is an important factor in the difficulty of relational learning.

Now we consider the data efficiency of MM to achieve $\epsilon$-approximate relational learning.
We assume that the model $\cM$ is expressive enough to fit the pre-training data, i.e., for a MM dataset $D$, the model $\cM$ pre-trained on $\cD$ satisfies
\begin{equation}\label{eq:mm-loss}
    \cM = \argmin \sum_{t=1}^N \sum_{k=1}^K \ell(\cM'(e_{tk}^-), e_{tk}).
\end{equation}

To characterize the sample complexity, we introduce the following additional assumptions.

\begin{assumption}[Range ratio of the weight function]\label{as:bwf}
    The range ratio of the weight function is
    $\kappa = \frac{\max_{e\in\cE} w(e)}{\min_{e\in\cE} w(e)}$.
\end{assumption}

\begin{assumption}[Bound on the masking strategy]\label{as:bms}
    For each hyperedge $e\in\cE$, the support set of masked hyperedges is upper bounded, i.e., $|\op{supp}\pi(\cdot\mid e)| < C_\pi$ for some constant $C_\pi$. For each $e\in\cE$ and $e^-\in\op{supp}\pi(\cdot\mid e)$, the probability $\pi(e^-\mid e)$ is lower bounded by some constant $c_\pi$.
\end{assumption}

\begin{assumption}[Bound on the MM path length]\label{as:bpath}
    For any hyperedges $e,e'\in\cE$, there exists a path bounded by $L$ such that $e=e_1\leftrightarrow e_2 \leftrightarrow \dots \leftrightarrow e_\ell=e'$.
\end{assumption}

\begin{figure*}[th]
    \centering
    \includegraphics[width=0.95\textwidth]{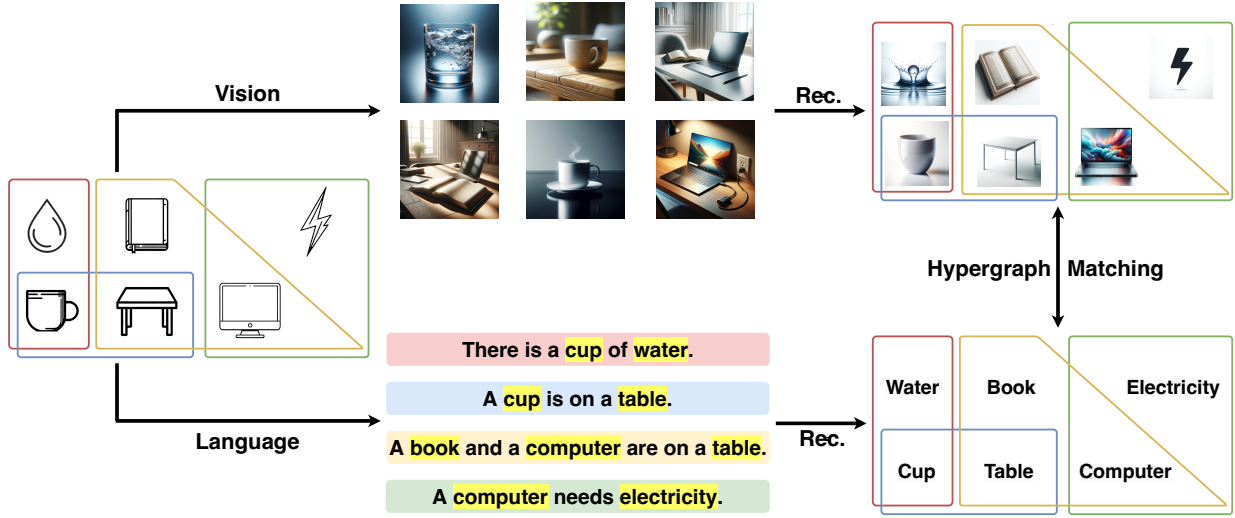}
    \caption{Extension of our hypergraph framework to entity alignment in multimodal learning (taking vision and language for illustration). The relational hypergraphs in different modalities can be reconstructed from data. The entities from different modalities can be aligned by matching the relational hypergraphs. ``Rec." represents ``Reconstruct".}
    \label{fig:entity-alignment}
\end{figure*}

\cref{as:bwf} bounds the weights of each hyperedge within a certain range. Assumption~\ref{as:bms} bounds the complexity of the masking strategy by limiting the support set of masked hyperedges and setting a minimum probability threshold for potentially masked hyperedges. \cref{as:bpath} bounds the connectivity complexity among the hyperedges under the masking strategy.

We analyze the sample complexity for the PTM pre-trained by MM $\cM$ to achieve $\epsilon$-approximate relational learning with cross-entropy loss in \cref{thm:ubmm}. 

\begin{theorem}[Upper Bound by MM]\label{thm:ubmm}
    Suppose that $\cM$ is an FM pre-trained by MM on a dataset $D$ with cross-entropy loss. Then $\cM$ achieves $\epsilon$-approximate relational learning
    with probability at least $1-\delta$ if 
    \begin{equation}
        \begin{gathered}
            K  \ge \frac{2^{14} m^2 \kappa^2 L^2}{c_\pi^2 \epsilon^2}\log \frac{6 m C_\pi}{\delta},\\
            N \ge \max\left\{\frac{2 m \kappa}{c_\pi}\log \frac{3 m C_\pi}{\delta}, \frac{8 m}{\epsilon^2}\log\frac{6 m}{\delta}\right\}.
        \end{gathered}
    \end{equation}
\end{theorem}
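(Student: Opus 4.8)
The plan is to route through the model to the empirical conditional distributions it memorizes, reconstruct every local weight ratio from these conditionals, stitch the ratios together along short masking paths, and finally renormalize; the two sample-size requirements then arise from (i) guaranteeing that every hyperedge and every admissible mask is seen enough times, and (ii) guaranteeing that each conditional is resolved to sufficient \emph{multiplicative} accuracy.

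First I would pin down what an expressive minimizer of the cross-entropy objective in \eqref{eq:mm-loss} actually is. Grouping the sum over $(t,k)$ by the value of the mask $e^-$, the terms depending on $\cM'(\cdot\mid e^-)$ form $\sum_e n(e,e^-)\bigl(-\log \cM'(e\mid e^-)\bigr)$, where $n(e,e^-)$ counts the pairs with $e_{tk}=e$ and $e_{tk}^-=e^-$. Cross-entropy is a strictly proper scoring rule, so the simplex-minimizer is the empirical conditional $\cM(e\mid e^-)=\hat P(e\mid e^-)=n(e,e^-)/n(e^-)$ with $n(e^-)=\sum_{e'}n(e',e^-)$; by expressiveness the learned $\cM$ attains this for every queried $e^-$. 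The testing algorithm then converts these conditionals into weight ratios through the relative-weight identity introduced alongside the relation $e_1\overset{\pi}{\leftrightarrow}e_2$, namely $\hat w(e_1)/\hat w(e_2)=\bigl(\hat P(e_1\mid e^-)\pi(e^-\mid e_2)\bigr)/\bigl(\hat P(e_2\mid e^-)\pi(e^-\mid e_1)\bigr)$ for any $e^-$ witnessing $e_1\overset{\pi}{\leftrightarrow}e_2$. Since the two conditionals share the denominator $n(e^-)$, this estimator equals $\tfrac{n(e_1,e^-)\pi(e^-\mid e_2)}{n(e_2,e^-)\pi(e^-\mid e_1)}$, whose expectation is exactly the true ratio $w(e_1)/w(e_2)$.

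The second step is concentration of these ratio estimators. Because the weights are normalized and have range ratio $\kappa$ (Assumption~\ref{as:bwf}), $w(e)\ge 1/(m\kappa)$, and with $\pi(e^-\mid e)\ge c_\pi$ (Assumption~\ref{as:bms}) the mean count obeys $\E[n(e,e^-)]=N\,w(e)\,K\,\pi(e^-\mid e)\ge NKc_\pi/(m\kappa)$. I would first use the first lower bound on $N$ (the $\tfrac{2m\kappa}{c_\pi}\log\tfrac{3mC_\pi}{\delta}$ term), together with a Chernoff/coupon-collector argument, to guarantee uniformly over the at most $mC_\pi$ admissible pairs $(e,e^-)$ (the support bound $C_\pi$ of Assumption~\ref{as:bms}) that every hyperedge is sampled $\Theta(Nw(e))$ times; then a multiplicative Chernoff bound driven by $K$ forces each $n(e,e^-)$, hence each local ratio, to within relative error $\eta$, the union bound over the $mC_\pi$ pairs producing the $\log\tfrac{mC_\pi}{\delta}$ factor. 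Next I would propagate and renormalize: fixing a reference hyperedge and invoking the path bound $L$ of Assumption~\ref{as:bpath}, each $w(e)$ is recovered by multiplying at most $L$ local ratios along a path $e=e_1\leftrightarrow\cdots\leftrightarrow e_\ell=e'$, so relative errors $\eta$ compound to $O(L\eta)$ on the unnormalized weights; renormalizing and using $\sum_e w(e)=1$ converts these into $d(\hat\cH,\phi_0(\cH_0))=\sum_e|\hat w(e)-w(e)|=O(L\eta)$ to leading order. Distributing the budget $\epsilon$ across the $m$ edges and turning the per-edge additive target $\epsilon/m$ into a multiplicative one via $w_{\min}\ge 1/(m\kappa)$ fixes $\eta=\Theta\!\bigl(\epsilon/(m\kappa L)\bigr)$; substituting into $K\gtrsim c_\pi^{-2}\eta^{-2}\log(mC_\pi/\delta)$ yields the stated bound on $K$, while the residual concentration of the normalizing mass reproduces the $\tfrac{8m}{\epsilon^2}\log\tfrac{6m}{\delta}$ term in $N$, matching the rate of \cref{thm:itlb}.

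The main obstacle I anticipate is the multiplicative error control. A mask consistent with many hyperedges makes the individual conditionals $P(e\mid e^-)$ small, so additive concentration is useless and one is forced into relative (Chernoff-type) bounds whose sample requirement scales inversely with the smallest mean count; this is precisely what drags in the factors $\kappa$ (weight range) and $1/c_\pi$ (masking floor). Equally delicate is the clean separation of roles between $N$ and $K$—$N$ must secure coverage and the $\ell_1$-level normalization accuracy, while $K$ must secure per-conditional resolution—together with the careful accounting of how the length-$L$ chaining amplifies relative error before renormalization cancels the arbitrary reference scale. Splitting the $\epsilon$-budget across $m$ edges and length-$L$ paths correctly, rather than losing spurious polynomial factors, is the part I would treat most carefully.
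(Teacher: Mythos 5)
Your proposal follows essentially the same route as the paper's proof: identify the cross-entropy minimizer with the empirical conditionals, turn these into local weight ratios via the masking strategy, chain the ratios along paths of length at most $L$ and renormalize, with the $K$ requirement coming from per-conditional concentration union-bounded over the at most $mC_\pi$ admissible pairs, and the two $N$ requirements coming from coverage of all $(e,e^-)$ pairs and from the $\ell_1$ concentration of the empirical hyperedge frequencies (the paper's Lemma~\ref{lm:plug-in-estimation}). Two slips in your write-up are harmless to the structure but worth fixing: the ratio estimator is not exactly unbiased for $w(e_1)/w(e_2)$ — it concentrates around the \emph{empirical} ratio $f_N(e_1)/f_N(e_2)$, which is exactly why the $\frac{8m}{\epsilon^2}\log\frac{6m}{\delta}$ term on $N$ is indispensable — and your conversion of the additive budget $\epsilon/m$ into the multiplicative target $\eta=\Theta\bigl(\epsilon/(m\kappa L)\bigr)$ via $w_{\min}$ is not a valid derivation as stated, though the resulting $\eta$ is (over-)conservative and still yields the claimed bounds.
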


\yang{
In scenarios defined by specific problems and masking strategies, the term $\tilde{O}\left(\frac{m}{\epsilon^2}\right)$ predominates at low approximation errors, especially when $\epsilon=o\left(\sqrt{\frac{c_\pi}{\kappa}}\right)$. 
This aligns with the information theoretical lower bound $\Omega\left(\frac{m}{\epsilon^2}\right)$ in \cref{thm:itlb}, disregarding the logarithmic factor. This suggests that MM is near-optimal in data efficiency.}

To prove \cref{thm:ubmm}, we design an algorithm that computes the relative weights between the pairs of the hyperedges along $e_1\leftrightarrow\dots\leftrightarrow e_{\ell}$ paths. By normalization, we obtain an estimation of the hyperedge weights and further a recovered hypergraph from the relative weights. We show that when the dataset $D$ is sufficiently large, the model $\cM$ can learn all the relative weights well enough and therefore the reconstructed hypergraph is a good approximation for the relational hypergraph $\cH_0$ (up to some bijection).

\cref{thm:ubmm} reveals that the data efficiency to achieve relational learning is predominantly influenced by three factors: the number of hyperedges $m$, the range ratio of the weight function $\kappa$, and the upper bound of the MM path lengths $L$. The number of hyperedges $m$ and the range ratio of the weight function $\kappa$ characterize the complexity of the world relational hypergraph, i.e., the hypergraph with more hyperedges and a larger range ratio requires more samples to be recovered by MM. 
The MM path length bound $L$ reflects the connectivity under the masking strategy $\pi$, influencing how MM learns the relative weights between hyperedges.
Efficient recovery of the relational hypergraph is contingent on a small $L$, indicating well-connected hyperedges; a large $L$ suggests inefficiency in recovery.  This aligns with empirical observations that effective MM performance requires masking a sufficient proportion of each sample \citep{he2022masked, wettig2023should}. 

\begin{figure*}[th]\label{fig:srl}
    \centering
    \subfigure[Different numbers of edges.]{
        \includegraphics[width=0.31\linewidth]{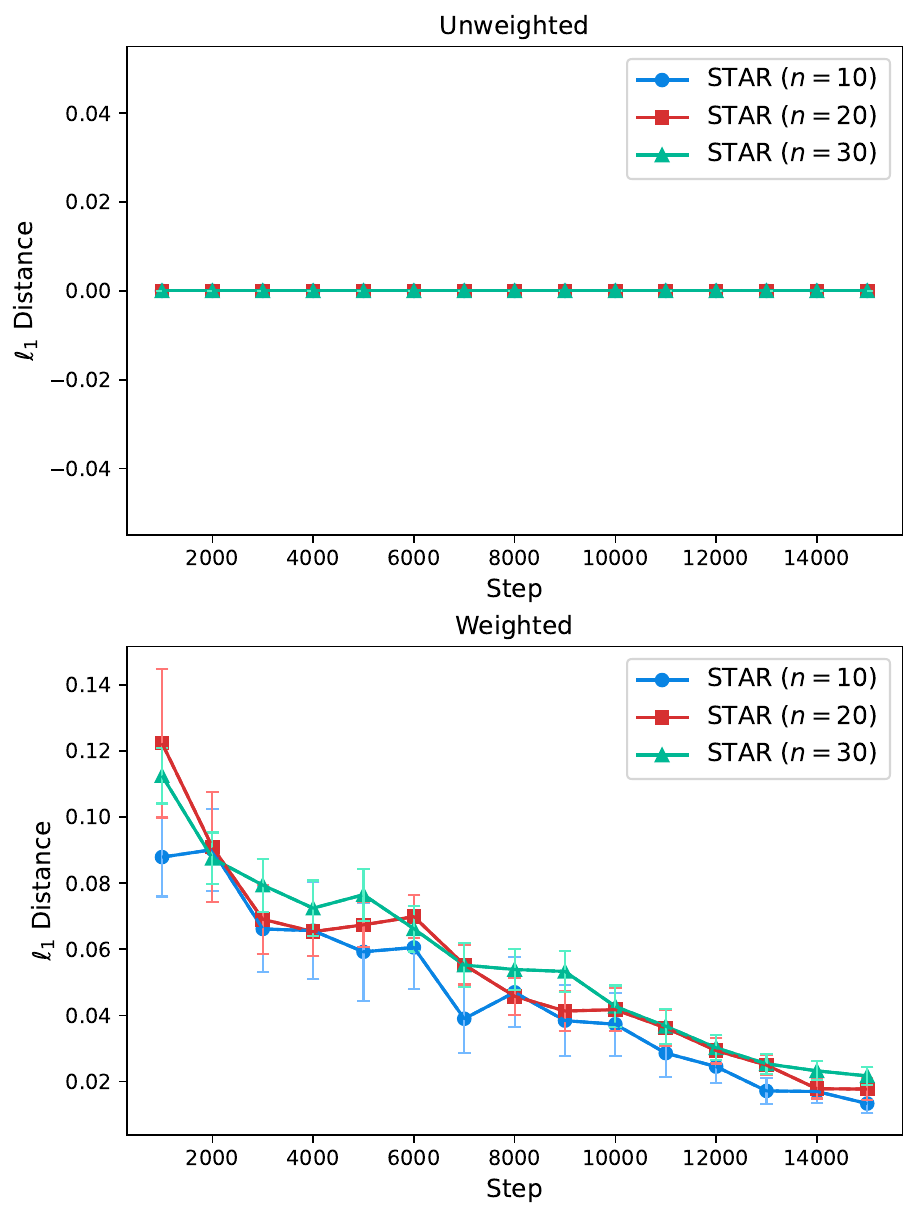}
    }
    \subfigure[Different range ratios.]{
        \includegraphics[width=0.31\linewidth]{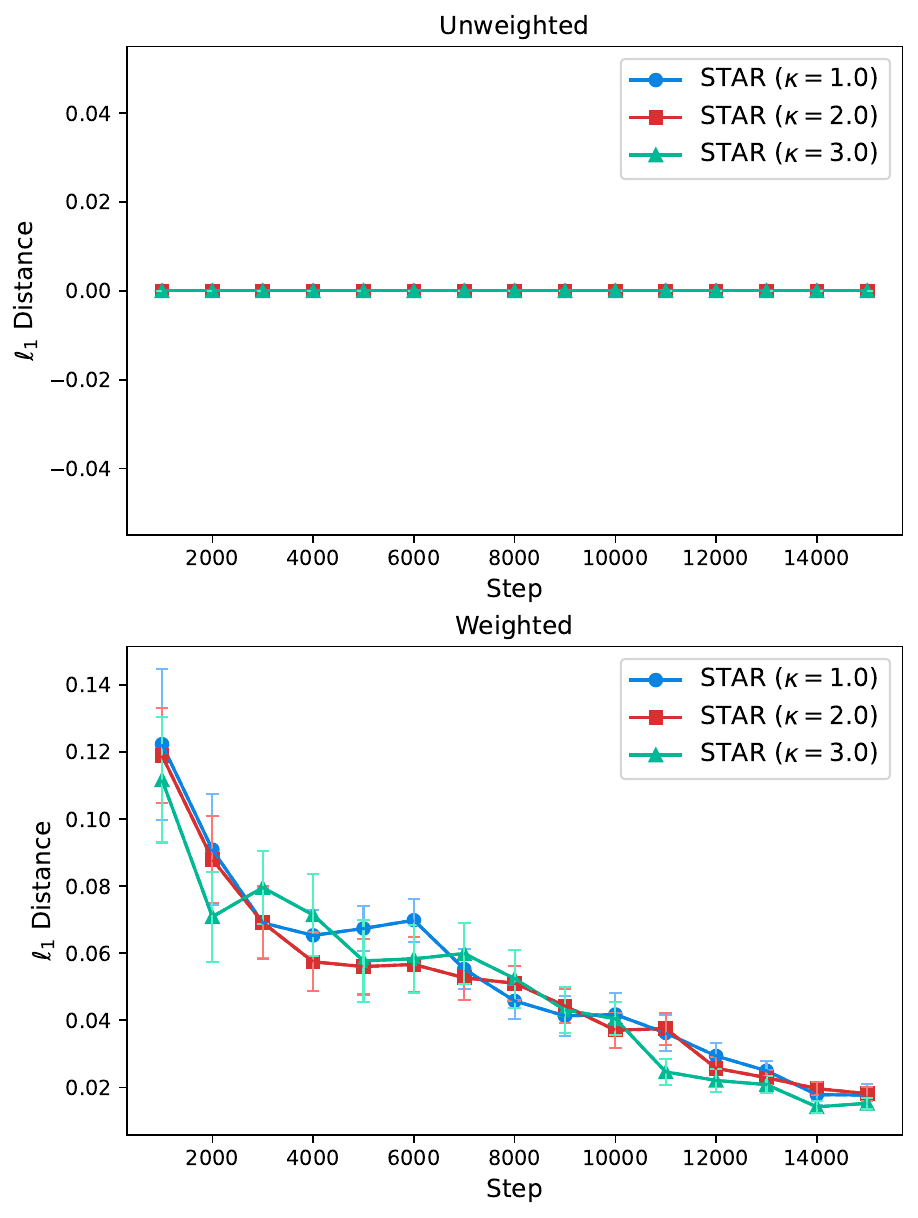}
    }
    \subfigure[Different MM path lengths.]{
        \includegraphics[width=0.31\linewidth]{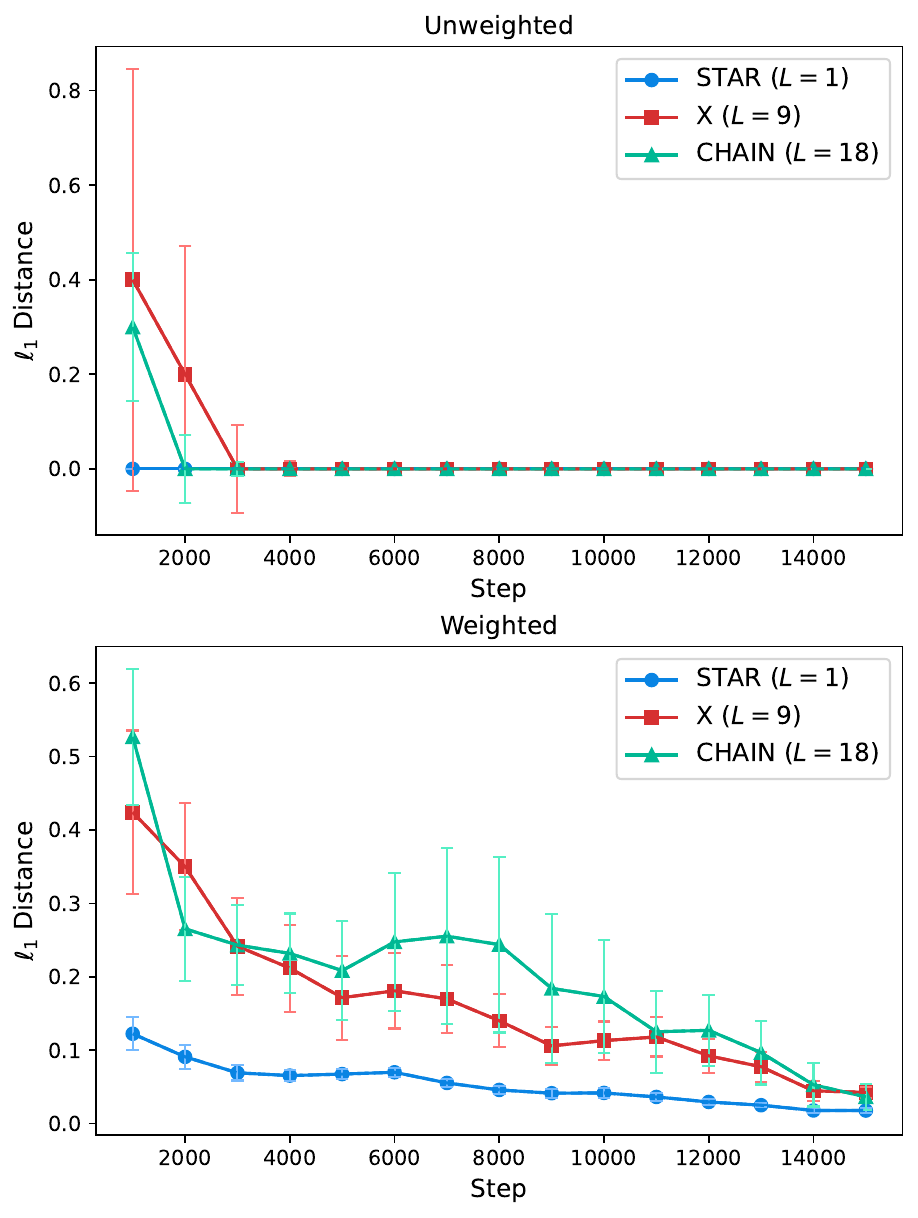}
    }
    \caption{Evaluation results of synthetic relational learning. (a) STAR graphs with different numbers of edges ($m=n-1$). (b) STAR graphs with different range ratios. (c) Graphs with different MM path lengths. For each, the experiments are repeated for $5$ times and the evaluation results are averaged over the $5$ trials.}
    \vskip -0.3in
\end{figure*}

\vspace{-0.2in}
\section{Main Results for Entity Alignment}\label{sec:entity-alignment}
We further extend our framework to encompass entity alignment within the realm of multimodal learning. In this context, the relational models associated with different modalities are interpreted as distinct representations or ``images" of the relational model of the world, each shaped by its unique perception mapping. Although our focus here is on two modalities for illustrative purposes, the principles and methodologies we discuss are readily generalizable to scenarios involving a greater number of modalities.

Concretely, the relational hypergraph in modality $i$ is mapped from $\cH_0$ by the perception $\phi_i$, i.e., $\cH_i = \phi_i (\cH_0)$ for $i=1,2$. Entity alignment is to find a bijection $\phi\in\Bij{\cV_1}{\cV_2}$ such that $\phi(\cH_1)=\cH_2$. 
The data supporting entity alignment consists of three parts: $D_1$, $D_2$, and $D_2$. Here, $D_1$ and $D_2$ represent data from the two individual modalities, while $D_{12}$ comprises labeled pairs that denote corresponding relationships across the modalities. For example, in aligning entities between visual and linguistic modalities, the data includes images, text, and labeled pairs that link images with their textual descriptions.



Assuming the data from each modality are sufficient, we can recover the relational hypergraphs $\cH_1$ and $\cH_2$. 
Entity alignment is achieved by solving the optimization problem:
\begin{equation}\label{eq:alignment}
\phi^* = \argmin_{\phi \in \Bij{\cV_1}{\cV_2}}d(\phi(\cH_1), \cH_2)
\end{equation}
\vspace{-0.02in}
\yang{Practically, labeled pairs are typically necessary to address the computational difficulty of the graph isomorphism problem in \eqref{eq:alignment}, as no polynomial-time solution has been found to date \citep{babai2016graph, neuen2018exponential}. Labeled pairs are external information that pinpoints partial correspondences between the entities of different modalities, potentially reducing the computational complexity. For example, the labeled pairs can reduce dimensions of Weisfeiler-Lehman methods required \citep{cai1992optimal} or prune search trees in individualization-refinement algorithms \citep{mckay2014practical} (See \cref{appendix:entity-alignment} for further illustration).
}

When the underlying hypergraph structure has no automorphism, it is possible to align the entities without estimating the weighted relational hypergraph in each domain. For instance, we can first estimate the underlying unweighted hypergraphs and then align the entities by solving the graph isomorphism problem for these unweighted hypergraphs. This approach can enhance relational learning in multimodal models, as the fusion of data from different modalities can complement and augment the information within each modality. \cref{prop:multimodal-learning} describes the information gain brought by the fusion of two modalities.

\begin{proposition}\label{prop:multimodal-learning}
    Suppose that $D_i$ is the dataset in modality $i$ for $i=1,2$. Assume that the entity alignment $\phi^*\in\Bij{\cV_1}{\cV_2}$ has been estimated in prior. Suppose that $\cM$ is a multimodal pre-trained model by MM on the datasets $D_1$ and $D_2$. Then $\cM$ achieves $\epsilon$-approximate relational learning with probability at least $1-\delta$ if 
    \begin{equation*}
        \begin{gathered}
            K_1 + K_2  \ge \frac{2^{14} m^2 \kappa^2 L^2}{c_\pi^2 \epsilon^2}\log \frac{6 m C_\pi}{\delta},\\
            N_1 + N_2 \ge \max\left\{\frac{2 m \kappa}{c_\pi}\log \frac{3 m C_\pi}{\delta}, \frac{8 m}{\epsilon^2}\log\frac{6 m}{\delta}\right\}.
        \end{gathered}
    \end{equation*}
\end{proposition}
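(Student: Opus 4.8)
The plan is to reduce \cref{prop:multimodal-learning} to the single-modality guarantee of \cref{thm:ubmm} by exploiting the fact that, once the alignment $\phi^*$ is known, the data from the two modalities are samples of one and the same underlying relational hypergraph. Since $\cH_1=\phi_1(\cH_0)$ and $\cH_2=\phi_2(\cH_0)$, and since $\phi^*\in\Bij{\cV_1}{\cV_2}$ satisfies $\phi^*(\cH_1)=\cH_2$ (so that $\phi^*=\phi_2\circ\phi_1^{-1}$ on nodes), applying $(\phi^*)^{-1}$ to every hyperedge and mask in $D_2$ relabels modality-$2$ data into the coordinate system of $\cH_1$. After this relabeling the hyperedges and their weights coincide with those of $\cH_1$, so the relabeled samples are distributionally indistinguishable from modality-$1$ draws from $\cH_1\cong\cH_0$. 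I would therefore form the pooled dataset $\tilde D = D_1 \cup (\phi^*)^{-1}(D_2)$ and argue that it is a valid MM dataset over the single hypergraph $\cH_1$ in the sense of \cref{abs:mm} and \cref{abs:mmdg}.

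First I would verify that $\tilde D$ meets the hypotheses of \cref{abs:mmdg}: the outer hyperedge samples number $N_1+N_2$, since the two modalities contribute $N_1$ and $N_2$ i.i.d.\ draws from the same weight distribution induced by $w_0$; and the masked hyperedges informing each world hyperedge are now supplied by both masking processes, so the mask budget available to the estimator for a given relative weight aggregates across $D_1$ and $D_2$ to $K_1+K_2$. Next I would run the estimator from the proof of \cref{thm:ubmm} on $\tilde D$: it recovers the relative weights $\hat w(e_i)/\hat w(e_{i+1})$ along the $e_1\leftrightarrow\cdots\leftrightarrow e_\ell$ paths guaranteed by \cref{as:bpath}, normalizes them under \cref{abs:rel-mw}, and assembles a hypergraph $\cH_\cM$. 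Because the coverage and relative-weight concentration bounds in that proof depend on the data only through the total number of outer samples and the total per-edge mask count, substituting $N_1+N_2$ for $N$ and $K_1+K_2$ for $K$ reproduces exactly the stated thresholds and yields $d(\phi(\cH_\cM),\cH_0)\le\epsilon$ with probability at least $1-\delta$.

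The main obstacle is the bookkeeping that justifies the additive combination $N_1+N_2$ and $K_1+K_2$, rather than requiring each modality separately to meet the single-modality thresholds. This requires checking that the two masking strategies are compatible, namely that the pooled masks still satisfy \cref{as:bms} (bounded support $C_\pi$ and minimum probability $c_\pi$) and that the connectivity paths of \cref{as:bpath} survive under the union of the two strategies, and that the concentration inequalities for each relative-weight ratio can aggregate masked samples drawn from two distinct sources without inflating the failure probability beyond the union-bound budget already charged in \cref{thm:ubmm}. A secondary subtlety is that the proposition takes $\phi^*$ as exactly known; I would make explicit that the relabeling $(\phi^*)^{-1}$ is then an exact isomorphism, so that no additional error from alignment enters the bound, and note that the guarantee holds precisely because $\phi^*$ is assumed correct.
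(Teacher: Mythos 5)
Your proposal is correct and matches the paper's own argument: the paper likewise pools the two datasets via the alignment (it forms $D'=\phi^*(D_1)\cup D_2$ rather than your $D_1\cup(\phi^*)^{-1}(D_2)$, a purely cosmetic difference of direction) and then invokes \cref{thm:ubmm} with $N'=N_1+N_2$, $K'=K_1+K_2$. Your additional bookkeeping concerns about mask compatibility and additive sample counts are reasonable, but the paper's proof makes exactly the same aggregation step without further justification, so you are not missing anything relative to it.
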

\begin{figure*}[th]
    \centering
    \subfigure[Ground Truth]{
        \includegraphics[width=0.23\linewidth]{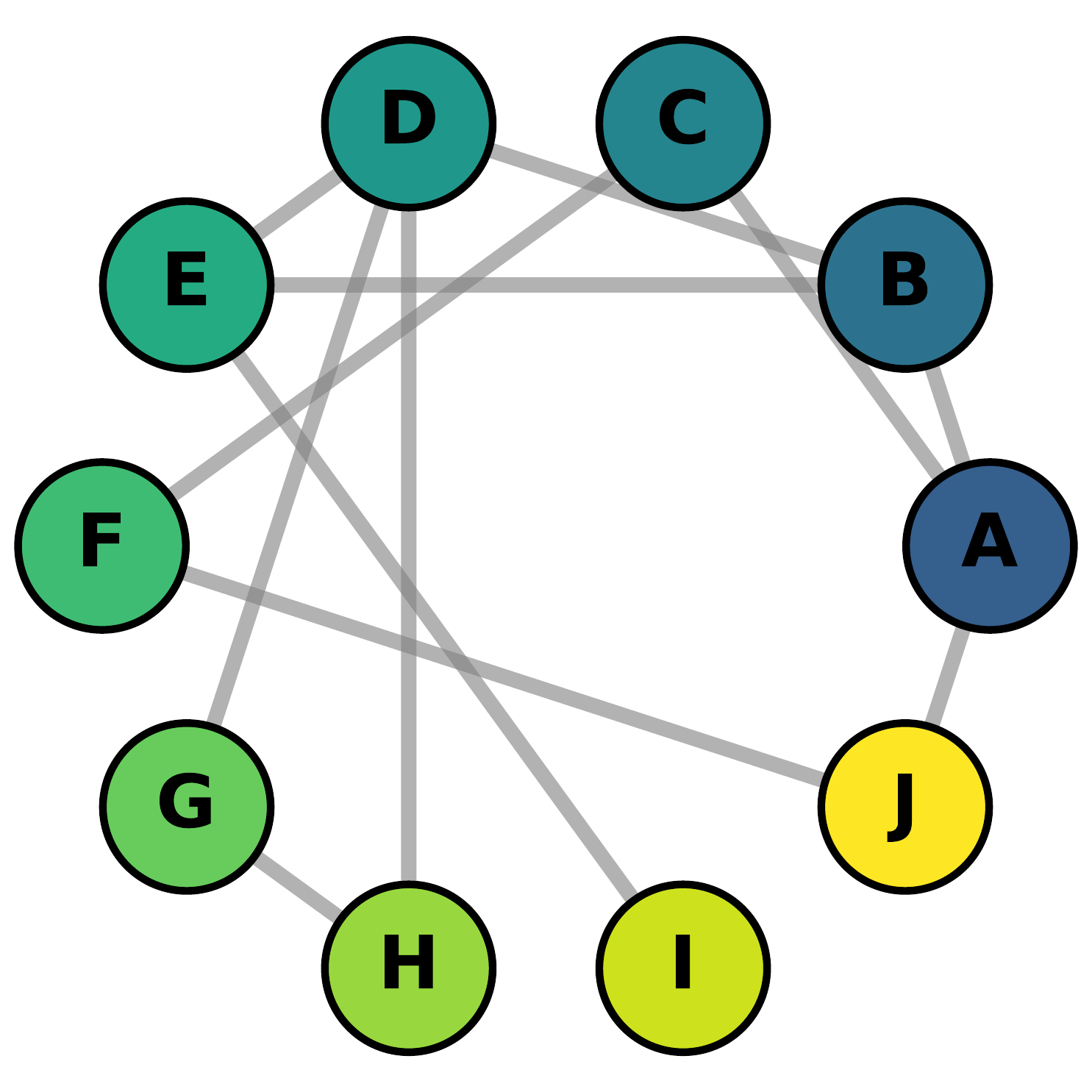}
    }
    \subfigure[LLAMA-2-70B]{
        \includegraphics[width=0.23\linewidth]{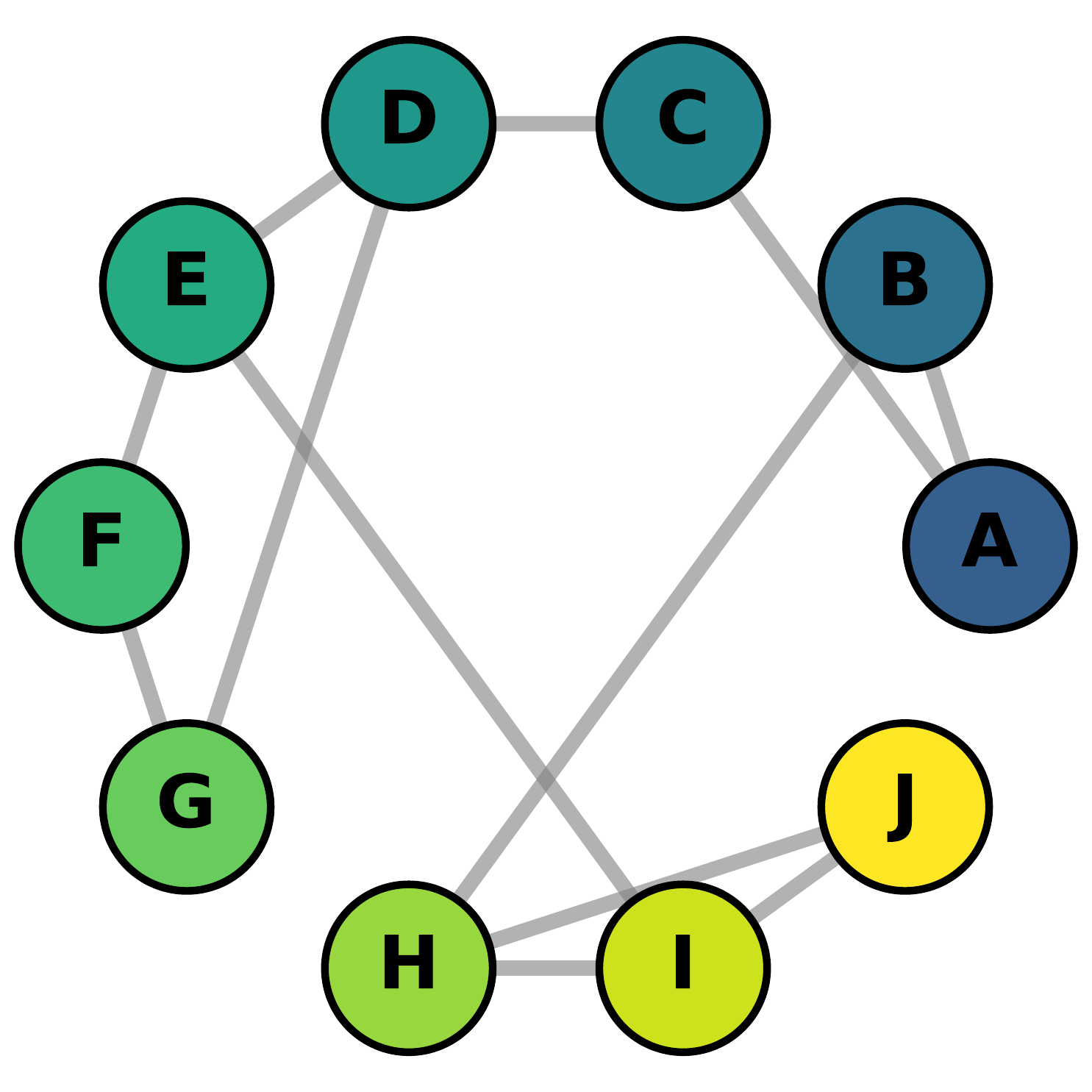}
    }
    \subfigure[GPT-3.5]{
        \includegraphics[width=0.23\linewidth]{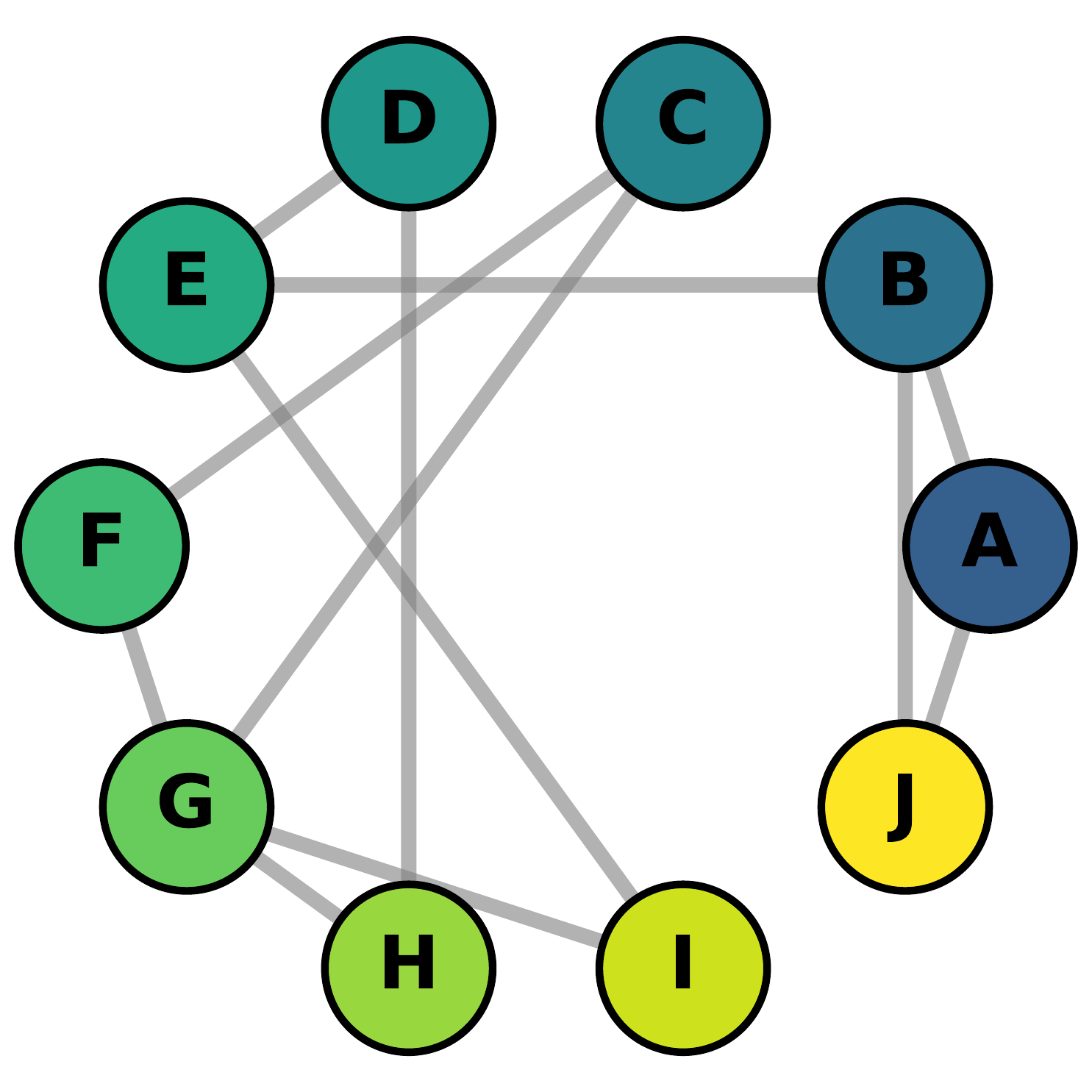}
    }
    \subfigure[GPT-4]{
        \includegraphics[width=0.23\linewidth]{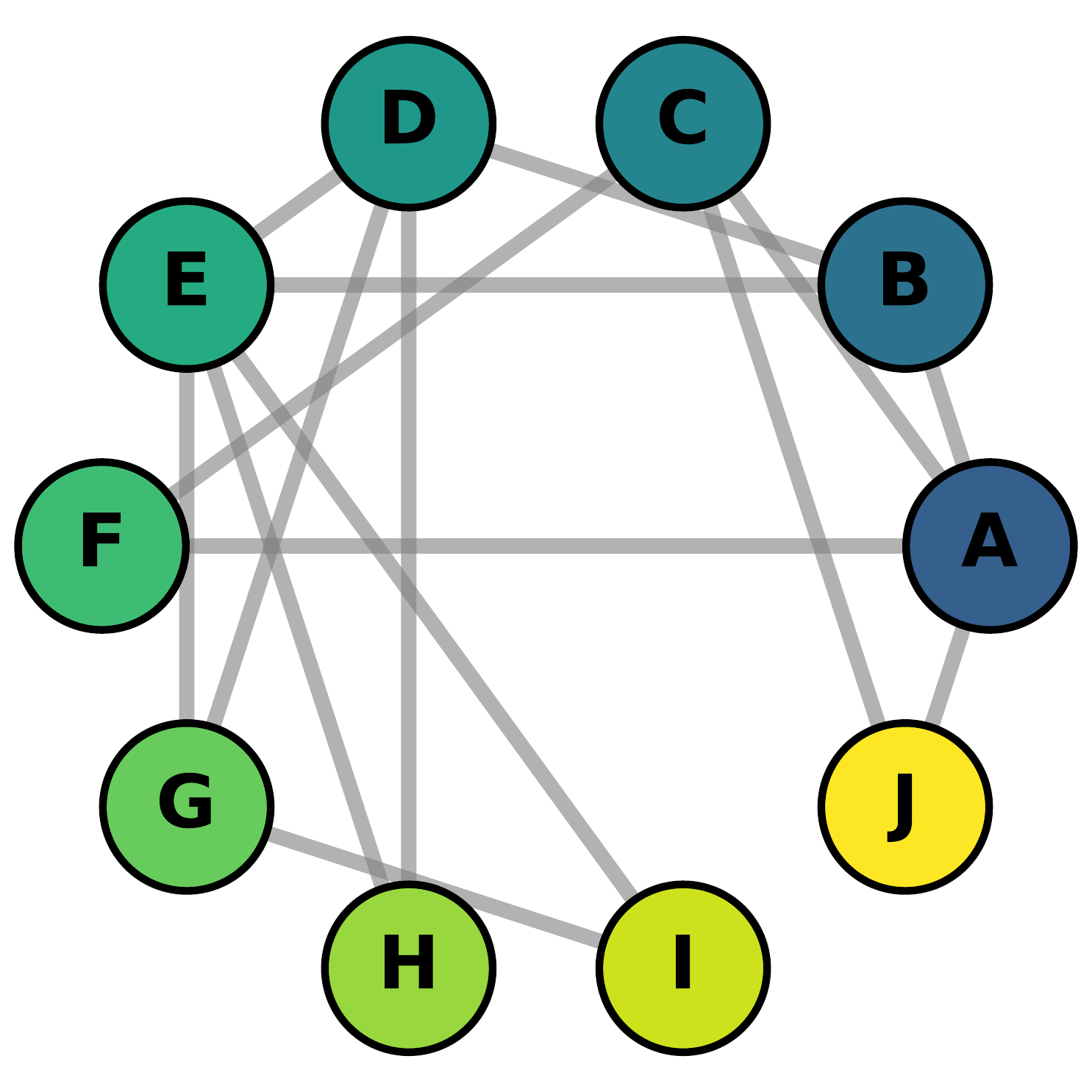}
    }
    \caption{Evaluation results of different LLMs for the real-world relational subgraph generated from the source word ``table". We use different letters to represent different entities (see \cref{subappendix:rwre} for their correspondences). The graphs (from left to right) are the ground truth (extracted from ConceptNet), evaluation results of LLAMA-2-70B, GPT-3.5, and GPT-4, respectively. }
    
    \label{fig:rwre}
\end{figure*}

\begin{table*}[th]
\caption{Summary of the comparison results. The subgraphs are generated from different source entities with $k=2$ and $d=3$. The corresponding evaluated graphs are generated from the outputs of different LLMs. The dissimilarity between each pair of the extracted subgraph $\cH$ and the estimated graph $\cH'$ are measured by their normalized $L_1$ distance, i.e., $\frac{\|\cH-\cH'\|_1}{\|\cH \|_1}$ where we slightly abuse the notations $\cH$ and $\cH'$ to denote their adjacent matrices.}
\label{tab:exp-res-k2d3}
\begin{center}
\begin{sc}
\begin{tabular}{lcccccccccc}
\toprule
 & cake & dog & fly & human & jacket & orange & paper & sea & table & zoo \\
\midrule
Llama-2-70B & $1.00$ & $\bm{0.67}$ & $1.25$ & $\bm{1.00}$ & $1.33$ & $1.33$ & $\bm{0.75}$ & $\bm{0.83}$ & $1.25$ & $1.67$ \\
GPT-3.5     & $\bm{0.67}$ & $1.00$ & $\bm{1.00}$ & $1.25$ & $\bm{1.00}$ & $1.33$ & $\bm{0.75}$ & $\bm{0.83}$ & $1.00$ & $\bm{1.00}$ \\
GPT-4       & $\bm{0.67}$ & $\bm{0.67}$ & $\bm{1.00}$ & $1.50$ & $1.33$ & $\bm{1.00}$ & $\bm{0.75}$ & $\bm{0.83}$ & $\bm{0.75}$ & $1.33$ \\
\bottomrule
\end{tabular}
\end{sc}
\end{center}

\end{table*}
\vspace{-0.2in}

\section{Experiments}\label{sec:experiment}

We conduct two experiments to show empirically that relational learning in PTMs could be seen as relational hypergraph recovery. We consider two settings: synthetic relational learning and real-world relation evaluation.
\vspace{-0.02in}
\subsection{Synthetic Relational Learning}\label{subsec:experiments-srl}
\vspace{-0.03in}
In synthetic relational learning, we train PTMs with text consisting of synthetic entities, whose underlying data distribution corresponds to a graph. We show that PTMs can learn the relations between these synthetic entities. To generate data for synthetic relational learning, we first construct a graph, whose nodes are entities (represented by tokens) and edges are relations. We attach edges with random weights and normalize the weights. To generate a training dataset, we sample edges i.i.d. according to the distribution corresponding to the normalized edge weights. We consider masked language modeling \citep{kenton2019bert}. 
For evaluation, we query the PTM with each synthetic entity to retrieve information about its related entities and the weights of the relations. We reconstruct a graph with the query results and compare the reconstructed graph with the true underlying graph. We conduct experiments for different graphs, with different numbers of edges, range ratios, and MM path lengths, corresponding to the factors that influence the sample complexity of entity relational learning. 
More details of the synthetic relational learning experiments can be found in \cref{subappdendix:srl}. 
The evaluation results are shown in \cref{fig:srl}. 
\yang{Our results show that the reconstruction errors of both the unweighted sketch graph and the weighted graph decrease as the training goes on. This the PTMs learn the synthetic relations gradually via MM pre-training. Additionally, the results suggest that larger numbers of edges and larger MM path lengths lead to more steps to converge, which coincides with our theoretical analysis in \cref{thm:ubmm}. The effect of the range ratios on the convergence of relational learning is not obvious in our experiments. This may suggest a gap between the theoretical upper bound and the actual convergence rate in the experiments in terms of the range ratio.}
\vspace{-0.02in}
\subsection{Real-World Relation Evaluation}\label{subsec:experiments-rwre}
\vspace{-0.03in}
In real-world relation evaluation, we test whether LLMs such as ChatGPT and GPT-4 learn entities and their relations that align with the real world. We use subgraphs extracted from ConceptNet \citep{speer2017conceptnet} as baselines of the real-world relations graphs. 
For evaluation, we input the chosen entities to LLMs and ask them to choose top-related ones for each entity. We then construct a graph whose nodes are the entities and edges are those top-related pairs. 
We compare the subgraph extracted from ConceptNet and the graph evaluated from LLMs. If an LLM learns real-world relations, we expect it to produce a similar graph as the one extracted from ConceptNet. \cref{tab:exp-res-k2d3} summarizes some comparison results of the extracted subgraphs generated by different source entities and the corresponding evaluated graphs. In \cref{fig:rwre}, we visualize the result of the source entity ``table". More results are presented in \cref{subappendix:rwre}. We find that GPT-4 achieves the best overall performance among the evaluated LLMs and GPT-3.5 performs slightly better than LLAMA-2-70B. The results suggest different LLMs have different degrees of relational learning and more powerful models seem to understand entity relations better in the sense of relational subgraph reconstruction. Note that we only consider unweighted graphs here because it is difficult to evaluate the relation weights from LLMs accurately. Our results illustrate that the LLMs do organize entities similarly to real-world entities. 

\section{Conclusion and Outlook}\label{sec:conclusion}

Abstracting the entity relations in the world as a hypergraph, we formalize relational learning in pre-trained models as recovery of the world relational hypergraph. Under the formulation, we show the relational hypergraph is identifiable provided sufficient data at the population level. We also study the sample efficiency and extend the framework to entity alignment in multimodal learning.


While only extending in multimodal learning in this paper, our framework is a general analysis tool.    Understanding the capabilities and generalization
potential of the PTM is crucial in our field. We would say that PTMs, such as LLMs,  often responding to complex relationships between objects, urgently require new mathematical foundations to have a deeper study.  This paper paves a new way to study PTM from a unique perspective by capturing the overlooked data information using a hypergraph. Our framework can be potentially used under various scenarios and impacts on application fields. For example, for data and computational efficiency,  it is interesting to design more efficient learning algorithms or architectures, such as for multimodal learning. More broadly, for safety, traditional works about adversarial attack and defense theories often focus on several classes that need to be protected. Our framework is not restricted to classification problems and may impose a potential on the entity concept and even human value level. Further, based on the hypergraph, it is promising to understand the reasoning and causality capabilities of PTMs.

\section*{Acknowledgements}
C. Fang and Z. Lin were supported by National Key R\&D Program of China (2022ZD0160300).
Z. Lin was additionally supported by the NSF China (No. 62276004) and Qualcomm.
\section*{Impact Statement}
This paper presents work whose goal is to advance the field of Machine Learning. There are many potential societal consequences of our work, none which we feel must be specifically highlighted here.

\bibliography{ms}
\bibliographystyle{icml2024}

\newpage
\appendix
\onecolumn
\section{Proof}\label{appendix:proof}

\subsection{Proof of Theorem~\ref{thm:identifiability}}
We can consider the combined algorithm $\cA=\cA_{\text{test}}\circ\cA_{\text{pre}}$ directly.
We design an algorithm (Algorithm~\ref{alg:estimate-from-dataset}) that recovers hypergraphs from dataset and show the reconstructed hypergraph converges to $\cH_0$ up to some bijection almost surely by the law of large numbers. 
Denote the hypergraph recovered from $\cD_N$ by $\cH_N$.
Define random variables $X_N = d(\phi_0^{-1}(\cH_N), \cH_0)$ for $N=1,2,\dots$.
It remains to show $X_N \overset{a.s.}{\to} 0$.

For any $\epsilon > 0$, define
\begin{equation}
    E_N := \{\omega\in\Omega:X_N(\omega) > \epsilon\},
\end{equation}
where $\Omega$ is the sample space.

Let 
\begin{equation}
    Y_{e,t} = \begin{cases}
        1 & x_t = \phi_0(e),\\
        0 & \text{otherwise}.
    \end{cases}
\end{equation}
Then we have 
\begin{equation}
    \begin{aligned}
        P(E_N) &= P\left(\sum_{e\in\cE_0}\left|\frac{1}{N}\sum_{t=1}^N Y_{e,t} - w_0(e)\right| > \epsilon\right)\\
               &\leq P\left(\bigcup_{e\in\cE_0}\left|\frac{1}{N}\sum_{t=1}^N Y_{e,t} - w_0(e)\right| >  \frac{\epsilon}{m}\right)\\
               &\overset{(a)}{\leq} \sum_{e\in\cE_0} P\left(\left|\frac{1}{N}\sum_{t=1}^N Y_{e,t} - w_0(e)\right| > \frac{\epsilon}{m}\right)\\
               &\overset{(b)}{\leq} 2 m \exp\left(-\frac{2N\epsilon^2}{m^2}\right),
    \end{aligned}
\end{equation}
where the inequality (a) is due to union bound and the inequality (b) is due to Hoeffding's Inequality.

Notice that 
\begin{equation}
    \sum_{N=1}^{\infty} P(E_N) \leq \frac{2m\exp\left(-2\epsilon^2/m^2\right)}{1-\exp\left(-2\epsilon^2/m^2\right)} < \infty.
\end{equation}
By the first Borel-Cantelli lemma \citep[Chapter 2]{durrett2019probability}, we have
\begin{equation}
    P\left(\limsup_{N\to\infty}E_N\right) = 0.
\end{equation}
Equivalently, we have 
\begin{equation}\label{eq:as-g-epsilon}
    P\left(\lim_{N\to\infty} X_N > \epsilon\right) = 0.
\end{equation}
Since \eqref{eq:as-g-epsilon} holds for any  $\epsilon > 0$, we have $P(\lim_{n\to\infty}X_N=0)=1$, i.e., $X_N \overset{a.s.}{\to} 0$.

\begin{algorithm}[tbhp]
   \caption{Hypergraph Estimation from Datasets}
   \label{alg:estimate-from-dataset}
   \begin{algorithmic}
       \STATE {\bfseries Input:} a dataset $D$, a candidate hyperedge set $\cE_0$, and a masking strategy $\pi$.
       \STATE
       \STATE Initialize $\cE=\{\}$, $\cV={}$, and $\tilde{w}=0$.
       \FOR{$x\in D$}
            \STATE $\cE = \cE \cup \{x\}$
            \STATE $\cV = \cV \cup x$
            \STATE $\tilde{w}(x)=\tilde{w}(x)+1$
       \ENDFOR
       \STATE Compute $W=\sum_{e\in\cE}\tilde{w}(e)$.
       \STATE $w=\tilde{w}/{W}$.
       \STATE
       \STATE Return $\cH=(\cV,\cE,w)$.
    \end{algorithmic}
\end{algorithm}

\subsection{Proof of Theorem~\ref{thm:itlb}}
We prove the information theoretical lower bound by constructing a reduction from finite distribution estimation under $\ell_1$ distance to concept understanding.

For any unknown finite distribution $P=(p_1,\dots,p_m)$ on $\{1,\dots,m\}$, we construct a world model $\cH_0 = (\cV_0, \cE_0, w_0)$ as follows:
\begin{enumerate}
    \item $\cV_0 = \{v_1, \dots, v_{m+1}\}$;
    \item $\cE_0 = \{\{v_1, v_2\}, \dots, \{v_m, v_{m+1}\}\}$;
    \item $w_0(\{v_i, v_{i+1}\})=p_i$.
\end{enumerate}
For a dataset $D'={x_k}_{k=1}^N$ sampled from $P$. convert it to a dataset $D=\{\{v_{x_k}, v_{x_k + 1}\}\}_{k=1}^N$ for hypergraph recovery. For an algorithm $\cA$, apply it to the dataset $D$ and we obtain an estimation $\cH=\cA(D)=(\cV,\cE,w)$ for for the world model $\cH_0$. We then compute an estimation $P'$ for the finite distribution $P$, where $P'=(p_1',\dots, p_m')$ and 
\begin{equation}
    p_i' = w(\{v_i, v_{i+1}\}).
\end{equation}

Denote the minimax risk of estimating a finite distribution on $\{1,\dots,m\}$ with a dataset of size $N$ as $R(m,N)$.
Denote the minimax risk of estimating a hypergraph $\cH_0$ of $m$ hyperedges with a dataset of size $N$ as $R_{\cH}(m,N)$. Then we have
\begin{equation}\label{eq:reduction}
    \begin{aligned}
        R(m,N)
        \leq & \inf_{\cA} \sup_{P\in\cP_m} \sum_{i=1}^m\|p'_i - p_i\| \\
        = & \inf_{\cA} \sup_{\cH_0\in\cH_m} \sum_{e\in\cE_0}\|w(e) - w_0(e)\|\\
        = & \inf_{\cA} \sup_{\cH_0\in\cH_m} d(\cH, \cH_0)\\
        = & R_{\cH}(m,N), 
    \end{aligned}
\end{equation}
where the first inequality is due to the definition of the minimax risk $R(m,N)$.

According to Theorem~2 in \citet{han2015minimax}, we have
\begin{equation}\label{eq:minimax-l1-lb}
    R(m,N) \ge \max_{0<\zeta\leq 1} F(\zeta),
\end{equation}
where 
\begin{equation}
    \begin{aligned}
        F(\zeta) = \frac{1}{8}\sqrt{\frac{e m}{((1+\zeta)N}}\mathds{1}\left(\frac{(1+\zeta)N}{m}>\frac{e}{16}\right) \\
             + \exp\left(-\frac{2(1+\zeta)N}{m}\right)\mathds{1}\left(\frac{(1+\zeta)N}{m}\leq\frac{e}{16}\right) \\
             - \exp\left(-\frac{\zeta^2 N}{24}\right) - 12\exp\left(-\frac{\zeta^2 m}{32\ln^2 m}\right).
    \end{aligned}
\end{equation}

Combining \eqref{eq:reduction} and \eqref{eq:minimax-l1-lb} and letting $\zeta=1$, we have
\begin{equation}
    R_{\cH}(m,N) 
    \ge F(1) 
    \ge \frac{1}{8}\sqrt{\frac{e m}{ 2 N}} - \exp\left(-\frac{N}{24}\right) - 12\exp\left(-\frac{ m}{32\ln^2 m}\right)\ge \frac{1}{16}\sqrt{\frac{m}{N}}.
\end{equation}

\subsection{Proof of Theorem~\ref{thm:ubmm}}

\begin{lemma}\label{lm:rr}
    Suppose that $P_0$ is a finite distribution on $[m_0]=\{1,\dots,m_0\}$ whose range ratio is $\kappa_0$. Then 
    \begin{equation}
        \begin{aligned}
            \min_{i\in [m_0]} P_0(i) &\ge \frac{1}{m_0 \kappa_0}\\
            \max_{i\in [m_0]} P_0(i) &\leq \frac{\kappa_0}{m_0 + \kappa_0 - 1}
        \end{aligned}
    \end{equation}
\end{lemma}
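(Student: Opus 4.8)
The plan is to prove the two bounds on the extremal probabilities of a finite distribution $P_0$ on $[m_0]$ given only its range ratio $\kappa_0$, by purely elementary arguments that exploit the normalization constraint $\sum_{i\in[m_0]} P_0(i) = 1$ together with the definitions of the minimum and maximum.

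First I would set up notation. Let $p_{\min}=\min_{i\in[m_0]} P_0(i)$ and $p_{\max}=\max_{i\in[m_0]} P_0(i)$, so that by definition $\kappa_0 = p_{\max}/p_{\min}$ and hence $p_{\max}=\kappa_0 p_{\min}$. The key structural fact is that every probability lies in the interval $[p_{\min}, p_{\max}]=[p_{\min},\kappa_0 p_{\min}]$.

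For the lower bound on $p_{\min}$, I would bound the total mass from above: since each of the $m_0$ probabilities is at most $p_{\max}=\kappa_0 p_{\min}$, we have
\begin{equation*}
    1 = \sum_{i\in[m_0]} P_0(i) \leq m_0 \cdot p_{\max} = m_0 \kappa_0 p_{\min},
\end{equation*}
which rearranges to $p_{\min} \geq \frac{1}{m_0\kappa_0}$, the first claimed inequality. For the upper bound on $p_{\max}$, I would instead fix the largest atom and bound the mass of the remaining $m_0 - 1$ atoms from below, each being at least $p_{\min}=p_{\max}/\kappa_0$:
\begin{equation*}
    1 = \sum_{i\in[m_0]} P_0(i) \geq p_{\max} + (m_0 - 1)\,p_{\min} = p_{\max} + (m_0-1)\frac{p_{\max}}{\kappa_0} = p_{\max}\cdot\frac{m_0 + \kappa_0 - 1}{\kappa_0}.
\end{equation*}
Solving for $p_{\max}$ gives $p_{\max}\leq\frac{\kappa_0}{m_0+\kappa_0-1}$, the second claimed inequality.

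I do not anticipate a genuine obstacle here; the lemma is an elementary consequence of normalization and the range-ratio definition. The only point requiring a little care is handling the index that achieves the maximum separately from the rest in the second bound (so that the summand $p_{\max}$ is not double counted among the $m_0 - 1$ remaining terms that are bounded below by $p_{\min}$), and ensuring the denominator $m_0+\kappa_0-1$ is positive, which holds automatically since $m_0\geq 1$ and $\kappa_0\geq 1$. One could equally phrase both bounds as extremal problems — maximizing or minimizing a single coordinate subject to the normalization and the ratio constraints — but the direct summation argument is cleaner and avoids any optimization machinery.
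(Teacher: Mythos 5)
Your proof is correct and follows essentially the same route as the paper's: both exploit the normalization $\sum_i P_0(i)=1$ together with the substitution $p_{\max}=\kappa_0 p_{\min}$ to extract the two bounds. The only cosmetic difference is in the lower bound, where you bound all $m_0$ atoms by $p_{\max}$ while the paper keeps the minimal atom separate (giving the marginally sharper intermediate $\tfrac{1}{1+(m_0-1)\kappa_0}$ before relaxing to $\tfrac{1}{m_0\kappa_0}$); both arguments are equally valid and land on the stated inequalities.
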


\begin{proof}[Proof of Lemma~\ref{lm:rr}]
    Let $B_1 := \min_{i\in [m_0]} P_0(i)$ and $B_2 := \max_{i\in [m_0]} P_0(i)$. By the definitions, we have
    \begin{equation*}
        \begin{gathered}
            B_1 + (m - 1) B_2 \ge 1\\
            B_2 + (m - 1) B_1 \leq 1.
        \end{gathered}
    \end{equation*}
    By the definition of range ratio, i.e. $\kappa_0\frac{B_2}{B_1}$, we further have
    \begin{equation*}
        \begin{gathered}
            B_1 + (m_0 - 1) \kappa_0 B_1 \ge 1\\
            B_2 + \frac{m_0 - 1}{\kappa_0} B_2 \leq 1.
        \end{gathered}
    \end{equation*}
    This implies
    \begin{equation*}
        \begin{gathered}
            B_1 \ge \frac{1}{m_0 \kappa_0 + 1 - \kappa_0} \ge \frac{1}{m_0 \kappa_0}\\
            B_2 \leq \frac{\kappa_0}{m_0 + \kappa_0 - 1}.
        \end{gathered}
    \end{equation*}
\end{proof}

\begin{lemma}\label{lm:plug-in-estimation}
 Suppose that $\{X_t\}$ is a sequence of random variables sampled i.i.d. from a categorical distribution $\op{Cat}(K, \bm p)$ where $\bm p = (p_1,\dots,p_K)$. Then we have 
 \begin{equation}
     P\left(\sum_{k=1}^K \left|\frac{1}{T}\sum_{t=1}^T \mathds{1}\left(X_t = k\right)-p_k\right| \leq \epsilon\right) \ge 1 - \delta
 \end{equation}
 if 
 \begin{equation}
     T \ge \frac{2 K}{\epsilon^2}\log\frac{2 K}{\delta}.
 \end{equation}
\end{lemma}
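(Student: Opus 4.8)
The goal is to bound the total variation distance (equivalently, the $\ell_1$ distance) between the empirical distribution $\hat{p}_k = \frac{1}{T}\sum_{t=1}^T \mathds{1}(X_t = k)$ and the true distribution $\bm{p}$, uniformly with high probability, and to translate the required accuracy into a sample-complexity bound on $T$. The plan is to control each coordinate deviation $|\hat{p}_k - p_k|$ separately and then aggregate via a union bound over the $K$ categories, splitting the target error budget $\epsilon$ evenly so that each coordinate is allotted $\epsilon/K$.

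First I would observe that for each fixed $k$, the indicator $\mathds{1}(X_t = k)$ is a Bernoulli random variable, so $T\hat{p}_k = \sum_{t=1}^T \mathds{1}(X_t = k)$ is a sum of $T$ i.i.d. bounded random variables in $[0,1]$ with mean $p_k$. By Hoeffding's inequality, for any coordinate $k$,
\begin{equation*}
P\left(\left|\hat{p}_k - p_k\right| > \frac{\epsilon}{K}\right) \leq 2\exp\left(-\frac{2 T \epsilon^2}{K^2}\right).
\end{equation*}
Next I would union bound over all $K$ coordinates to obtain $P\left(\max_k |\hat{p}_k - p_k| > \epsilon/K\right) \leq 2K\exp(-2T\epsilon^2/K^2)$, and note that on the complementary (good) event we have $\sum_{k=1}^K |\hat{p}_k - p_k| \leq K \cdot (\epsilon/K) = \epsilon$. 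Requiring the failure probability to be at most $\delta$, i.e. $2K\exp(-2T\epsilon^2/K^2) \leq \delta$, and solving for $T$ yields $T \geq \frac{K^2}{2\epsilon^2}\log\frac{2K}{\delta}$.

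The one subtlety I anticipate is reconciling this naive bound with the sharper claimed threshold $T \geq \frac{2K}{\epsilon^2}\log\frac{2K}{\delta}$, which scales linearly in $K$ rather than quadratically. The quadratic factor comes from the crude even split of the error budget; the linear dependence follows from using a Bernstein-type or binomial concentration bound that exploits the variance $p_k(1-p_k)$ of each Bernoulli together with the constraint $\sum_k p_k = 1$, so that the aggregate $\ell_1$ error concentrates at the parametric $\sqrt{K/T}$ rate rather than $K/\sqrt{T}$. Concretely, I would instead bound $\E[\sum_k |\hat{p}_k - p_k|] \leq \sqrt{K/T}$ via Cauchy--Schwarz applied to $\E|\hat{p}_k - p_k| \leq \sqrt{p_k(1-p_k)/T}$, then invoke a bounded-differences (McDiarmid) concentration inequality for the function $f(X_1,\dots,X_T) = \sum_k|\hat{p}_k - p_k|$, whose coordinatewise bounded difference is $O(1/T)$, to show the deviation above its mean is subgaussian with the correct scaling. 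This McDiarmid step, replacing the loose union bound, is where the genuine work lies and is the main obstacle; once it gives concentration at rate $\sqrt{K/T}$ up to logarithmic corrections, setting the bound to $\epsilon$ and solving for $T$ recovers the stated $\frac{2K}{\epsilon^2}\log\frac{2K}{\delta}$ threshold.
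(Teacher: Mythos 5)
Your proposal is correct in substance but takes a genuinely different route from the paper. The paper keeps the coordinate-wise union-bound structure that you rejected, and rescues the linear-in-$K$ rate not via McDiarmid but via a \emph{variance-proportional} split of the error budget: it sets $\epsilon_k = \frac{\sqrt{p_k(1-p_k)}}{S}\epsilon$ with $S=\sum_{k}\sqrt{p_k(1-p_k)}$, applies a Chernoff/Bernstein-type tail bound $P(|\hat p_k - p_k|\ge\epsilon_k)\le 2\exp\bigl(-T\epsilon_k^2/(2p_k(1-p_k))\bigr)$ so that every term in the union bound collapses to the same quantity $2\exp\bigl(-T\epsilon^2/(2S^2)\bigr)$, and then bounds $S\le\sqrt{K-1}<\sqrt{K}$ by concavity of $x\mapsto\sqrt{x(1-x)}$ and Jensen. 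Note that this last inequality is exactly the same ingredient as your Cauchy--Schwarz step $\sum_k\sqrt{p_k(1-p_k)/T}\le\sqrt{K/T}$; the two proofs diverge only in how they aggregate. Your expectation-plus-McDiarmid route (bounded differences $2/T$, hence $P(f\ge \E f + t)\le\exp(-Tt^2/2)$) avoids per-coordinate tail bounds altogether and in fact yields a sharper threshold of the form $T\gtrsim (2K + c\log(1/\delta))/\epsilon^2$, with an \emph{additive} rather than multiplicative logarithm; the paper's allocation trick is more elementary but pays the $\log(2K/\delta)$ factor and, as stated, its per-coordinate ``Chernoff bound'' with variance $p_k(1-p_k)$ in the exponent needs care (a true Bernstein bound carries an extra $+\epsilon_k/3$ term in the denominator), whereas McDiarmid requires no such adjustment.

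One step you deferred does need to be carried out: you must check that the stated threshold $T \ge \frac{2K}{\epsilon^2}\log\frac{2K}{\delta}$ actually makes $\sqrt{K/T} + \sqrt{2\log(1/\delta)/T}\le\epsilon$. Substituting the threshold, this reduces to
\begin{equation*}
\bigl(1+\sqrt{\log(1/\delta)}\bigr)^2 \le 2\log(2K/\delta),
\end{equation*}
which, writing $x=\log(1/\delta)$, is $2\sqrt{x}\le 2\log(2K)-1+x$; since $2\log(2K)-1\ge 2\log 4 - 1>1$ for $K\ge 2$ and $2\sqrt{x}\le x+1$ by AM--GM, this always holds (the case $K=1$ is trivial). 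So your plan closes, but this verification is genuine content rather than a routine ``solve for $T$.''
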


\begin{proof}[Proof of Lemma~\ref{lm:plug-in-estimation}]
    Let $S:=\sum_{k=1}^K \sqrt{p_k (1 - p_k)}$ and $\epsilon_k := \frac{\sqrt{p_k (1 - p_k)}}{S} \epsilon$ for $k=1,\dots,K$. Then we have 
    \begin{equation}\label{eq:plug-in-estimation-main}
        \begin{aligned}
            &P\left(\sum_{k=1}^K \left|\frac{1}{T}\sum_{t=1}^T \mathds{1}\left(X_t = k\right)-p_k\right| \ge \epsilon\right) \\
            \overset{(a)}{\leq}
            & \sum_{k=1}^K P\left( \left|\frac{1}{T}\sum_{t=1}^T \mathds{1}\left(X_t = k\right)-p_k\right| \ge \epsilon_k\right) \\
            \overset{(b)}{\leq}
            & \sum_{k=1}^K 2 \exp\left(-\frac{T \epsilon_k^2}{2 p_k (1-p_k)}\right)\\
            \leq
            &2 K \exp\left(-\frac{T \epsilon^2}{ 2 S^2}\right),
        \end{aligned}
    \end{equation}
    where the inequality (a) is due to union bound and the inequality (b) is due to Chernoff bound.

    According to the concavity of the function $f(x)=\sqrt{x (1 - x)}$, we have 
    \begin{equation}\label{eq:ub-S}
        S = K\cdot \frac{1}{K} \sum_{k=1}^K f(p_k)
         \leq K f\left(\frac{1}{K} \sum_{k=1}^K p_k\right)
         = K f\left(\frac{1}{K}\right)
         = \sqrt{K-1} < \sqrt{K}.
    \end{equation}

    Combining \eqref{eq:plug-in-estimation-main} and \eqref{eq:ub-S}, we obtain the desired result.
\end{proof}

We provide a constructive proof of Theorem~\ref{thm:ubmm} by designing an algorithm that recover hypergraphs from MM pre-trained models. The algorithm includes two Phases: underlying hypergraph estimation and weight estimation. In Phase 1, we estimate the underlying hypergraph by evaluating the probability of the MM pre-trained model output and selecting all hyperedges of positive probabilities. In Phase 2, we evaluate a sequence of relative weights between the hypergraphs.
We estimate the weight function by those relative weights and a normalization. The algorithm is presented in Algorithm~\ref{alg:estimate-from-model}. Specially, we implement the weight estimation algorithm in a breadth-first style (Algorithm~\ref{alg:bf-weight-estimation}). We utilize the data structure queue to implement the algorithm. A queue $Q$ supports two operations: $Q.\op{push\_back}(x)$ that pushes the element $x$ to the back of the queue $Q$ and $Q.\op{pop\_front}(x)$ that removes and returns the front of the queue $Q$.

\begin{algorithm}[tbhp]
   \caption{Hypergraph Estimation from MM Pre-Trained Models}
   \label{alg:estimate-from-model}
   \begin{algorithmic}
       \STATE {\bfseries Input:} a MM pre-trained model $\cM$, a candidate hyperedge set $\cE_0$, and a masking strategy $\pi$.
       \STATE
       \STATE \textit{// Phase 1: underlying hypergraph estimation}
       \STATE Initialize $\cE=\{\}$.
       \FOR{$e\in\cE_0$}
       \STATE Apply $\pi$ to $e$ and get a masked hyperedge $e^{-}$.
       \IF{$M(e\mid e^{-}) > 0$}
       \STATE $\cE = \cE\cup \{e\}$.
       \ENDIF
       \ENDFOR
       \STATE $\cV=\cup_{e\in\cE} e$.
       \STATE 
       \STATE \textit{// Phase 2: weight estimation}
       \STATE Initialize $\tilde{w}(e)=0$ for all $e\in\cE$.
       \STATE Select $e_0$ from $\cE$ and let $\tilde{w}(e_0)=1$.
       \STATE $\tilde{w}=\textsc{BFWeightEstimation}(e_0,\cE,\cM,\pi,\tilde{w})$ (Algorithm~\ref{alg:bf-weight-estimation}).
       \STATE Compute $W=\sum_{e\in\cE}\tilde{w}(e)$.
       \STATE $w=\tilde{w}/{W}$.
       \STATE
       \STATE Return $\cH=(\cV,\cE,w)$.
    \end{algorithmic}
\end{algorithm}


\begin{algorithm}[tbhp]
   \caption{$\textsc{BFWeightEstimation}(e_{\text{init}},\cE,\cM,\pi,\tilde{w})$}
   \label{alg:bf-weight-estimation}
   \begin{algorithmic}
        \STATE {\bfseries Input:} a selected hyperedge $e_{\text{init}}$, a hyperedge set $\cE$, a MM pre-trained model $\cM$, a masking strategy $\pi$, and a weight function $\tilde{w}$.
        \STATE
        \STATE Initialize an empty queue $Q$.
        \STATE $Q.\op{push\_back}(e_{\text{init}})$.
        \WHILE{$Q$ is not empty}
            \STATE $e = Q.\op{pop\_front}()$.
             \FOR{$e'\in\cE$ such that $e\overset{\pi}{\leftrightarrow} e'$}
                \IF{$\tilde{w}(e') > 0$}
                    \STATE Continue.
                \ENDIF
             
                \STATE $\tilde{w}(e')=\frac{\pi(e^{-}\mid e)\cM(e'\mid e^{-})}{\pi(e^{-}\mid e')\cM(e\mid e^{-})}\tilde{w}(e)$.
                \STATE $Q.\op{push\_back}(e')$.
            \ENDFOR
        \ENDWHILE
        \STATE
        \STATE Return $\tilde{w}$.
   \end{algorithmic}
\end{algorithm}

We first show that the underlying hypergraph can be recovered with high probability in Phase 1. We denote $\min_{e\in\cE_0} w_0(e)$ and $\max_{e\in\cE_0} w_0(e)$ by $c_w$ and $C_w$, respectively.
By the definition of the model $\cM$, it suffices to show that each hyperedge $e$ and possible masked hypergraphs $e^{-}$ (i.e., $\pi(e^{-}\mid e)> 0$) are covered by the training dataset $\cD$.
According to the data generation process, each sample in the dataset $\cD$ corresponds to a pair of $(e,e^{-})$ sampled from the distribution $P((e,e^{-}))=P_w(e)\pi(e^{-}\mid e)$. With slight abuse of notation, we write $(e,e^{-})\in\cD$ if $\cD$ contains the corresponding sample of the pair $(e,e^{-})$. Denote the support set of $P((e,e^{-}))$ by $S_\pi$. By Assumptions~\ref{as:bwf} and \ref{as:bms}, we have $|S_{\pi}| \leq m C_\pi$ and $P(e,e^{-}) \ge c_w c_{\pi}$ for all $(e,e^-)\in S_{\pi}$. Denote the event that the underlying hypergraph $\cH_1$ recovered in Phase 1 satisfies $\cH\sim\cH_0$ by $E_1$. Then we can obtain
\begin{equation}\label{eq:phase1}
    \begin{aligned}
        P(E_1^c)=P\left(\exists (e,e^{-})\in S_{\pi}, (e,e^{-})\not\in\cD\right) 
        &\leq \sum_{(e,e^{-})\in S_{\pi}} P\left((e,e^{-})\not\in\cD\right)\\
        &\leq |S_{\pi}|\min_{(e,e^{-})\in S_{\pi}} P\left((e,e^{-})\not\in\cD\right)\\
        &\leq m C_\pi (1 - c_w c_\pi)^{N}.
    \end{aligned}
\end{equation}

We then consider the weight estimation process in Phase 2,
supposing that the underlying hypergraph $\cH_1$ recovered in Phase 1 satisfies $\cH\sim\cH_0$ and the isomorphism mapping from $\cH$ to $\cH_0$ as $\phi$.
Notice that if we replace $\cM$ with $\cM_0$ in Algorithm~\ref{alg:bf-weight-estimation}, the estimated weight function $w$ satisfies 
$w(e) = w_0(\phi(e))$ for all $e\in\cE$. 
Since we train by MM with cross-entropy loss, we have
\begin{equation}\label{eq:mm-ce}
    \cM(e\mid e^{-}) = \frac{\sum_{t=1}^N \sum_{k=1}^K \mathds{1}(e_{tk} = e, e_{tk}^- = e^-)}{\sum_{e\in\cE}\sum_{t=1}^N \sum_{k=1}^K \mathds{1}(e_{tk} = e, e_{tk}^- = e^-)}.
\end{equation}

We first consider only randomness over sampling masked hyperedges for given hyperedges. Denote the number of $e$ in $\{e_t\}_{t=1}^N$ by $f_N(e)$.
For any $e\in\cE$, $e^-\sim\pi(\cdot\mid e)$ and $\epsilon_1 > 0$, we have
\begin{equation}
    \begin{aligned}
        &P\left(\left|\frac{1}{N K}\sum_{t=1}^N \sum_{k=1}^K \mathds{1}(e_{tk} = e, e_{tk}^- = e^-) - \frac{f_N(e)}{N}\pi(e^-\mid e)\right|\ge \frac{f_N(e)}{N}\pi(e^-\mid e)\epsilon_1\right)\\
        =&
        P\left(\left|\frac{1}{K}\sum_{k=1}^K\left[\frac{1}{N}\sum_{t=1}^N \mathds{1}(e_{tk} = e, e_{tk}^- = e^-)\right] - \frac{f_N(e)}{N}\pi(e^-\mid e)\right|\ge \frac{f_N(e)}{N}\pi(e^-\mid e)\epsilon_1\right)\\
        \overset{(a)}{\leq}&
        2\exp\left[-2K\left(\frac{f_N(e)}{N}\pi(e^-\mid e)\epsilon_1\right)^2\right],
    \end{aligned}
\end{equation}
where the inequality (a) is due to Hoeffding's inequality. By union bound, we have 
\begin{equation}
    \begin{aligned}
        &P\left(\exists (e,e^-), \left|\frac{1}{N K}\sum_{t=1}^N \sum_{k=1}^K \mathds{1}(e_{tk} = e, e_{tk}^- = e^-) - \frac{f_N(e)}{N}\pi(e^-\mid e)\right|\ge \frac{f_N(e)}{N}\pi(e^-\mid e)\epsilon_1\right)\\
        \leq
        &\sum_{(e,e^-)} 2\exp\left[-2K\left(\frac{f_N(e)}{N}\pi(e^-\mid e)\epsilon_1\right)^2\right].
    \end{aligned}
\end{equation}

When $\left|\frac{1}{N K}\sum_{t=1}^N \sum_{k=1}^K \mathds{1}(e_{tk} = e, e_{tk}^- = e^-) - \frac{f_N(e)}{N}\pi(e^-\mid e)\right|\ge \frac{f_N(e)}{N}\pi(e^-\mid e)\epsilon_1$ holds for all pairs of $(e,e^-)$, for any $e,e'$ such that $e\leftrightarrow e'$ with $e^-$ being the common masked hyperedge, we have
\begin{equation}
    \begin{aligned}
        \left|\frac{\tilde{w}(e)}{\tilde{w}(e')} - \frac{f_N(e)}{f_N(e')}\right|
        =
        &
        \left|\frac{\cM(e\mid e^-) \pi(e^-\mid e')}{\cM(e'\mid e^-) \pi(e^-\mid e)} - \frac{f_N(e)}{f_N(e')}\right|\\
        \leq &
        \left(\frac{1+\epsilon_1}{1-\epsilon_1} - 1\right) \frac{f_N(e)}{f_N(e')}\\
        = &
        \epsilon_2 \frac{f_N(e)}{f_N(e')},
    \end{aligned}
\end{equation}
where $\epsilon_2 := \frac{1+\epsilon_1}{1-\epsilon_1} - 1 = \frac{2\epsilon_1}{1 - \epsilon_1}$. This implies
\begin{equation}
    (1 - \epsilon_2) \frac{f_N(e)}{f_N(e')} \leq \frac{\tilde{w}(e)}{\tilde{w}(e')} 
    \leq (1 + \epsilon_2) \frac{f_N(e)}{f_N(e')}.
\end{equation}
By Assumption~\ref{as:bpath}, for any $e\in\cE$, there exists a path $e_{\text{init}}=e^{(1)}\leftrightarrow\cdots\leftrightarrow e^{(\ell)}=e$, $\ell\leq L$ and we have
\begin{equation}
    (1 - \epsilon_2)^L \frac{f_N(e)}{f_N(e_{\text{init}})} \leq \frac{\tilde{w}(e)}{\tilde{w}(e_{\text{init}})}=\tilde{w}(e)
    \leq (1 + \epsilon_2)^L \frac{f_N(e)}{f_N(e_{\text{init}})}.
\end{equation}

Notice that
\begin{equation}\label{eq:range-w}
    \begin{aligned}
    w(e) =& \frac{\tilde{w}(e)}{\sum_{e'\in\cE}\tilde{w}(e')}\\
         =& \frac{\tilde{w}(e)/\tilde{w}(e_{\text{init}})}{\sum_{e'\in\cE}\tilde{w}(e') / \tilde{w}(e_{\text{init}})}\\
         \in& \left[\frac{(1-\epsilon_2)^L}{(1+\epsilon_2)^L} \cdot
         \frac{f_N(e)}{N}, \frac{(1+\epsilon_2)^L}{(1-\epsilon_2)^L} \cdot
         \frac{f_N(e)}{N}\right]
    \end{aligned}
\end{equation}

We then obtain
\begin{equation}\label{eq:phase2-analysis}
    \begin{aligned}
        \|w - w_0\circ\phi\|_1 
        =& 
        \sum_{e\in\cE} |w(e)-w_0(\phi(e))| \\
        =&
        \sum_{e\in\cE} \left|w(e)-\frac{f_N(e)}{N} + \frac{f_N(e)}{N} - w_0(\phi(e))\right|\\
        \leq &
         \sum_{e\in\cE} \left|w(e)-\frac{f_N(e)}{N}\right| + \sum_{e\in\cE}\left|\frac{f_N(e)}{N} - w_0(\phi(e))\right|\\
         \overset{(a)}{\leq} &
         \left[\frac{(1+\epsilon_2)^L}{(1-\epsilon_2)^L} - 1\right]\sum_{e\in\cE} \frac{f_N(e)}{N} 
         + \sum_{e\in\cE}\left|\frac{f_N(e)}{N} - w_0(\phi(e))\right|\\
         \overset{(b)}{=} &
         \left[\frac{(1+\epsilon_2)^L}{(1-\epsilon_2)^L} - 1\right]
         + \sum_{e\in\cE}\left|\frac{f_N(e)}{N} - w_0(\phi(e))\right|,
    \end{aligned}
\end{equation}
where the inequality (a) is due to \eqref{eq:range-w}
and the equality (b) is due to $\sum_{e\in\cE} f_N(e)=N$.
Note that $\frac{(1+\epsilon_2)^L}{(1-\epsilon_2)^L} - 1 \leq \frac{\epsilon}{2}$ if $\epsilon_1\leq\frac{\epsilon}{64 L}$ for $\epsilon$ sufficiently small. 
By \eqref{eq:phase2-analysis} and Lemma~\ref{lm:plug-in-estimation}, with $\epsilon_1 = \frac{\epsilon}{64 L}$, we have
\begin{equation}\label{eq:phase2}
    \begin{aligned}
        &P\left(E_1 \land \|w - w_0\circ\phi\|_1 \ge \epsilon\right) \\
        \leq
        & 
        P\left(\sum_{e\in\cE}\left|\frac{f_N(e)}{N} - w_0(\phi(e))\right|\ge \frac{\epsilon}{2}\right) 
        + 
        P\left(\sum_{e\in\cE}\left|\frac{f_N(e)}{N} - w_0(\phi(e))\right|\leq \frac{\epsilon}{2}\right. \\
        &\left.\land \exists (e,e^-), \left|\frac{1}{N K}\sum_{t=1}^N \sum_{k=1}^K \mathds{1}(e_{tk} = e, e_{tk}^- = e^-) - \frac{f_N(e)}{N}\pi(e^-\mid e)\right|\ge \frac{f_N(e)}{N}\pi(e^-\mid e)\epsilon_1\right) \\
        \leq &
        \sum_{(e,e^-)} 2\exp\left[-2K\left(\frac{f_N(e)}{N}\pi(e^-\mid e)\epsilon_1\right)^2\right]
        +
        2 m \exp\left(-\frac{N\epsilon^2}{8 m}\right)\\
        \overset{(a)}{\leq}
        &
        \sum_{(e,e^-)} 2\exp\left[-2K\left(\frac{c_w}{2}\pi(e^-\mid e)\epsilon_1\right)^2\right]
        +
        2 m \exp\left(-\frac{N\epsilon^2}{8 m}\right)\\
        \leq
        &
         2 m C_\pi \exp\left[-2K\left(\frac{c_w c_\pi}{128L}\epsilon\right)^2\right]
        +
        2 m \exp\left(-\frac{N\epsilon^2}{8 m}\right),
    \end{aligned}
\end{equation}
where the inequality (a) is due to $\frac{f_N(e)}{N}\ge c_w - \frac{\epsilon}{2} \ge \frac{c_w}{2}$ when $\sum_{e\in\cE}\left|\frac{f_N(e)}{N}-w_0(\phi(e))\right|\leq\frac{\epsilon}{2}$ holds and $\epsilon$ is sufficiently small.

Combining \eqref{eq:phase1} and \eqref{eq:phase2}, we have
\begin{equation}
    \begin{aligned}
        & P\left(\|w - w_0\circ\phi\|_1 \leq \epsilon\right) \\
        \ge
        & 1 - P(E_1^c) - P(E_1 \land \|w - w_0\circ\phi\|_1 \ge \epsilon) \\
        \ge
        & 1 - mC_\pi (1 - c_w c_\pi)^N 
        - 2 m C_\pi \exp\left[-2K\left(\frac{c_w c_\pi}{128L}\epsilon\right)^2\right]
        -
        2 m \exp\left(-\frac{N\epsilon^2}{8 m}\right)\\
        \ge& 1 - \delta,
    \end{aligned}
\end{equation}
if
\begin{equation}
    \begin{gathered}
        mC_\pi (1 - c_w c_\pi)^N \leq \frac{\delta}{3}, \\
        2 m C_\pi \exp\left[-2K\left(\frac{c_w c_\pi}{128L}\epsilon\right)^2\right] \leq \frac{\delta}{3}, \\
        2 m \exp\left(-\frac{N\epsilon^2}{8 m}\right) \leq \frac{\delta}{3}.
    \end{gathered}
\end{equation}
After simplification, we have
\begin{equation}
    \begin{gathered}
        K  \ge \frac{2^{14} m^2 \kappa^2 L^2}{ c_\pi^2 \epsilon^2}\log \frac{6 m C_\pi}{\delta},\\
        N \ge \max\left\{\frac{2 m \kappa}{c_\pi}\log \frac{3 m C_\pi}{\delta}, \frac{8 m}{\epsilon^2}\log\frac{6 m}{\delta}\right\}.
    \end{gathered}
\end{equation}

\subsection{Proof of \cref{prop:multimodal-learning}}

\cref{prop:multimodal-learning} is directly implication of \cref{thm:ubmm} in the multimodal model with the prior entity alignment $\phi^*$. More concretely, we can generate a dataset $D'=\phi^*(D_1)\cup D_2$ with $N'=N_1+N_2,K'=K_1+K_2$ by the entity alignment $\phi^*$. Applying \cref{thm:ubmm} to the dataset $D'$, we obtain \cref{prop:multimodal-learning}.

\newpage
\section{Entity Alignment}\label{appendix:entity-alignment}

While we show that entity alignment is feasible without labeled pairs in theory, labeled pairs are important in practice. A possible reason is that solving the entity alignment problem is computational challenging, no known polynomial algorithms addressing the problem. The role of the labeled pairs might be reducing the inherent complexity required to solve the computational problem. Here are two examples of how the labeled pairs can help to solve the alignment problem more efficiently.

\begin{example}\label{ex:all}
    When all $m$ labeled pairs for the hyperedges are available, we can efficiently determine the alignment mapping between entities by leveraging hyperedges as identifiers. 
    More concretely, we assign a unique number as the identifier to each hyperedge. Subsequently, each node is labeled with a tuple containing the identifiers of the hyperedges it belongs to, 
    arranged in descending order. The nodes within each hypergraph are then organized into sequences based on their lexicographic order. Correspondence between entities is established through the alignment of nodes at identical positions within these sequences.
    The entire alignment process is of computational complexity $\Tilde{O}(m n)$.
\end{example}

\cref{ex:all} shows that we can align entities efficiently given all $m$ labeled pairs for the hyperedges.
This also means that as long as we can find the graph matching between the line graphs of the hypergraphs, we can also
align the hypergraphs with only polynomial extra computational overhead. Therefore, we can focus on the graph matching problem of 
the line graphs of the hypergraphs.

WL test serves as a potent heuristic for graph matching, demonstrating efficacy across a wide range of graphs.
Nonetheless, certain graphs challenge the capabilities of low-dimensional WL tests, leading to their failure \citep{cai1992optimal}. Although higher-dimensional WL tests may achieve accurate graph matching, they impose significantly greater computational demands. Labeled pairs could help to overcome this dilemma.

\begin{example}\label{ex:wl}
    Frucht graph (\cref{fig:frucht}) is a regular graph without non-trivial automorphism \citep{frucht1939herstellung}. 
    1-WL does not work for Frucht graph because of its regularity. 
    While higher-dimensional WL tests are applicable, they are significantly less efficient.
    However, if a labeled pair is identified, one can exclude the nodes in the label pair from both graphs 
    and apply the 1-WL test to the resulting subgraphs, leading to efficient graph matching.
\end{example}

\begin{figure*}[h]
    \centering
    \includegraphics[width=0.3\linewidth]{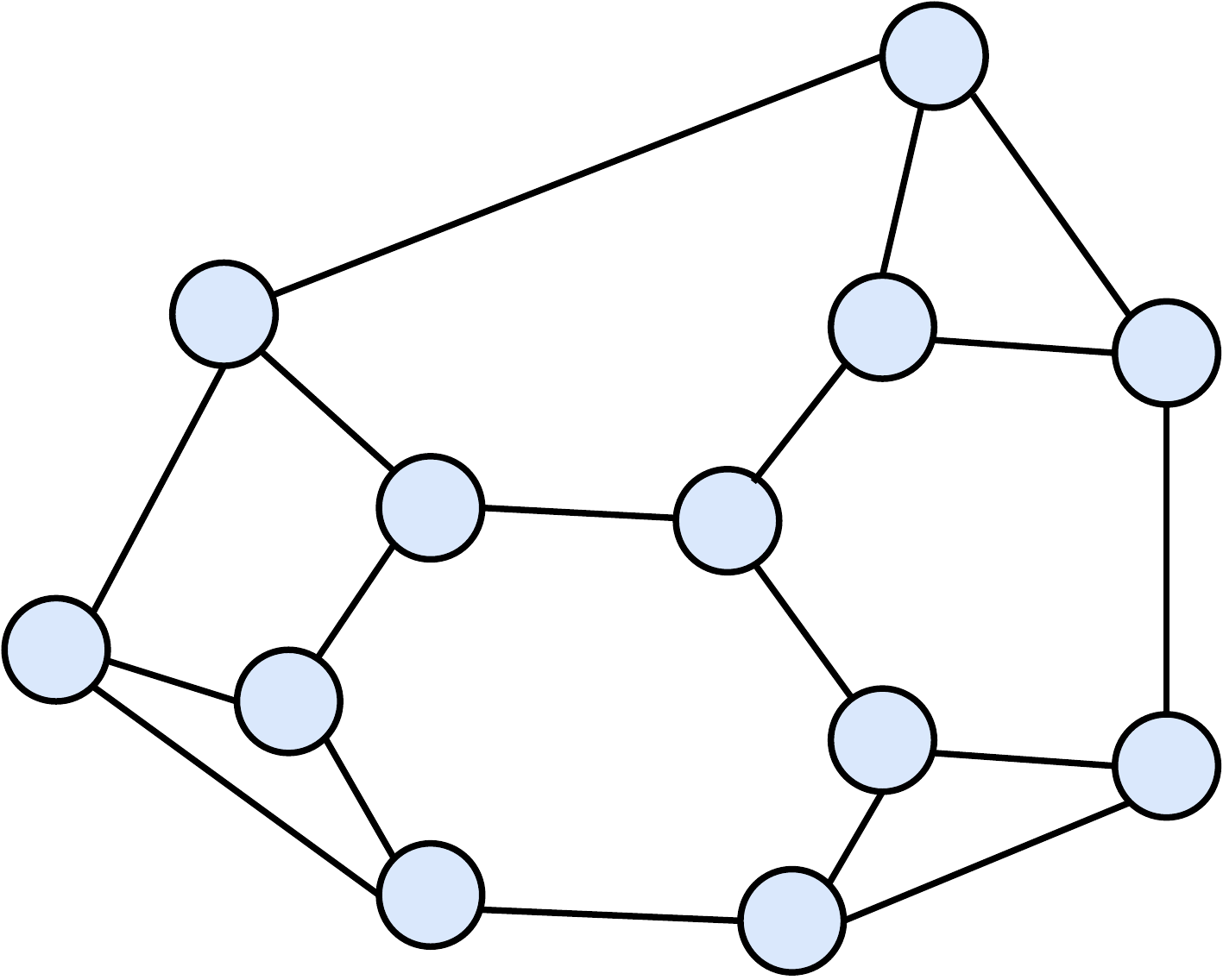}
    \caption{Frucht graph.}
    \label{fig:frucht}
\end{figure*}

\newpage
\section{Experiments}\label{appendix:experiments}

\subsection{Synthetic Relational Learning}\label{subappdendix:srl}

\subsection{Data}

\subsubsection{Graph Structures}
When the number of nodes is $n$, the different graph structures (\cref{fig:graph-structures}) are
\begin{itemize}
    \item STAR: 
    \begin{itemize}
        \item[-] $\cV=\{0,1,\dots,n-1\}$;
        \item[-] $\cE=\{\{0,i\}\mid i=1,\dots,n-1\}$;
    \end{itemize}
    \item X: 
    \begin{itemize}
        \item[-] $\cV=\{0,1,\dots,n-1\}$;
        \item[-] $\cE=\{\{0,k\}\mid k=1,2,3,4\}\cup\{\{4i+k,4i+k+4\}\mid 4i+k+4 \leq n-1\}$;
    \end{itemize}
    \item CHAIN: 
    \begin{itemize}
        \item[-] $\cV=\{0,1,\dots,n-1\}$;
        \item[-] $\cE=\{\{i,i+1\}\mid i=0,\dots,n-2\}$.
    \end{itemize}
\end{itemize}

\begin{figure*}[th]
    \centering
    \subfigure[STAR.]{
        \includegraphics[width=0.31\linewidth]{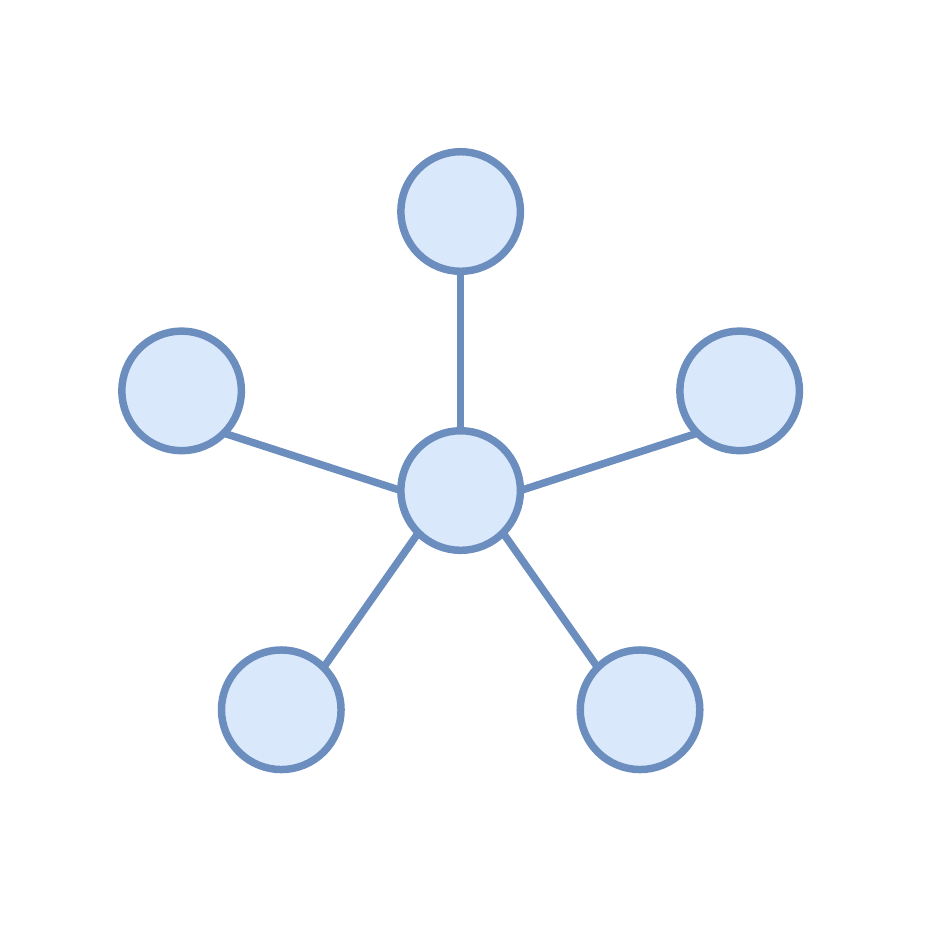}
    }
    \subfigure[X.]{
        \includegraphics[width=0.31\linewidth]{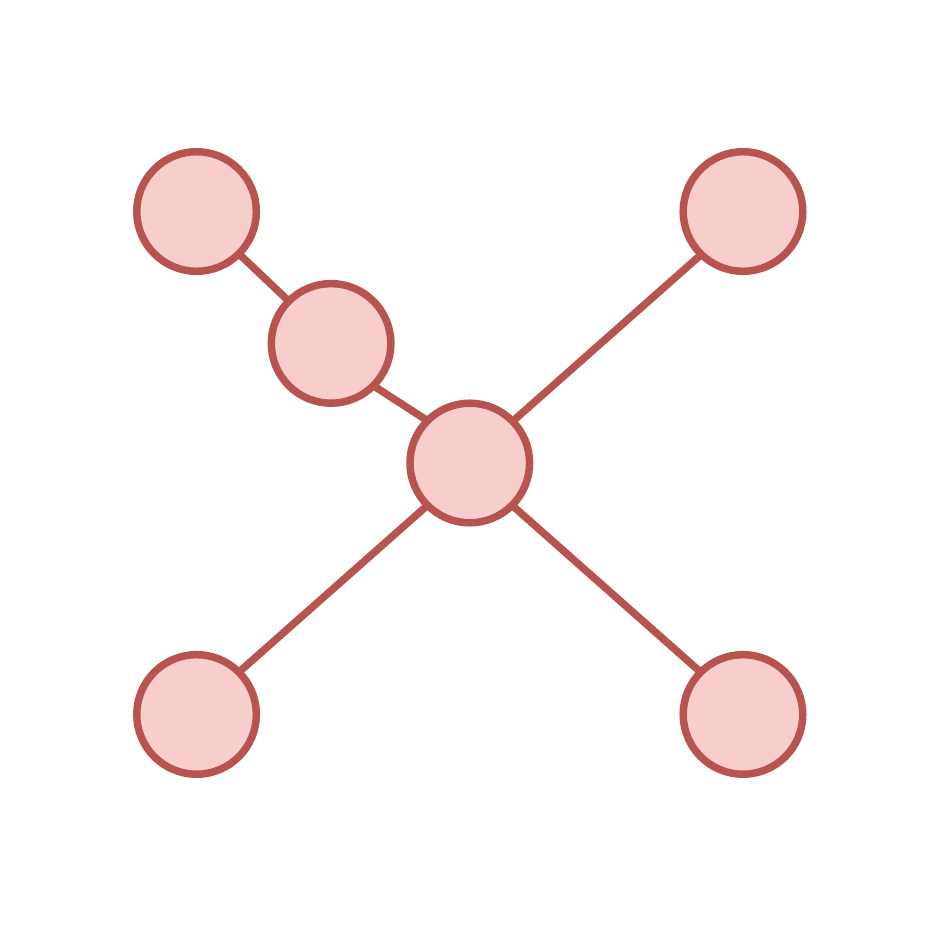}
    }
    \subfigure[CHAIN]{
        \includegraphics[width=0.31\linewidth]{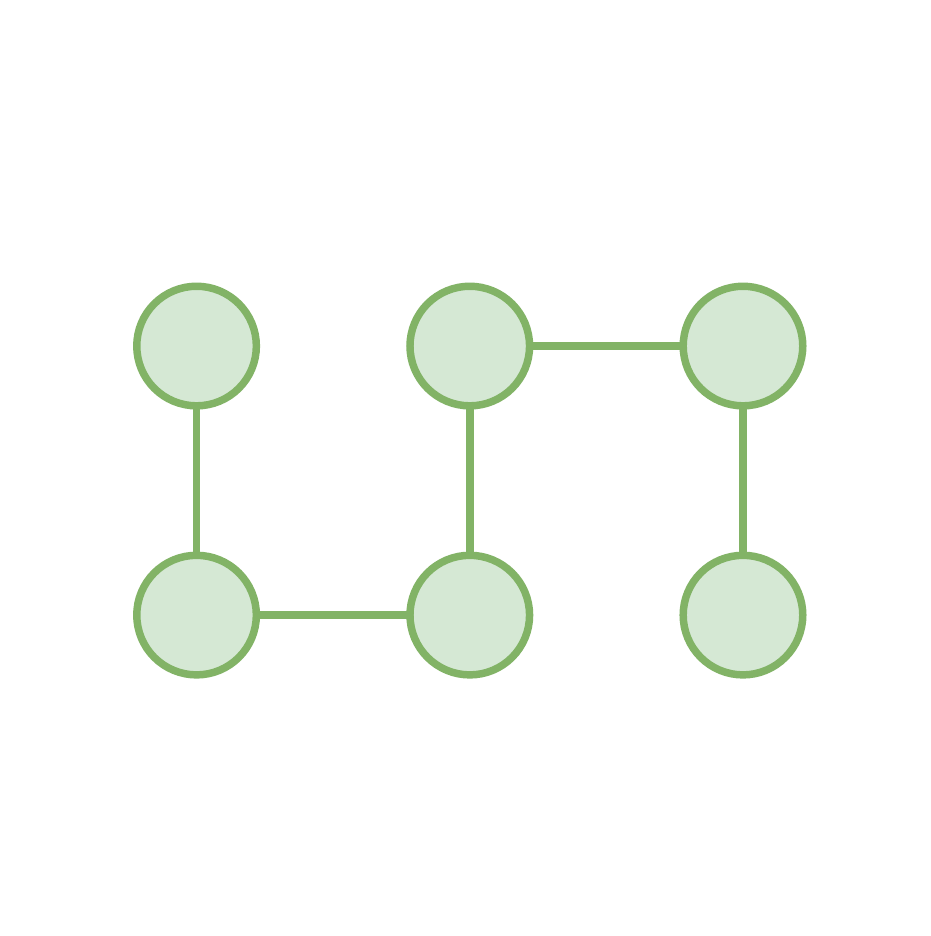}
    }
    \caption{Different graph structures ($n=6$).}
    \label{fig:graph-structures}
\end{figure*}

\subsubsection{Data Generation}
Each node of the graph is attached with a token, starting from ``a" and following the order of tokens of BERT's tokenizer. Each edge is assigned a weight, sampled from $\{w_{\min}, w_{\max}\}$. Specifically, we use $w_{\min}=1.0, w_{\max}=1.0$ for $\kappa=1.0$, $w_{\min}=1.0, w_{\max}=10.0$ for $\kappa=10.0$, and $w_{\min}=1.0, w_{\max}=100.0$ for $\kappa=100.0$ in our experiments. Then the weights of the graph are normalized. When generating data, we first sample an edge from the graph, with probability proportional to the the weights. We then concatenate the tokens of the edges with a random order. Tokens are separated by spaces to avoid that they are combined by the tokenizer. For each graph, we generate $100000$ samples for each graph, with $80000$ samples for training, $10000$ samples for validation, and $10000$ samples for testing.

\subsubsection{Model}
We choose BERT as our underlying PTM.
We use the implementation of HuggingFace \citep{wolf2020transformers} with the default tokenizer and the default configuration of BERT.

\subsubsection{Pre-Training}
We pre-train our model by MLM from scratch. For the masking strategy, we mask one of the tokens in a sample uniformly at random.
We train the model by AdamW, with the initial learning rate $2\times 10^{-5}$, weight decay $0.01$, the cosine scheduler. The other hyperparameters of AdamW are the same as the default of HuggingFace TrainerArguments. We pre-train the model for $100$ epochs. Per-device training batch size is $256$. The experiments are run on a server with Ubuntu. All the models are trained on two NVIDIA GeForce RTX 3090 GPUs.

\subsection{Real-World Relation Evaluation}\label{subappendix:rwre}

To extract a subgraph from ConceptNet, we first choose a source entity, query for the $k$ most related entities, and then repeat such a process for the returned entities. We adopt a breadth-first-search-like generation process to choose a subset of entities and construct a subgraph by considering $k$ most related entities within these chosen entities and the generation process is limited to some depth $d$ (the source entity are of depth $0$). 

In the real-world relation evaluation, we assess the LLMs' understanding of entity relationships by querying the $k$ most related entities within a specified set. We employ the prompt: 
``Consider the following concepts: [ENTITIES].
Suppose that these concepts are nodes of an undirected graph.
For each concept, consider [$k$] most related concepts.
According to the relations between these concepts, which edges should be included? Please answer with an edgelist.", where ``[ENTITIES]" and ``[$k$]" are placeholders for the actual entity set and the number of top-related entities, respectively. The LLMs will generate responses comprising edgelists, potentially accompanied by additional text, which are then utilized to construct relational graphs. See \cref{fig:rwre-example} for an illustration.
These graphs are compared with the corresponding subgraphs extracted from ConceptNet. 

\begin{figure*}[th]
    \centering
    \includegraphics[width=0.8\linewidth]{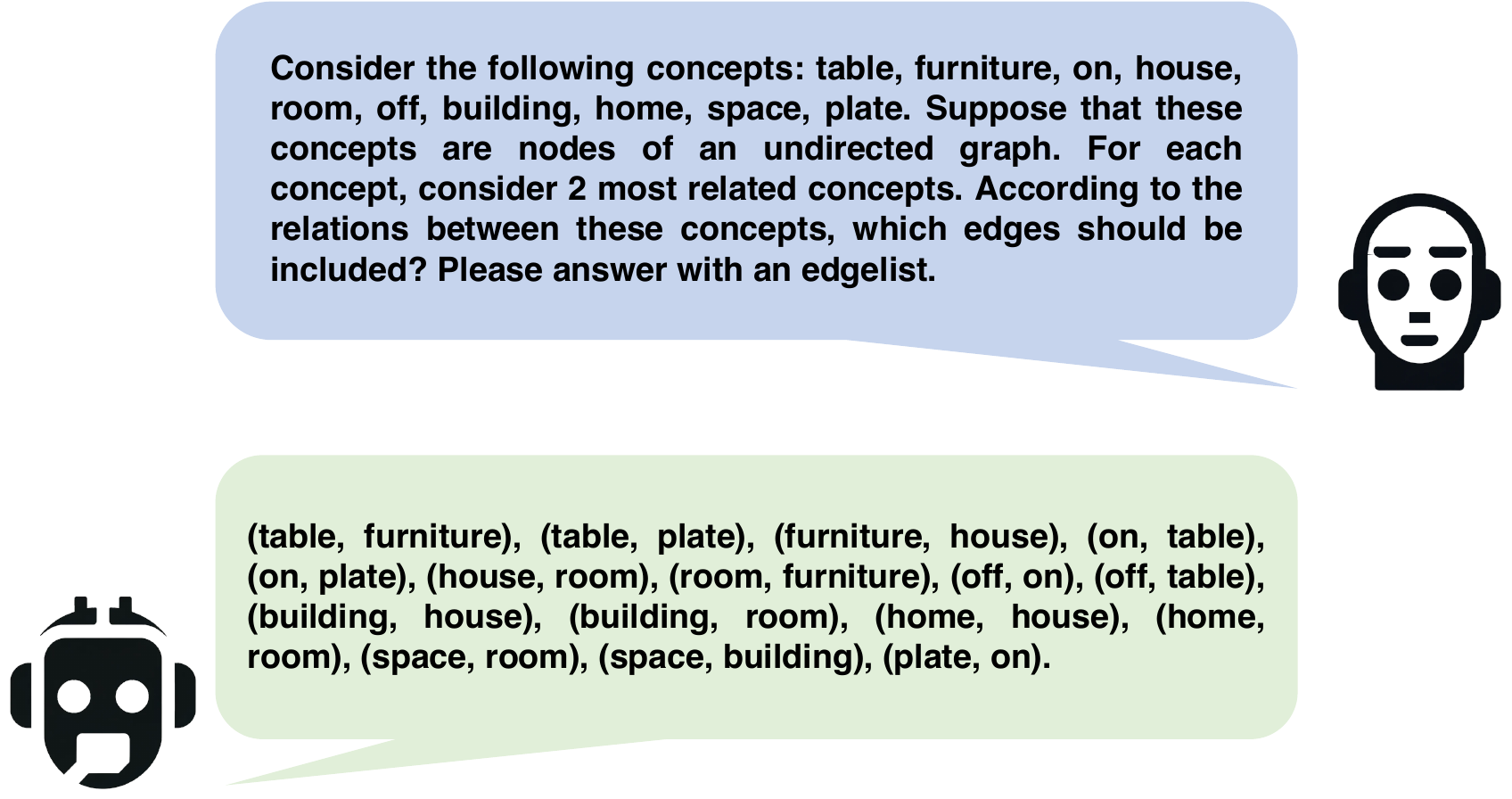}
    \caption{An example of real-world relation evaluation using GPT-4. Here, ``table" is the source entity, with $k=2$ indicating the two most closely related entities considered for generating the relational graph.}
    \label{fig:rwre-example}
\end{figure*}

Figures~\ref{fig:appx-cake}-\ref{fig:appx-zoo} are the evaluation results for all the source entities listed in \cref{tab:exp-res-k2d3}.

The correspondences between the entities and the letters used in the above figures are summarized in Tables~\ref{tab:entity-letter-1} and~\ref{tab:entity-letter-2}.

\begin{table}[th]
\caption{The correspondences between the entities and the letters for ConceptNet (Part 1).}
\label{tab:entity-letter-1}
\begin{center}
\begin{sc}
\begin{tabular}{lcccccc}
\toprule
 & A & B & C & D & E & F  \\
\midrule
cake & cake & birthday & dessert & celebration & lizard & party \\
dog & dog & bark & house & tree & building & home \\
fly & fly & insect & bug & flea & meadow & wiretap \\
human & human & school & home & learn & place & house \\
jacket & jacket & coat & shell & closet & material & husk \\
orange & orange & fruit & peel & eat & you & skin \\
paper & paper & write & sheet & pen & bed & closet \\
sea & sea & ocean & water & sail & lake & drink \\
table & table & furniture & on & house & room & off \\
zoo & zoo & animal & elephant & squirrel & circus & trunk \\
\bottomrule
\end{tabular}
\end{sc}
\end{center}
\end{table}

\begin{table}[th]
\caption{The correspondences between the entities and the letters for ConceptNet (Part 2).}
\label{tab:entity-letter-2}
\begin{center}
\begin{sc}
\begin{tabular}{lcccccc}
\toprule
 & G & H & I & J & K & L \\
\midrule
cake & garden & rock & - & - & - & - \\
dog & plant & grow & town & bank & place & - \\
fly & dog & wood & hayfield & investigation & tap & - \\
human & study & knowledge & location & bed & building & - \\
jacket & bedroom & clothes & wood & wool & chaff & - \\
orange & food & hunger & me & body & mole & - \\
paper & office & pocket & sleep & furniture & bedroom & clothes \\
sea & boat & wind & pond & liquid & beverage & - \\
table & building & home & space & plate & - & - \\
zoo & rodent & balloon & attic & car & - & - \\
\bottomrule
\end{tabular}
\end{sc}
\end{center}
\end{table}

\begin{figure*}[th]
    \centering
    \subfigure[Ground Truth]{
        \includegraphics[width=0.23\linewidth]{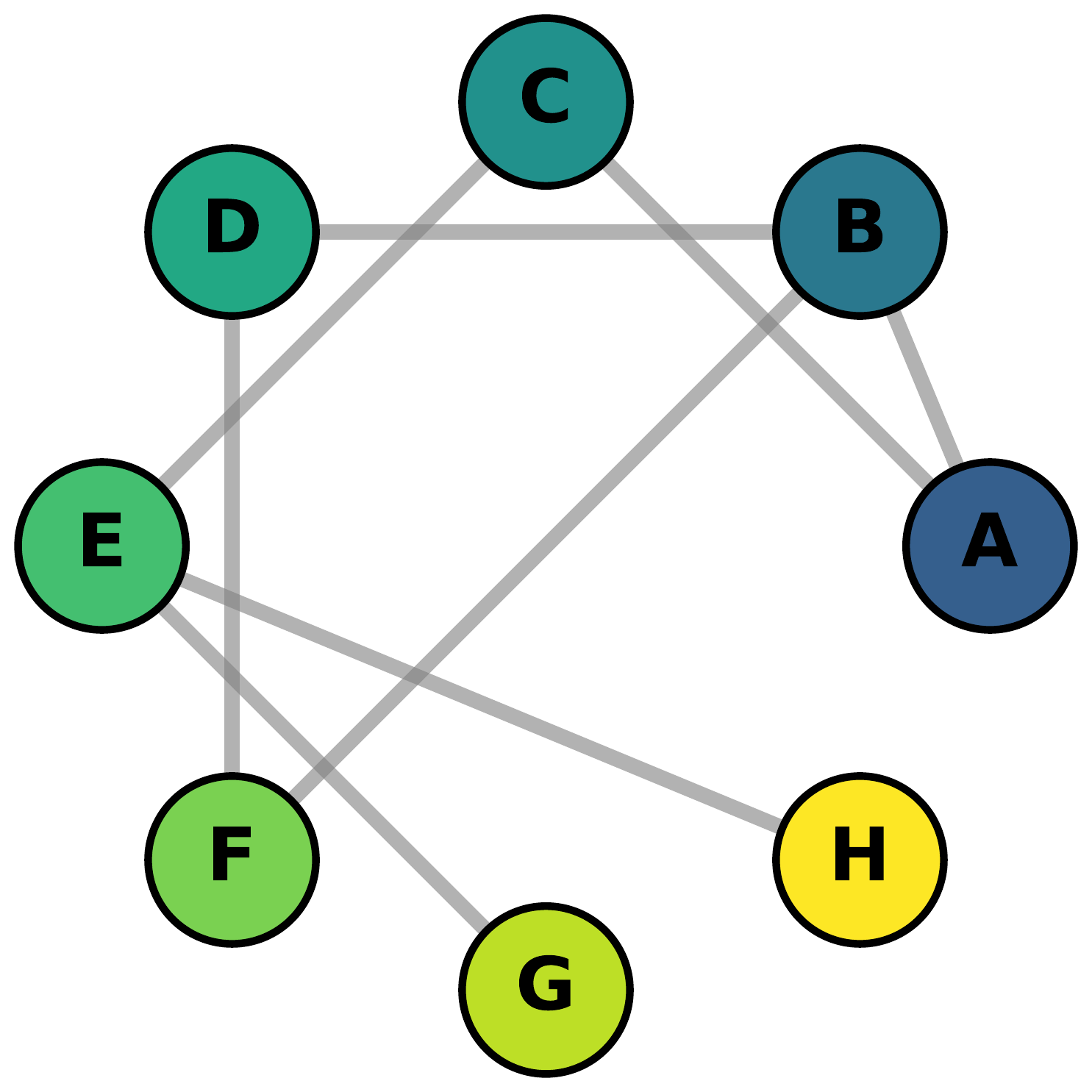}
    }
    \subfigure[LLAMA-2-70B]{
        \includegraphics[width=0.23\linewidth]{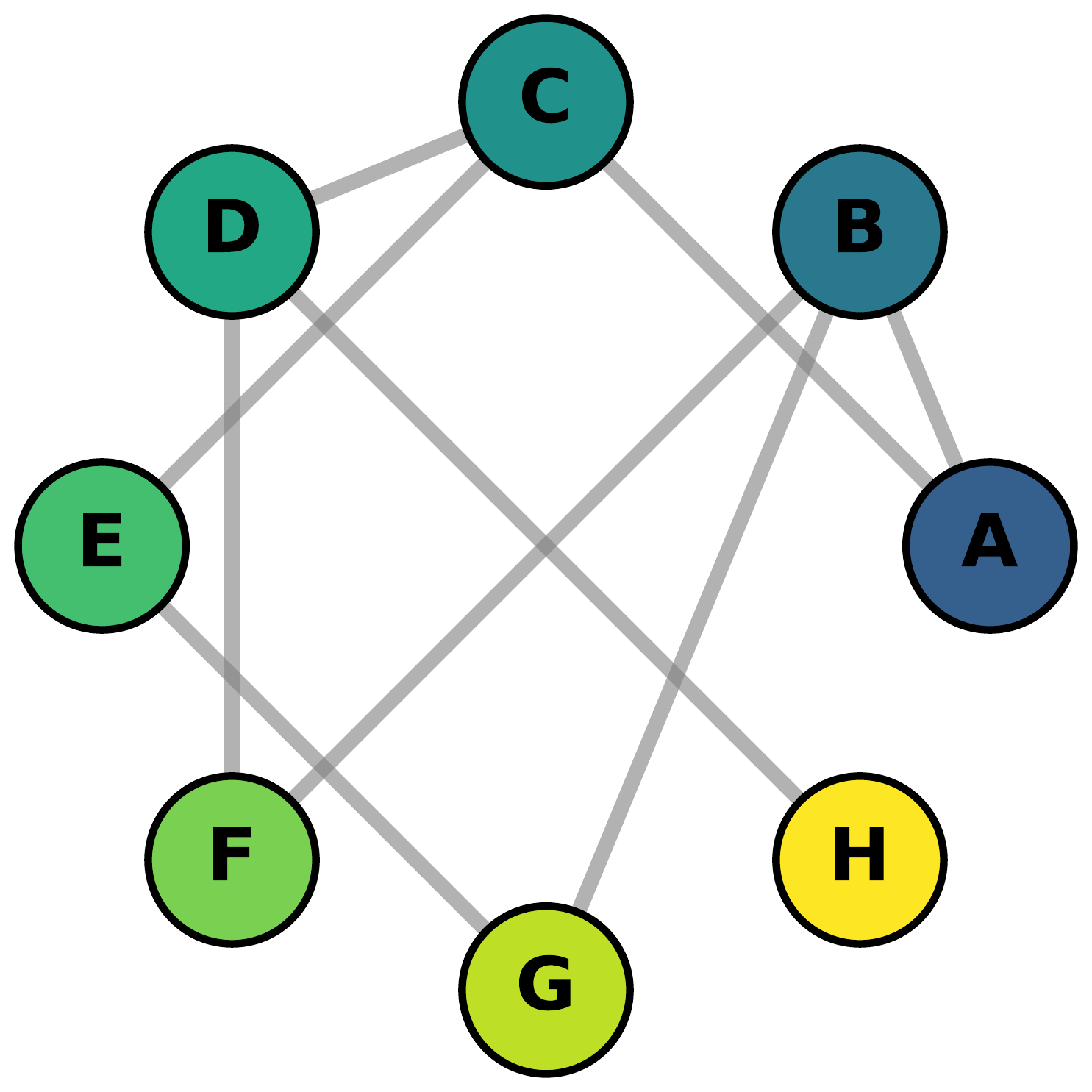}
    }
    \subfigure[GPT-3.5]{
        \includegraphics[width=0.23\linewidth]{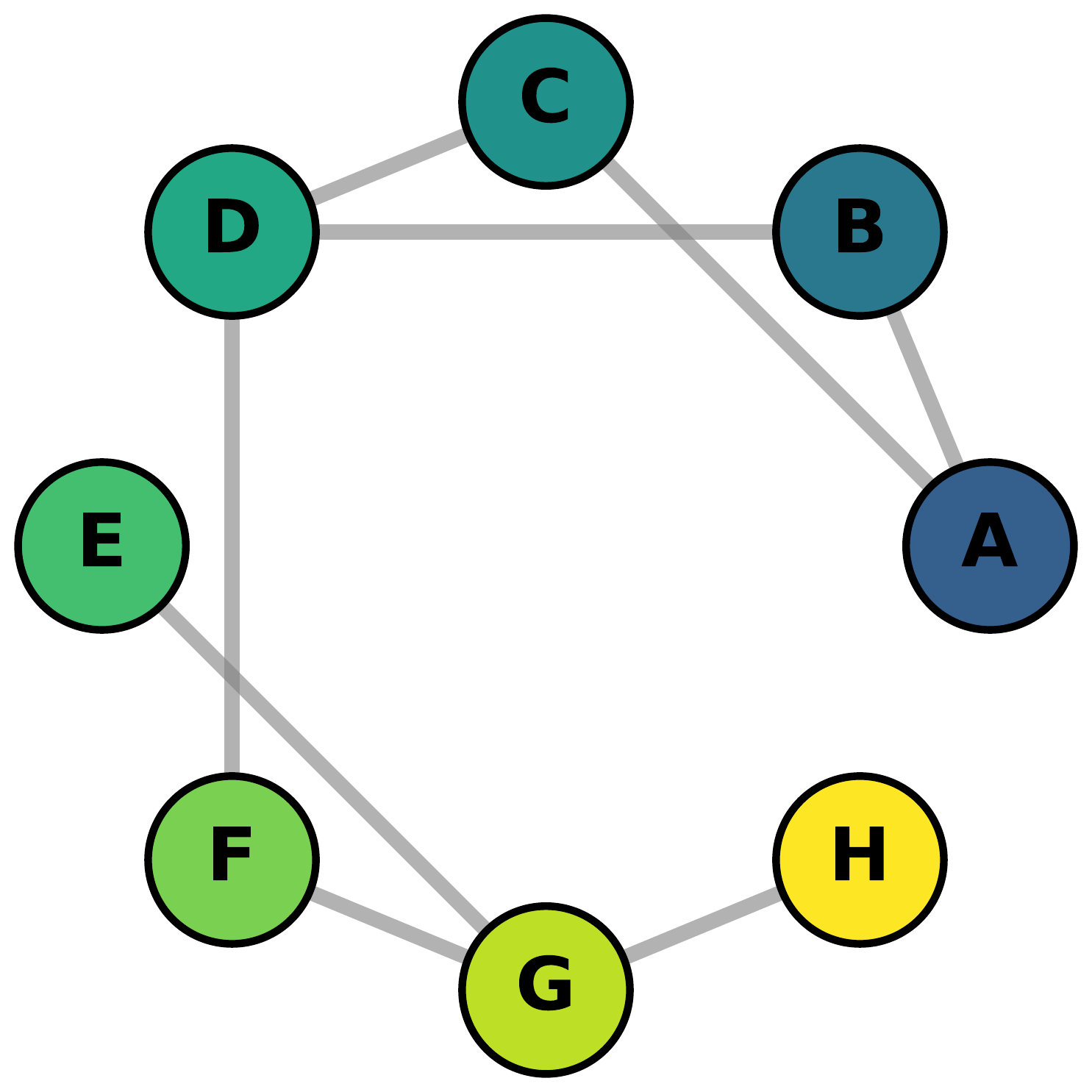}
    }
    \subfigure[GPT-4]{
        \includegraphics[width=0.23\linewidth]{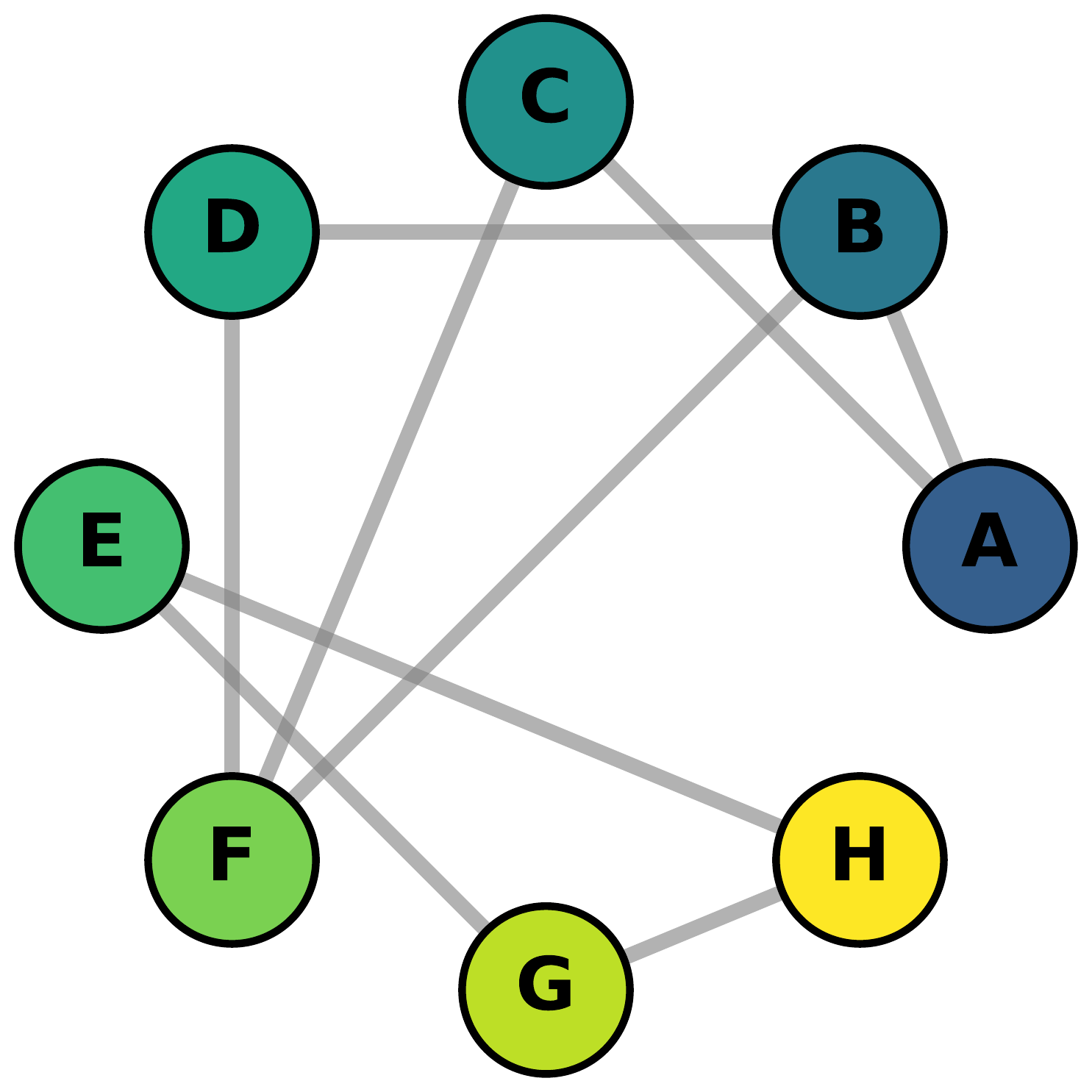}
    }
    \caption{Cake.}
    \label{fig:appx-cake}
\end{figure*}

\begin{figure*}[th]
    \centering
    \subfigure[Ground Truth]{
        \includegraphics[width=0.23\linewidth]{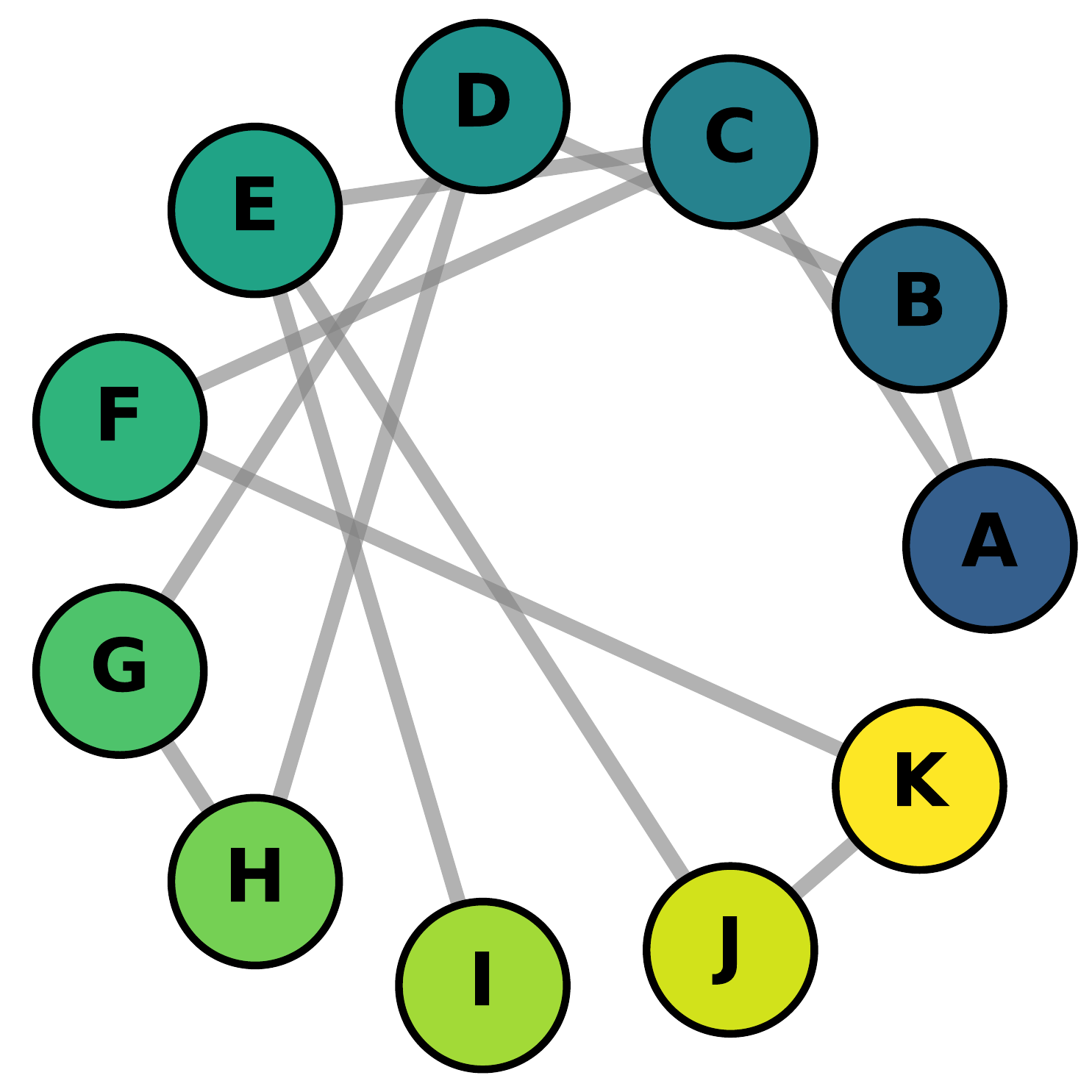}
    }
    \subfigure[LLAMA-2-70B]{
        \includegraphics[width=0.23\linewidth]{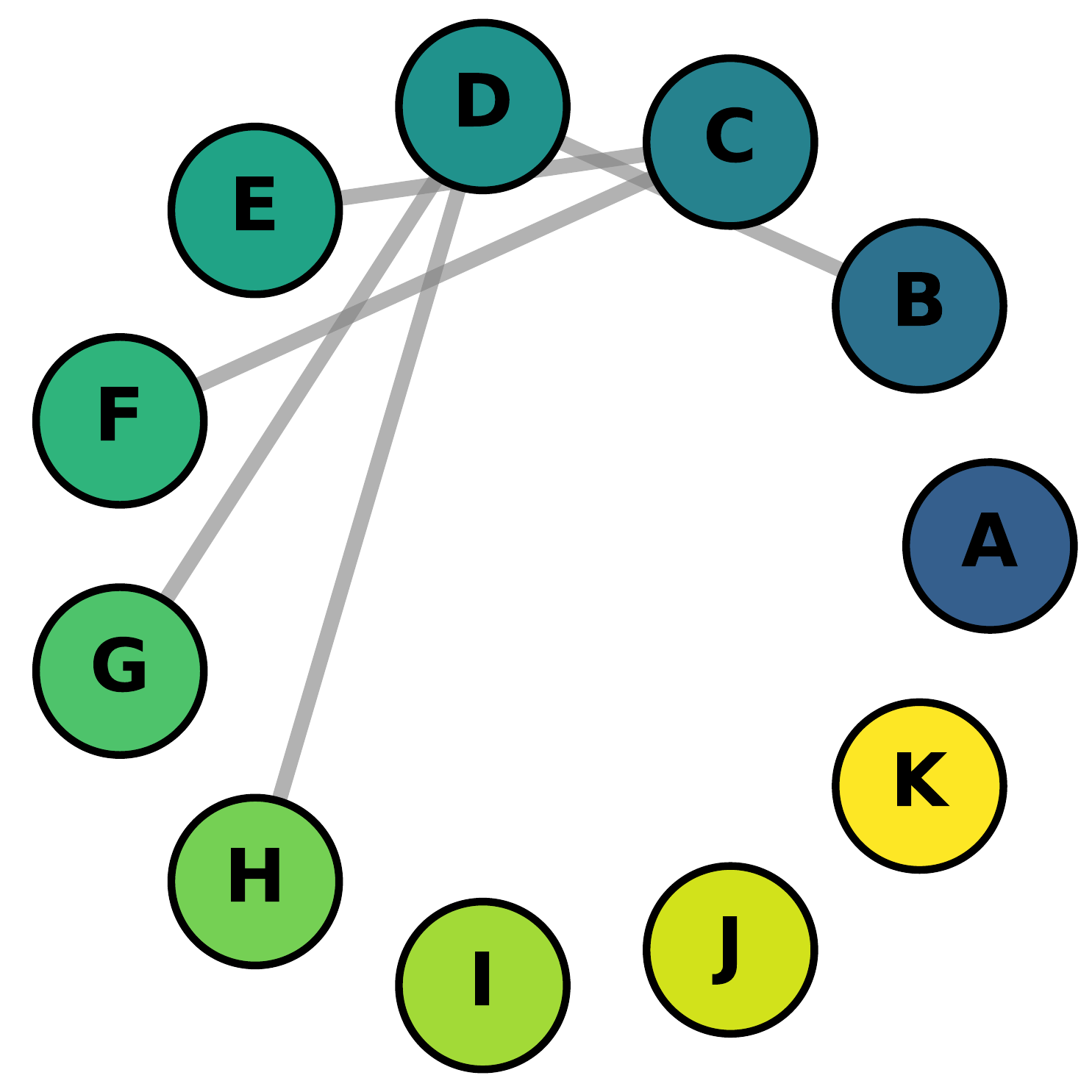}
    }
    \subfigure[GPT-3.5]{
        \includegraphics[width=0.23\linewidth]{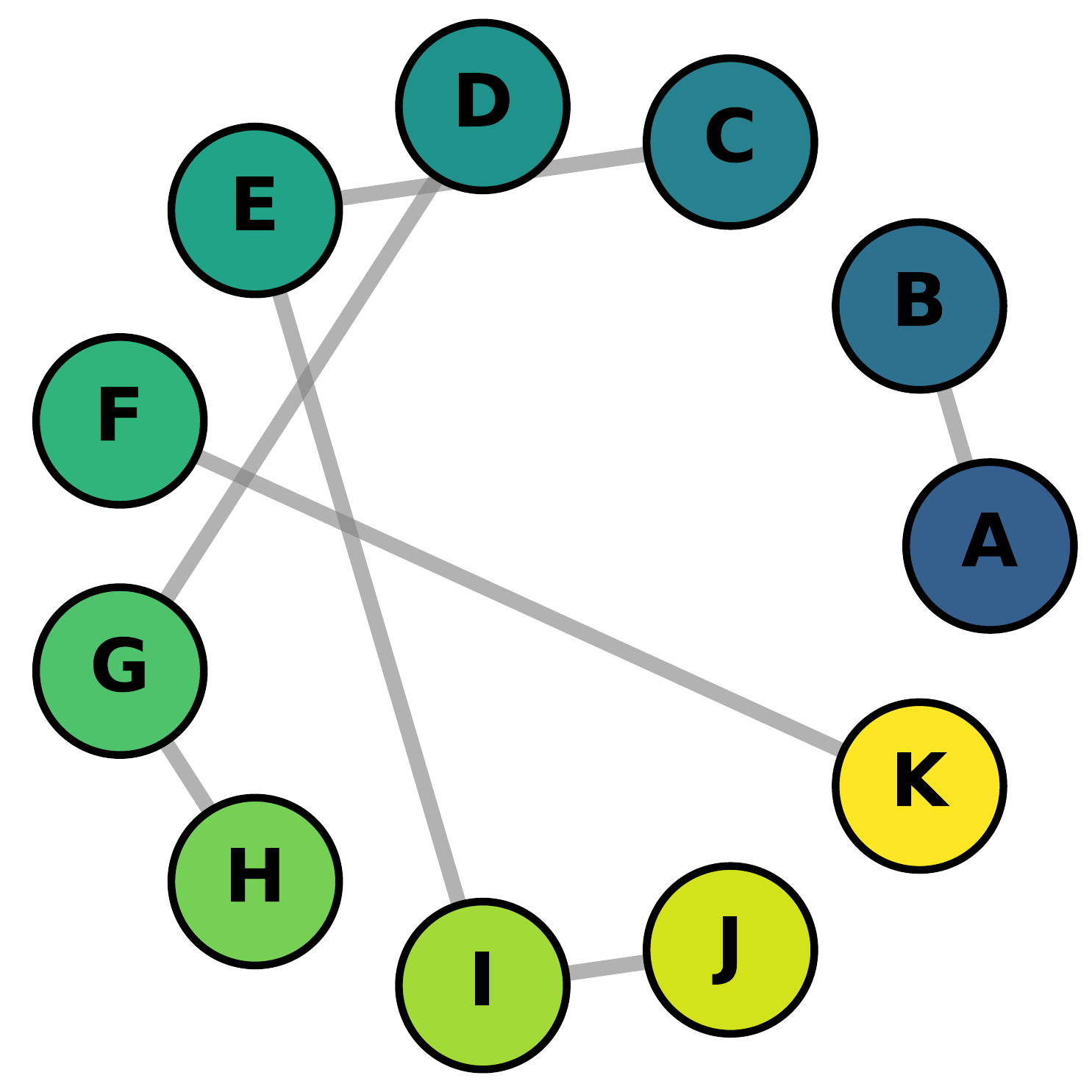}
    }
    \subfigure[GPT-4]{
        \includegraphics[width=0.23\linewidth]{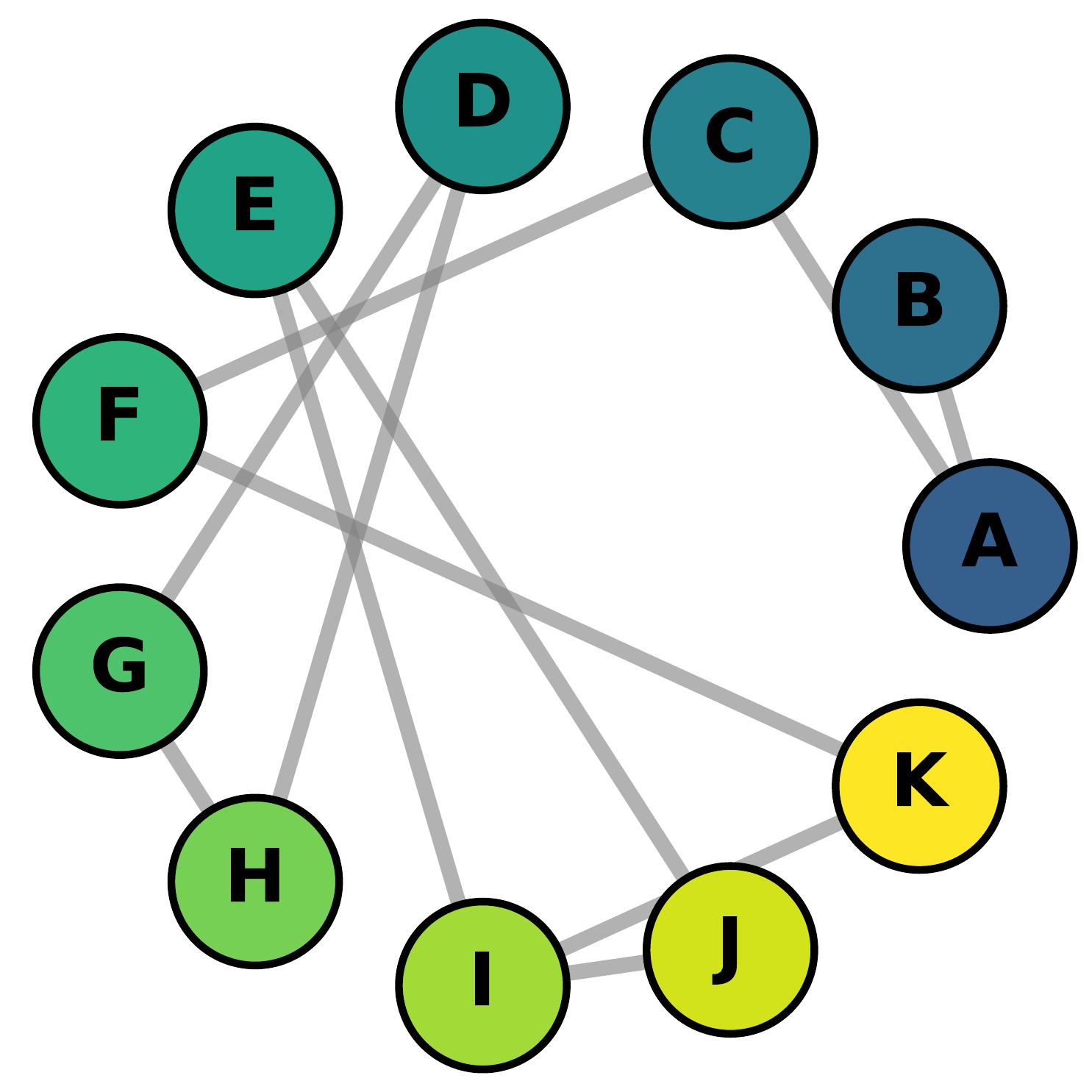}
    }
    \caption{Dog.}
    \label{fig:appx-dog}
\end{figure*}

\begin{figure*}[th]
    \centering
    \subfigure[Ground Truth]{
        \includegraphics[width=0.23\linewidth]{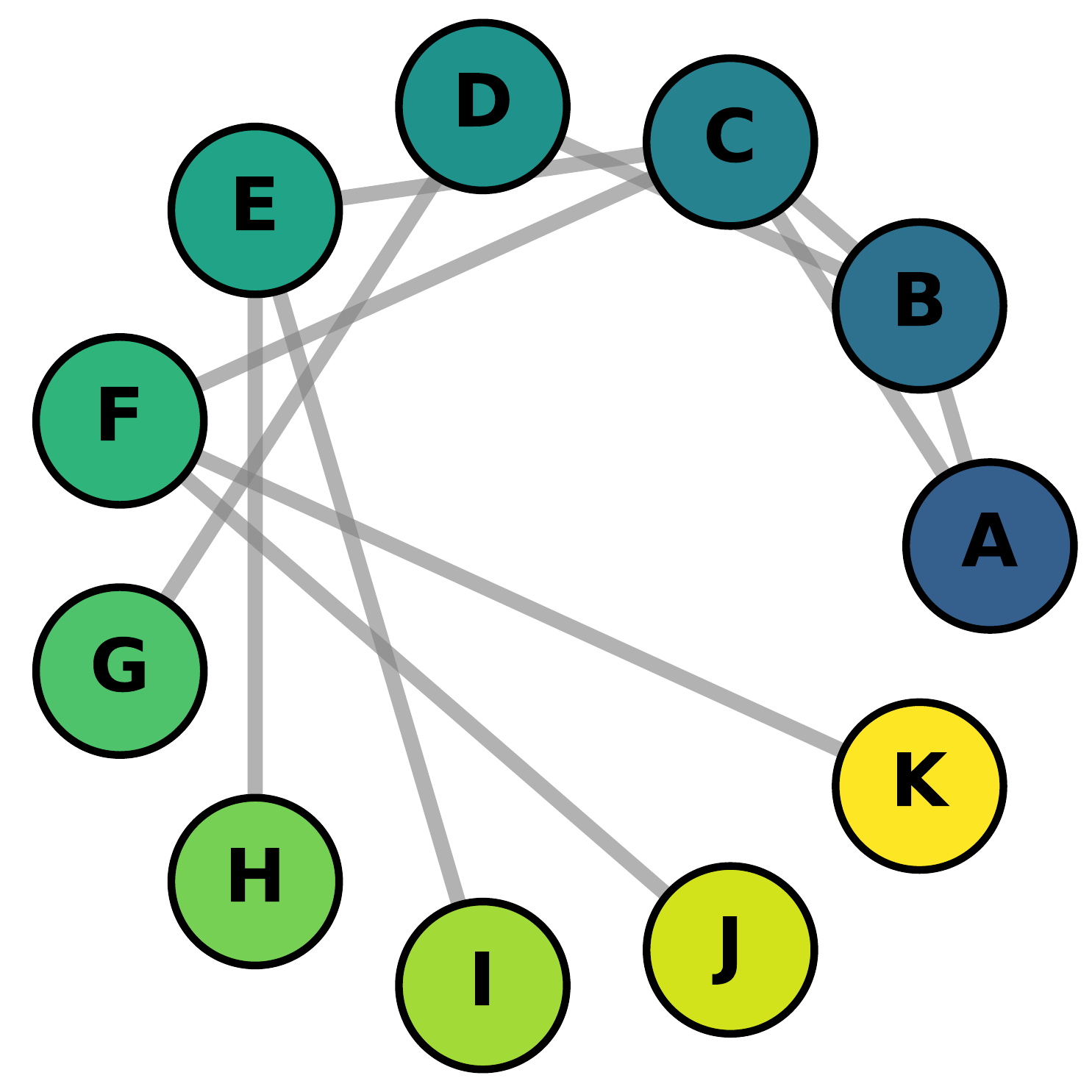}
    }
    \subfigure[LLAMA-2-70B]{
        \includegraphics[width=0.23\linewidth]{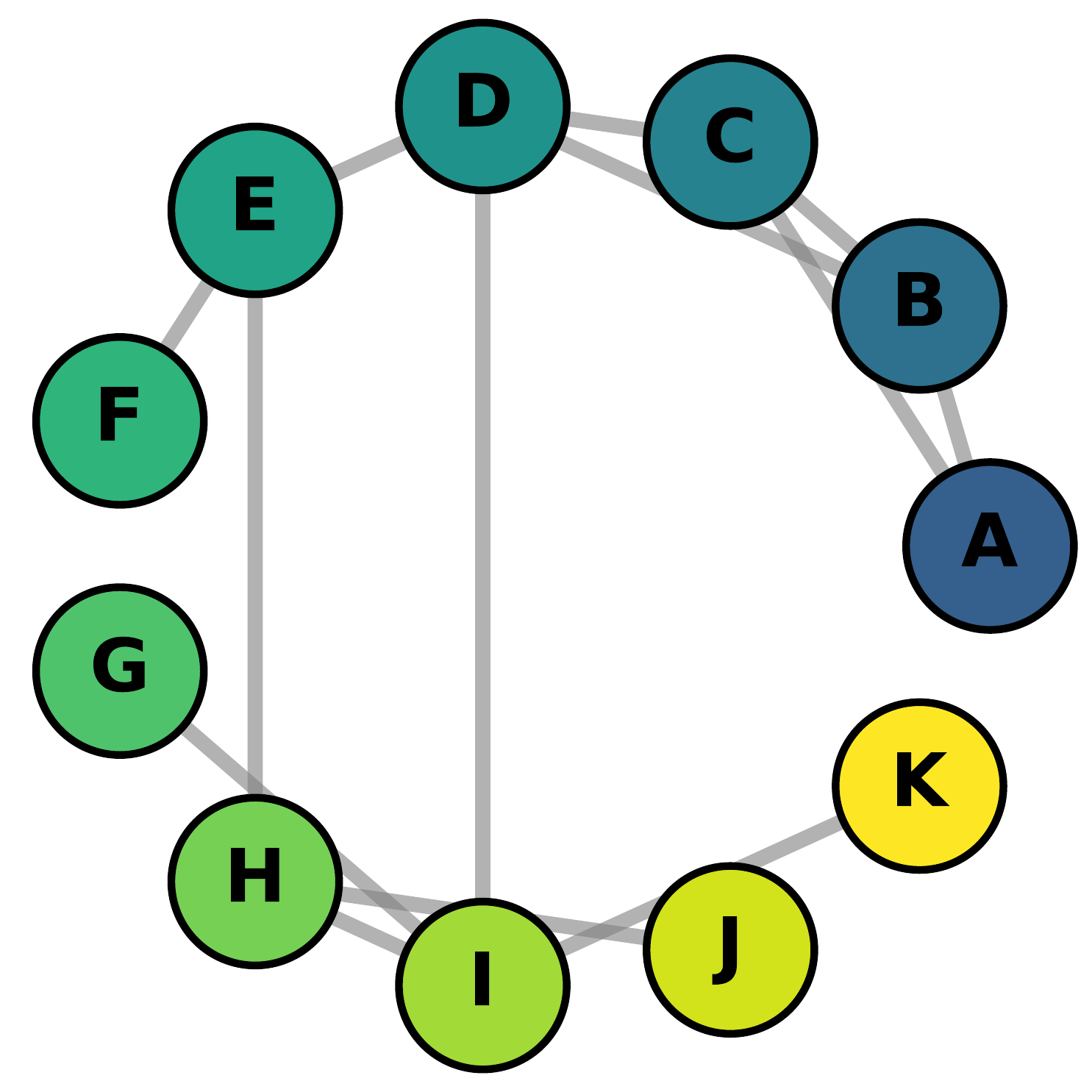}
    }
    \subfigure[GPT-3.5]{
        \includegraphics[width=0.23\linewidth]{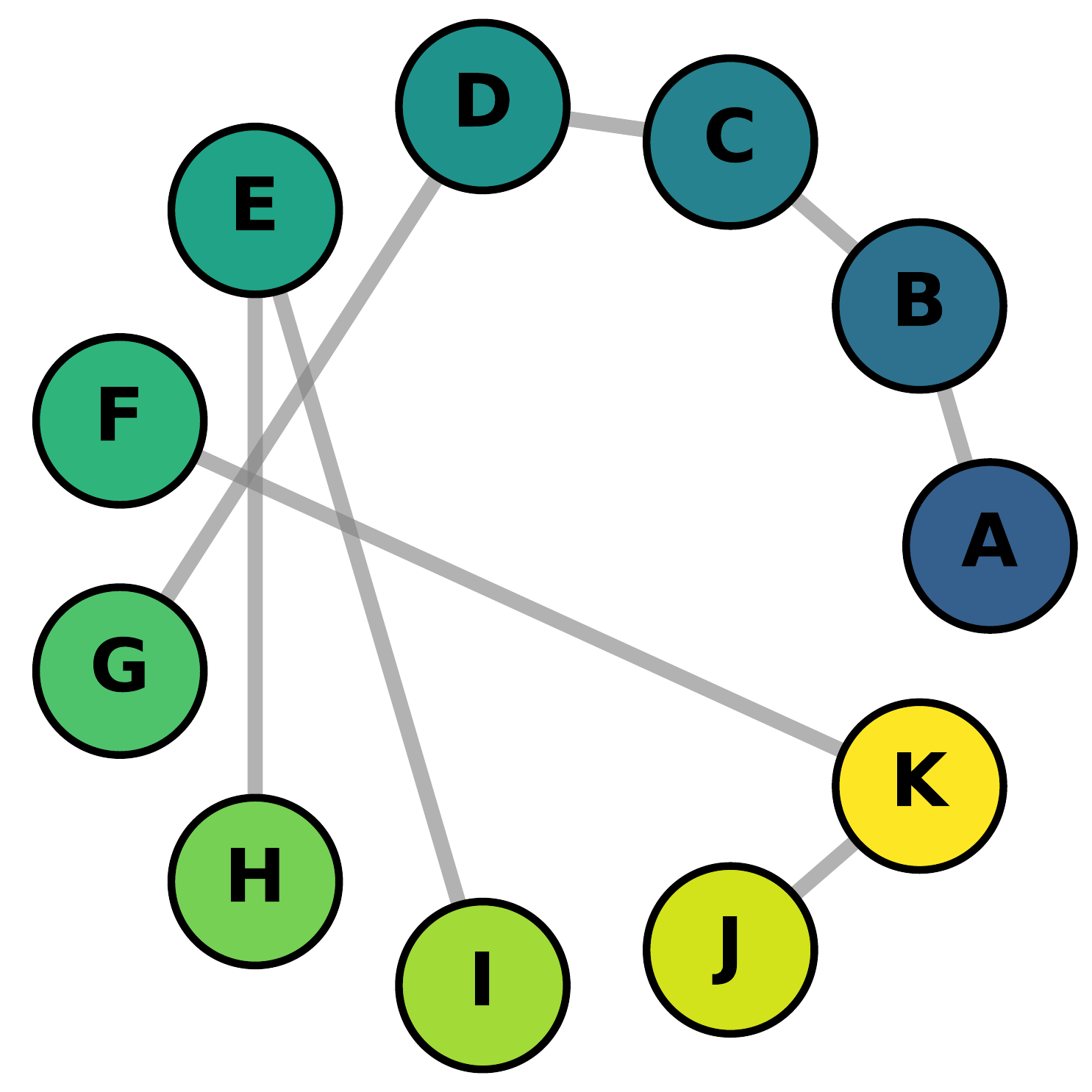}
    }
    \subfigure[GPT-4]{
        \includegraphics[width=0.23\linewidth]{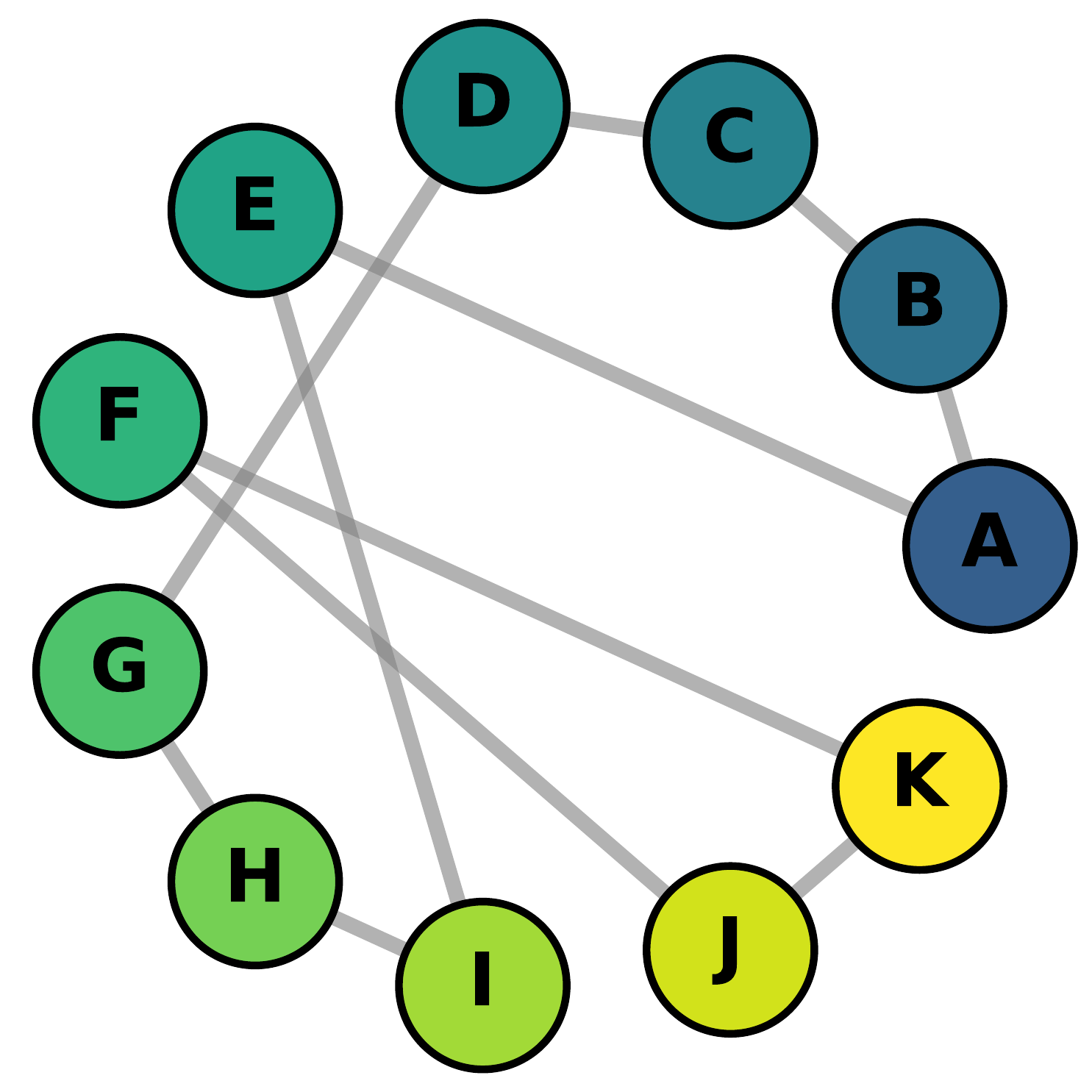}
    }
    \caption{Fly.}
    \label{fig:appx-fly}
\end{figure*}

\begin{figure*}[th]
    \centering
    \subfigure[Ground Truth]{
        \includegraphics[width=0.23\linewidth]{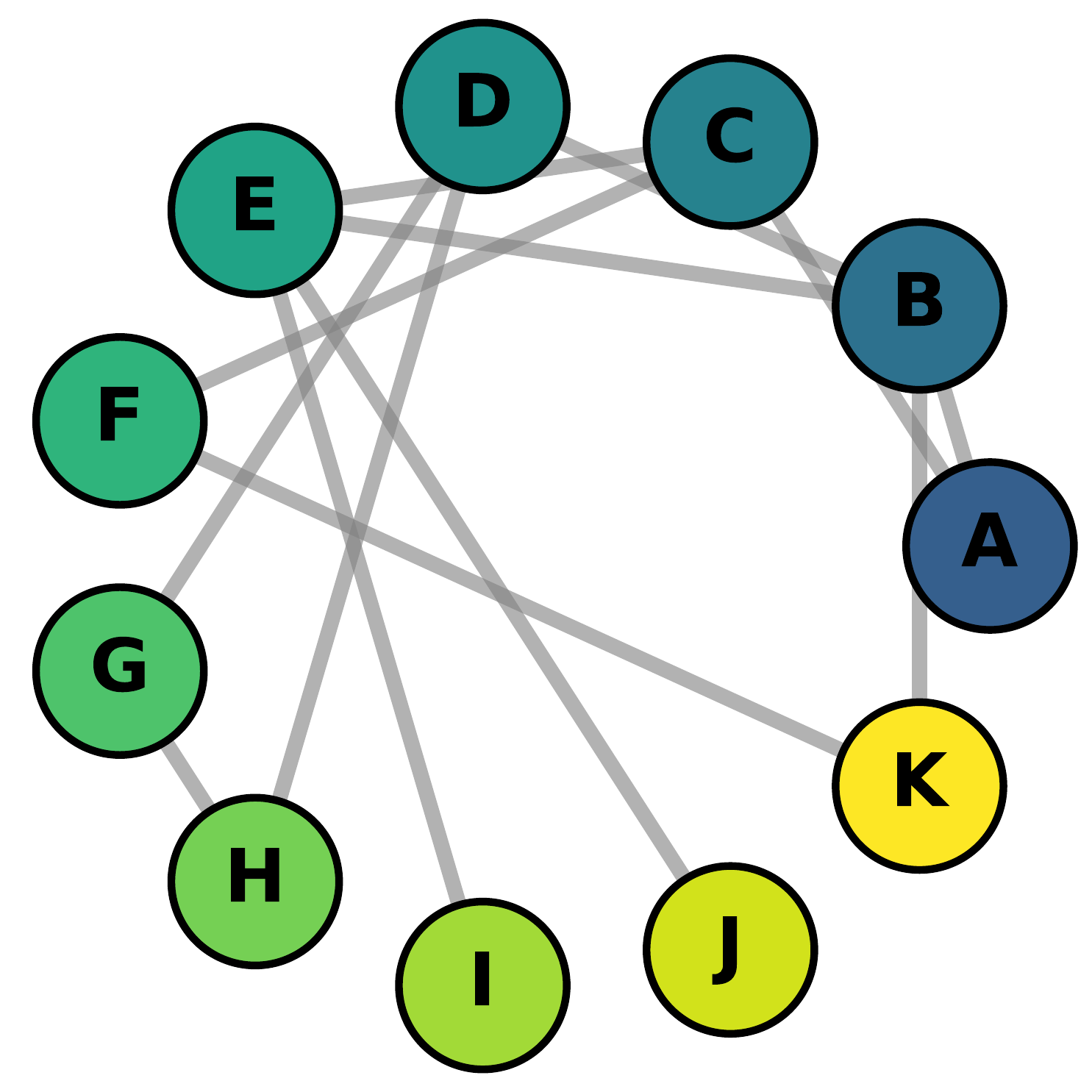}
    }
    \subfigure[LLAMA-2-70B]{
        \includegraphics[width=0.23\linewidth]{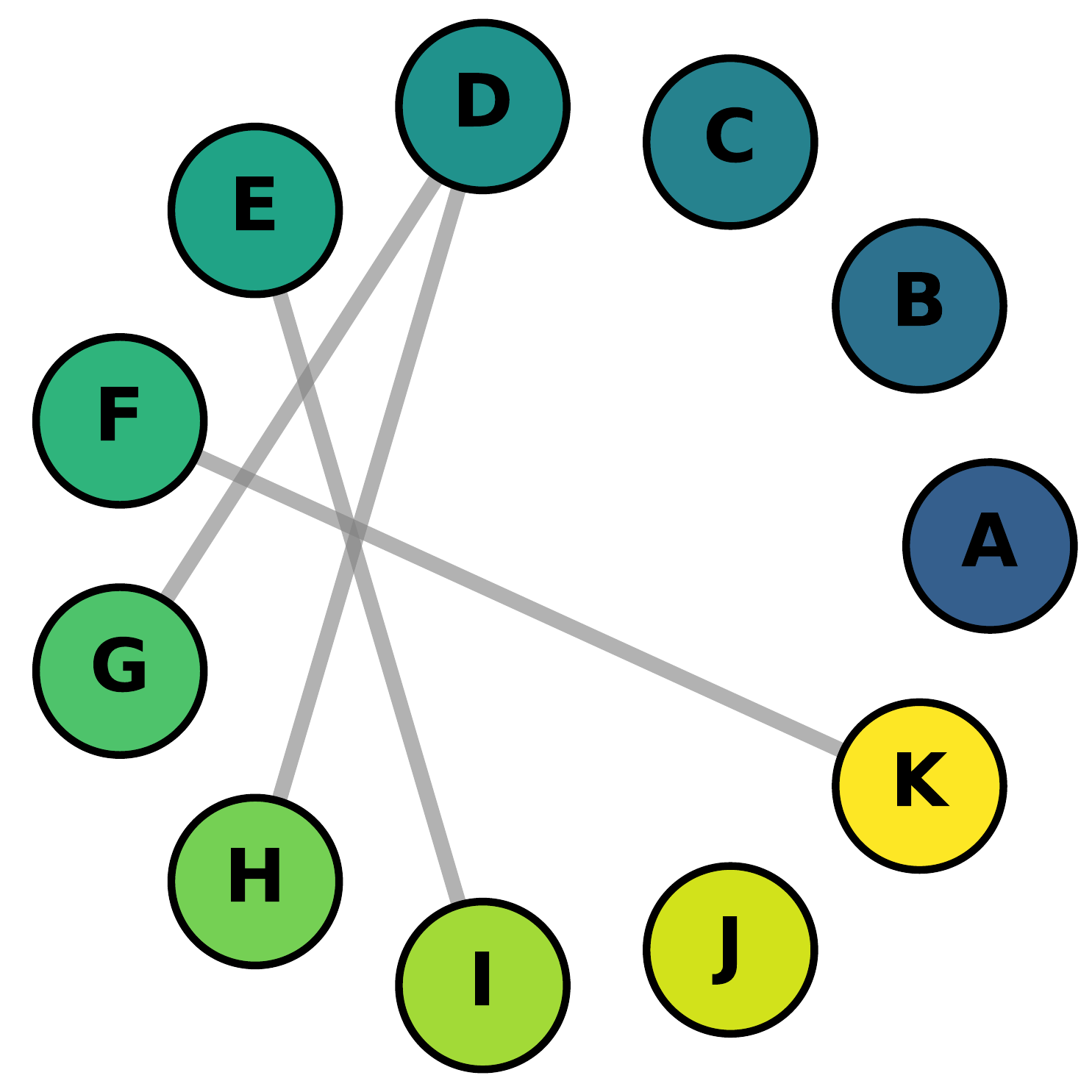}
    }
    \subfigure[GPT-3.5]{
        \includegraphics[width=0.23\linewidth]{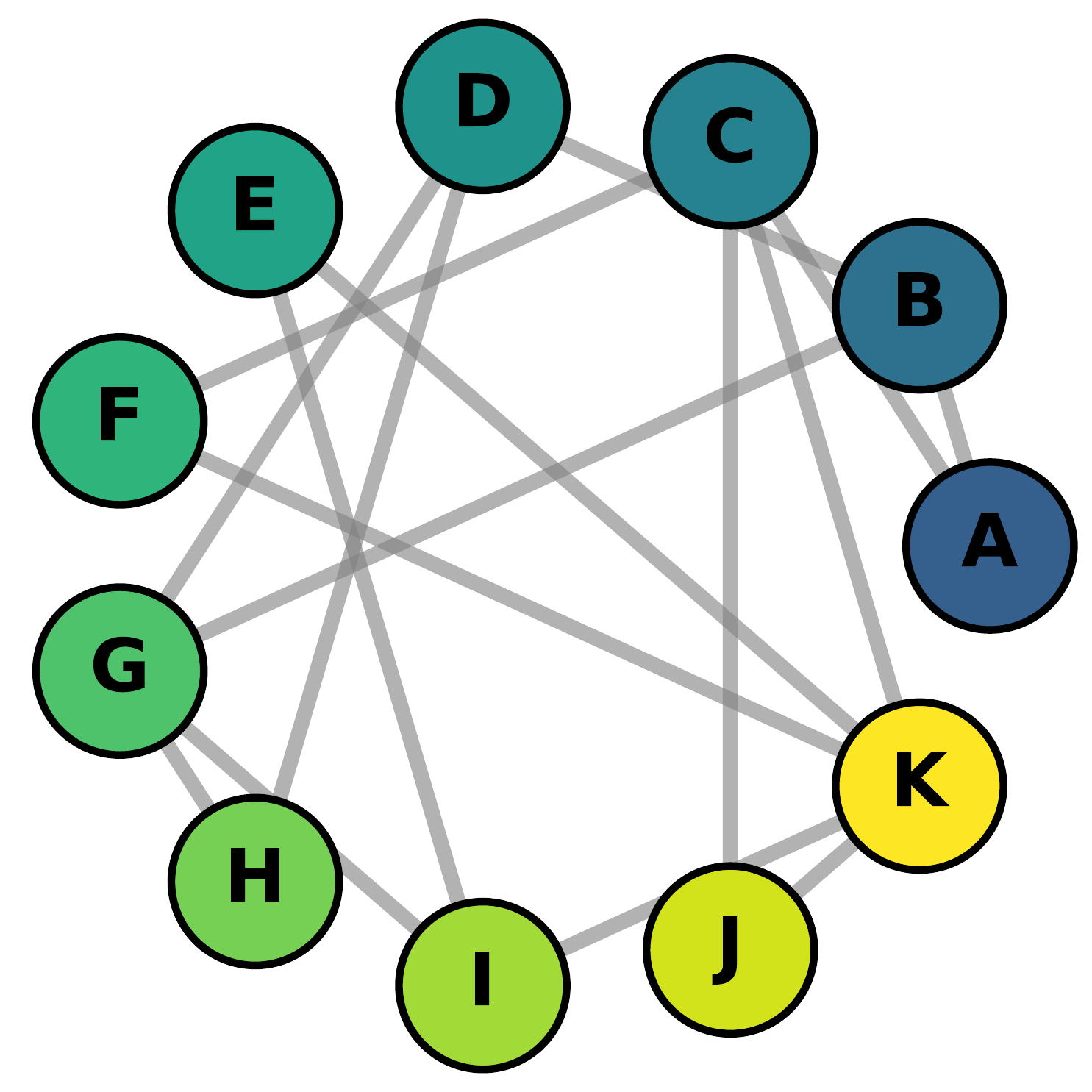}
    }
    \subfigure[GPT-4]{
        \includegraphics[width=0.23\linewidth]{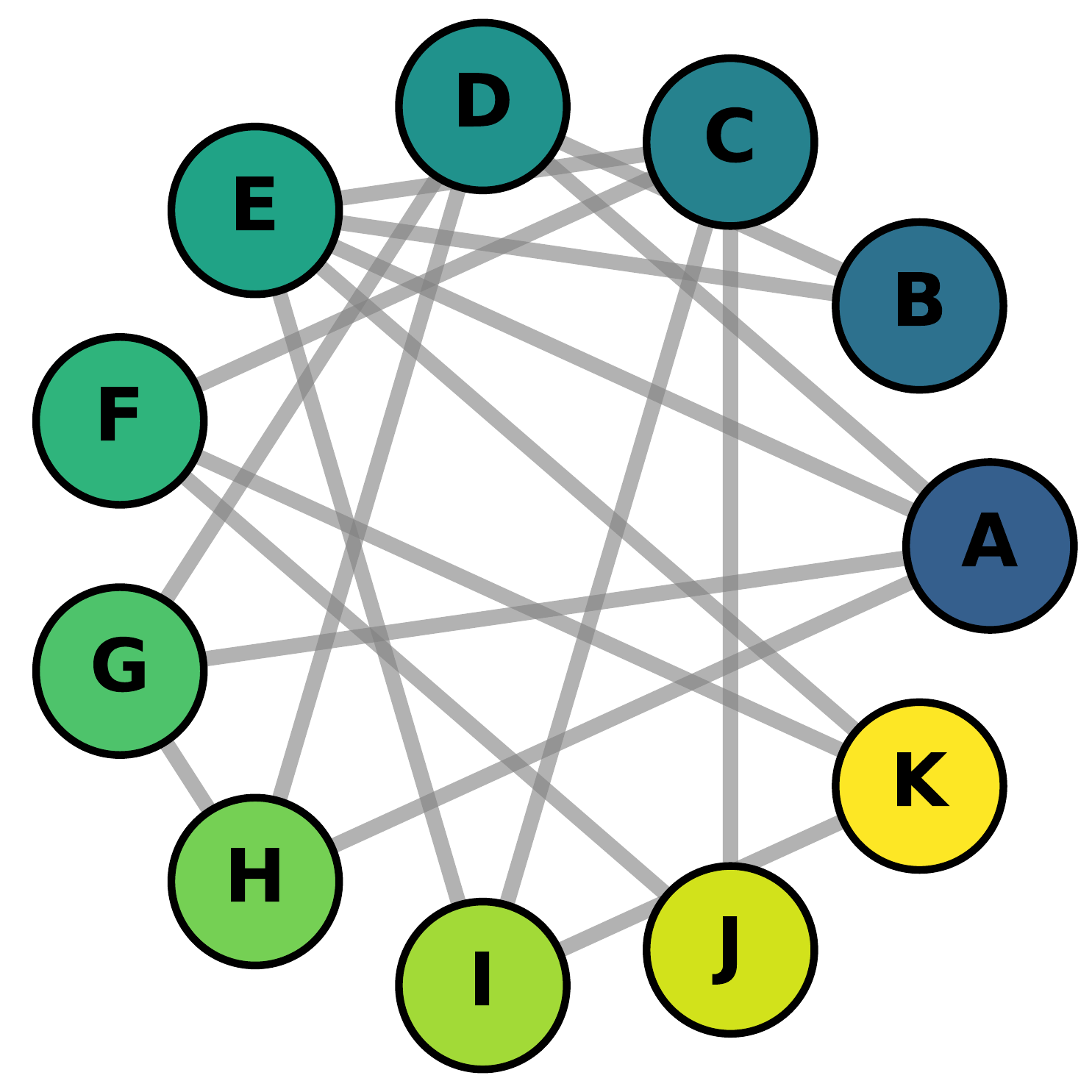}
    }
    \caption{Human.}
    \label{fig:appx-human}
\end{figure*}

\begin{figure*}[th]
    \centering
    \subfigure[Ground Truth]{
        \includegraphics[width=0.23\linewidth]{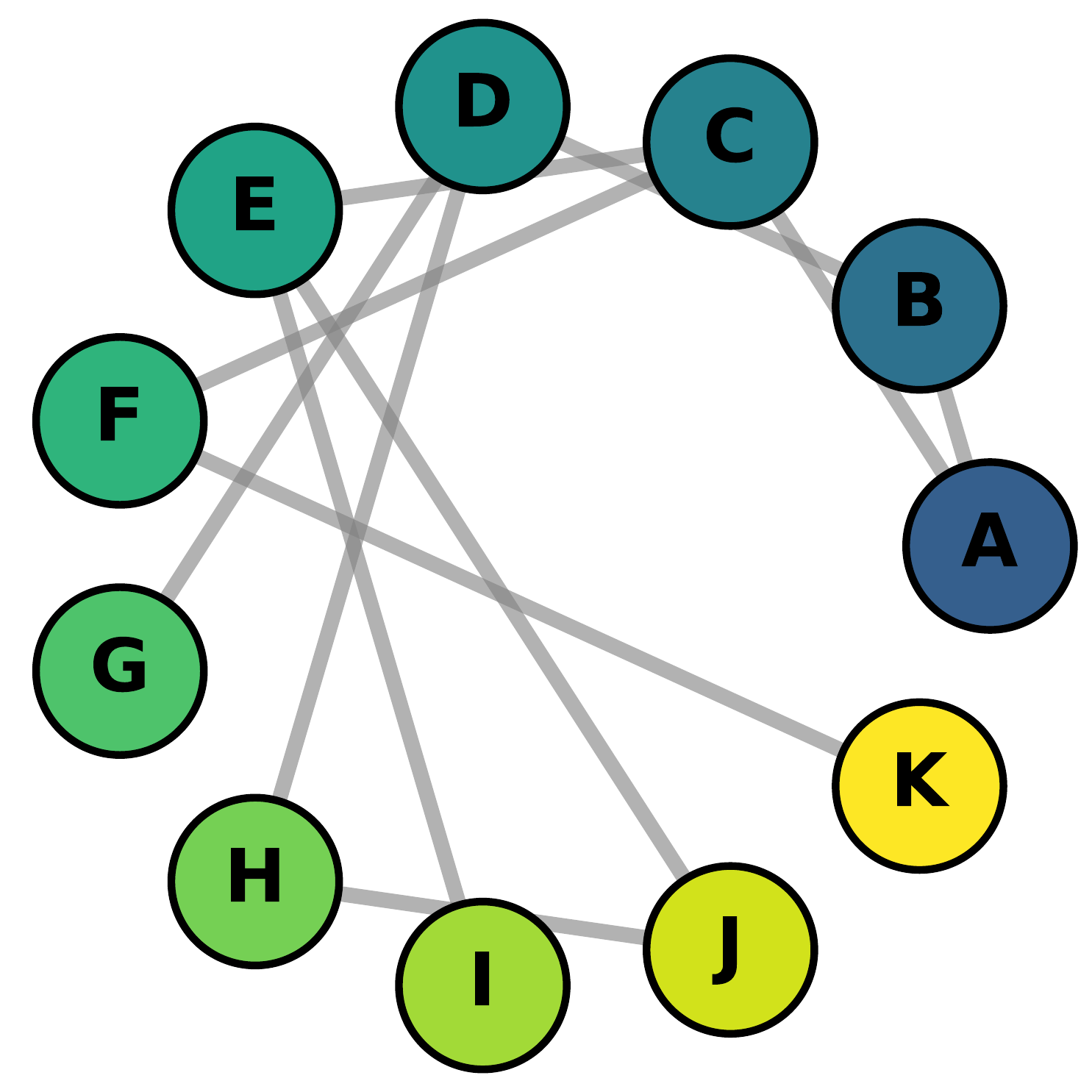}
    }
    \subfigure[LLAMA-2-70B]{
        \includegraphics[width=0.23\linewidth]{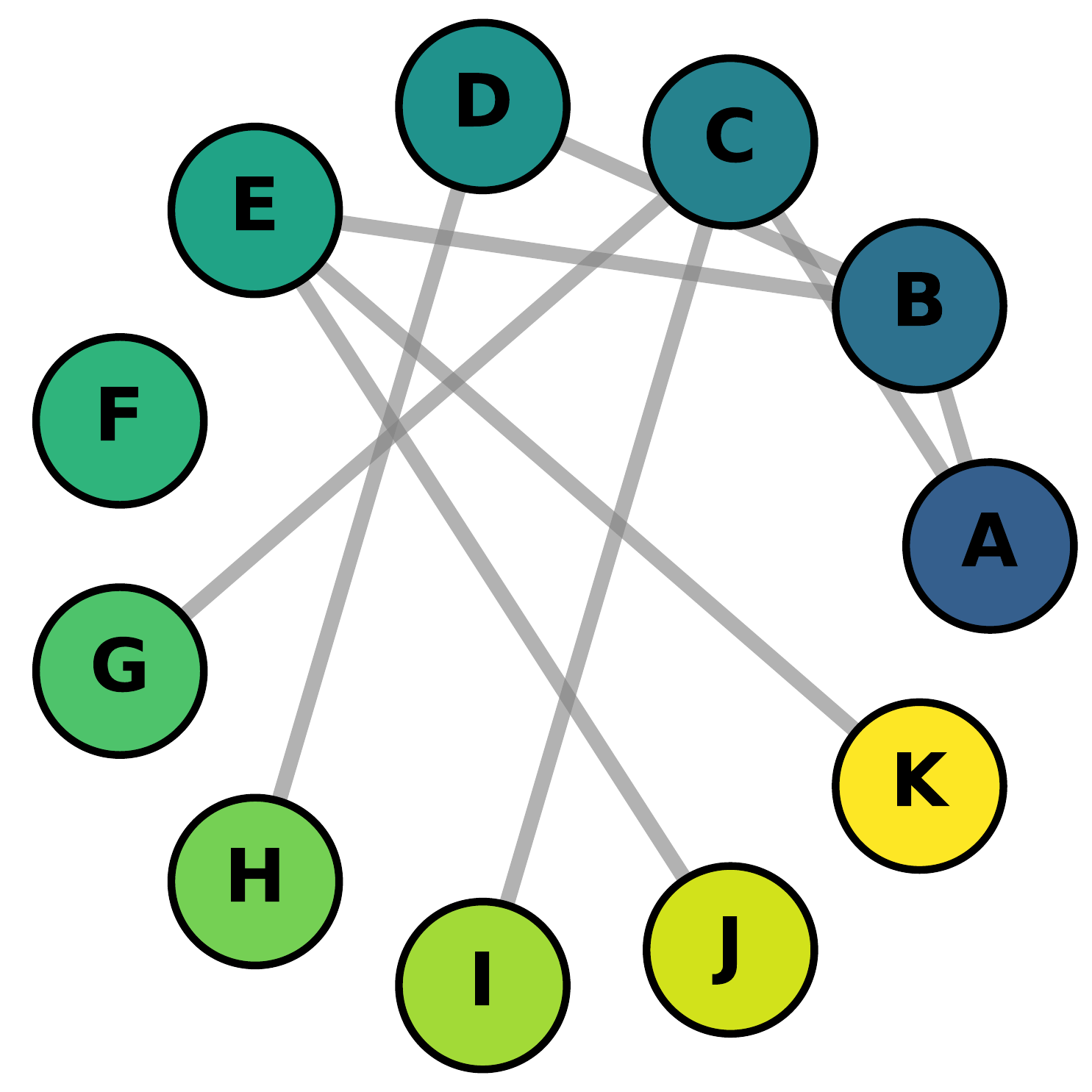}
    }
    \subfigure[GPT-3.5]{
        \includegraphics[width=0.23\linewidth]{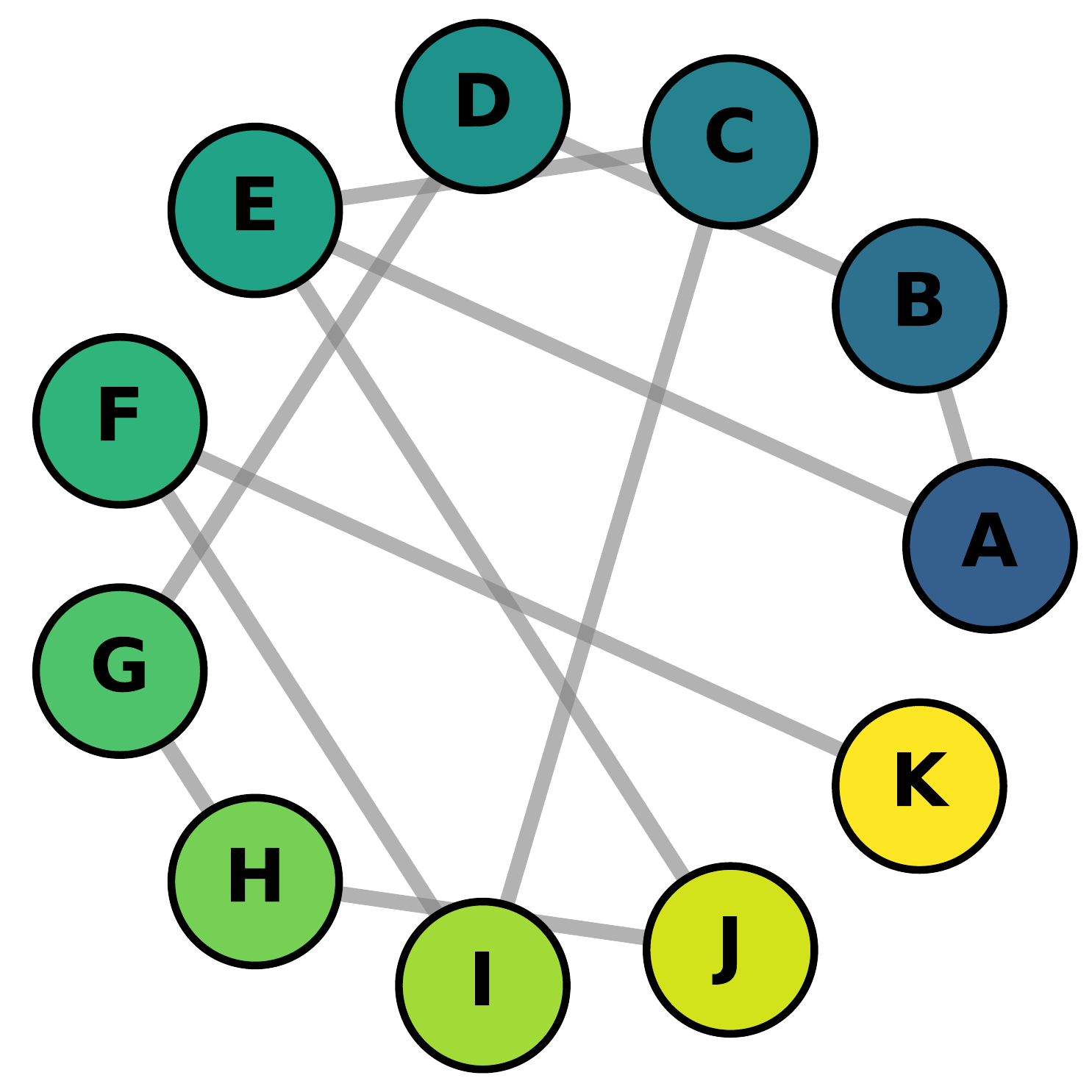}
    }
    \subfigure[GPT-4]{
        \includegraphics[width=0.23\linewidth]{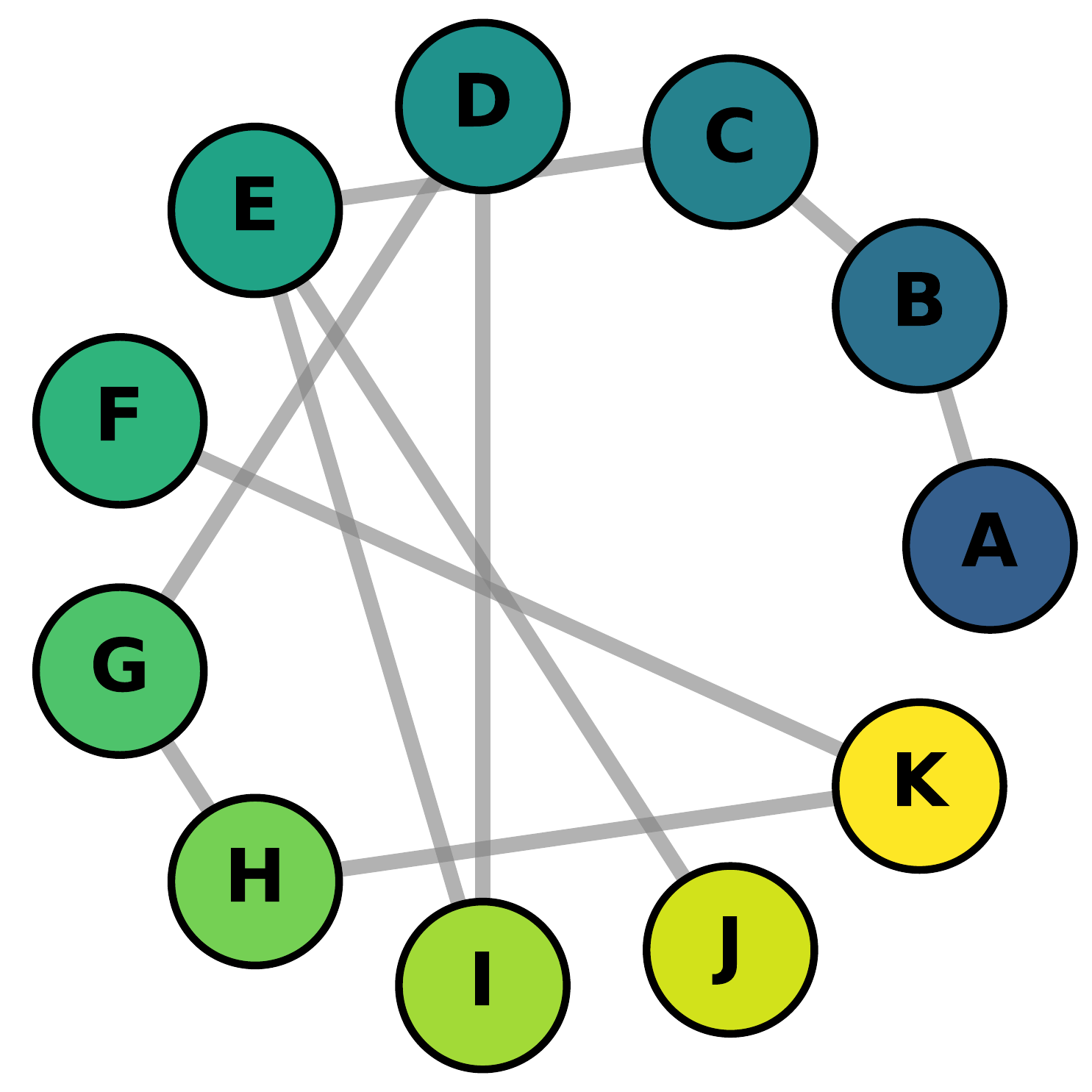}
    }
    \caption{Jacket.}
    \label{fig:appx-jacket}
\end{figure*}

\begin{figure*}[th]
    \centering
    \subfigure[Ground Truth]{
        \includegraphics[width=0.23\linewidth]{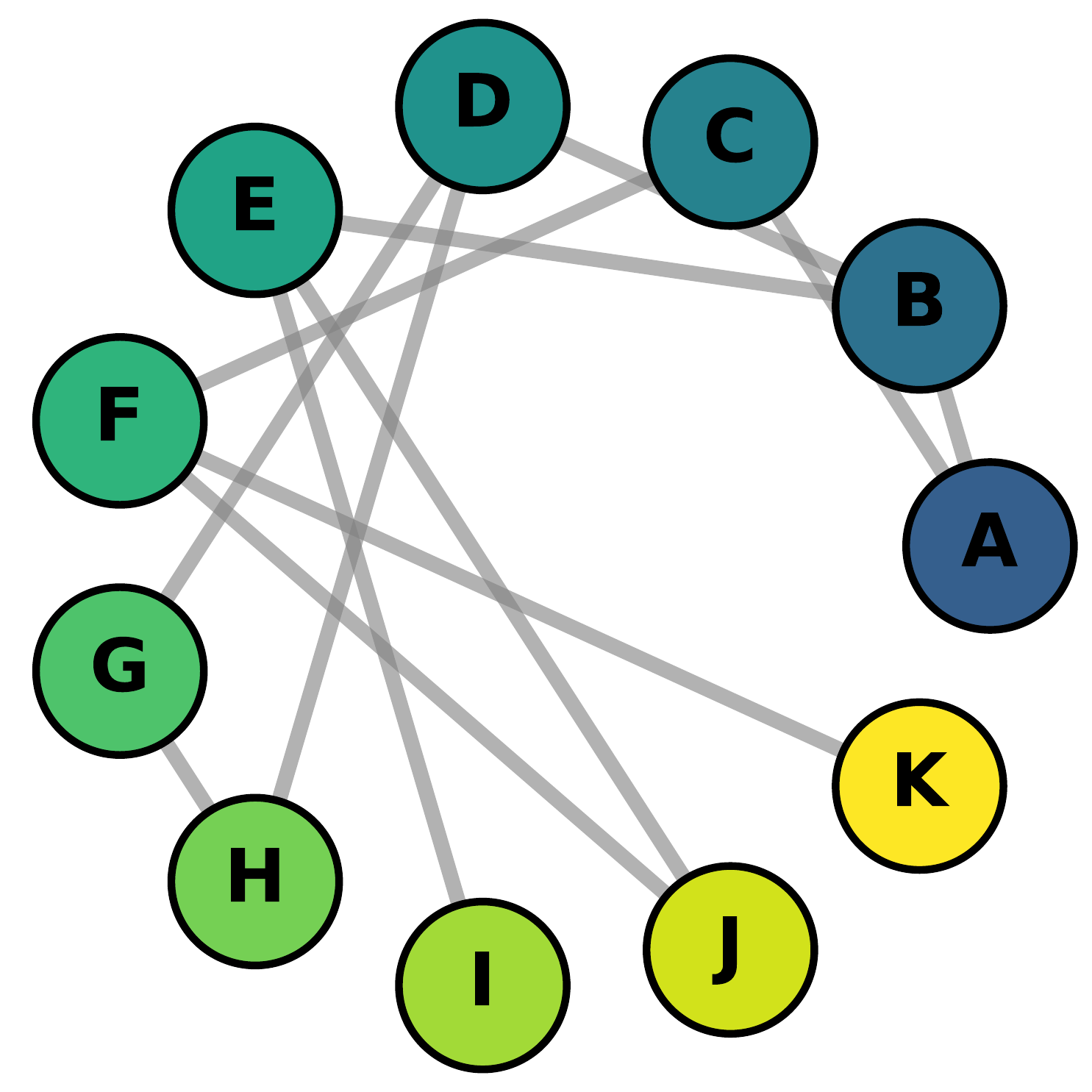}
    }
    \subfigure[LLAMA-2-70B]{
        \includegraphics[width=0.23\linewidth]{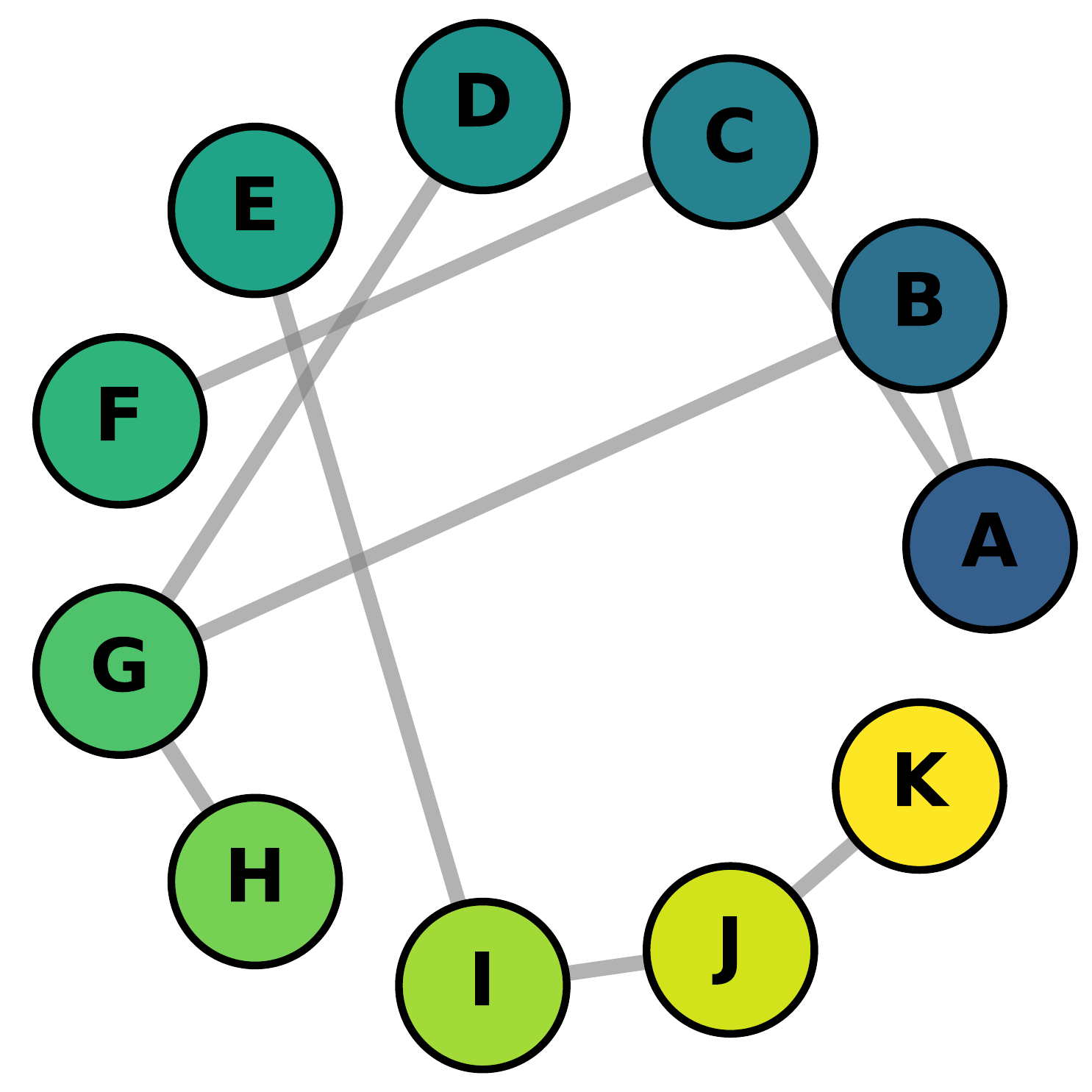}
    }
    \subfigure[GPT-3.5]{
        \includegraphics[width=0.23\linewidth]{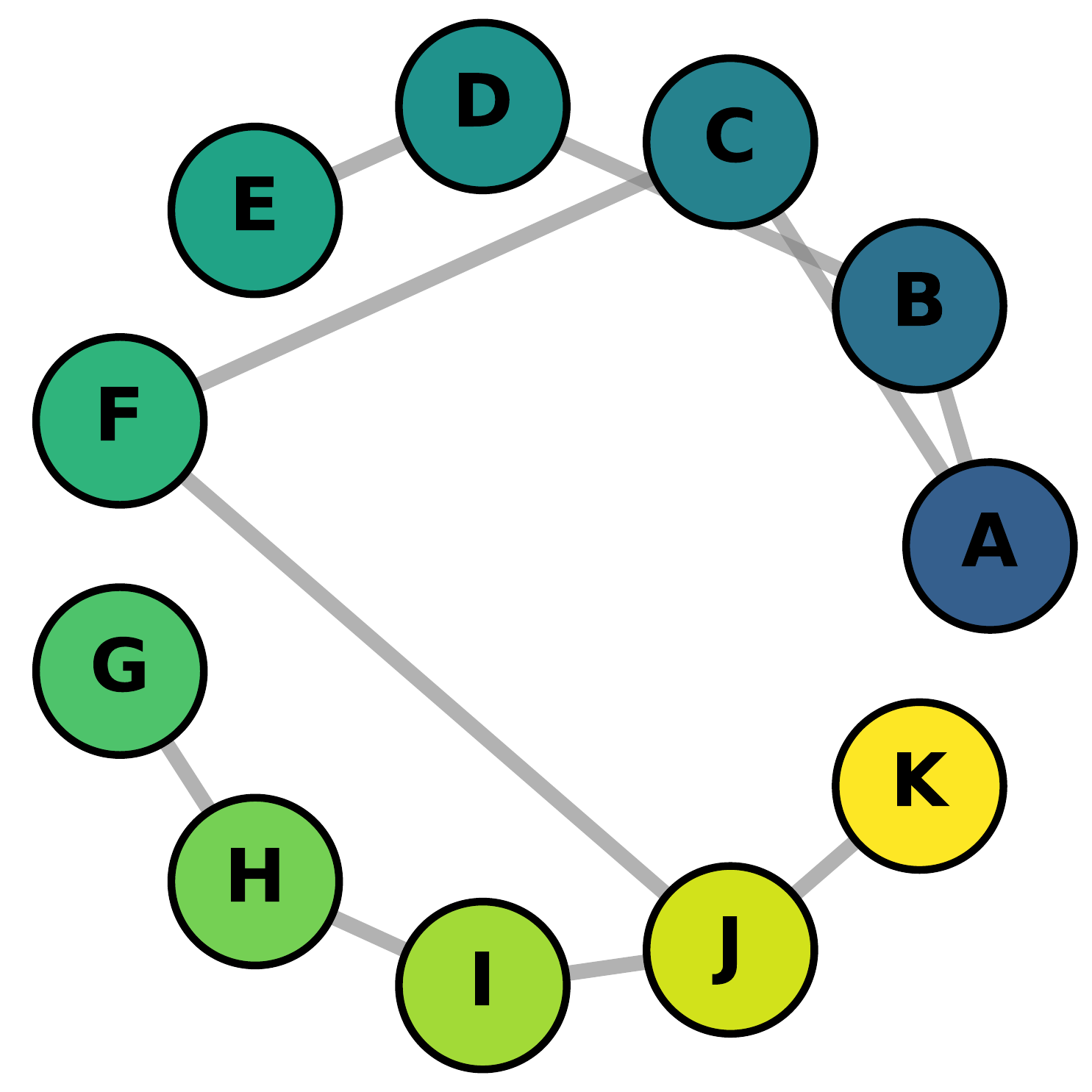}
    }
    \subfigure[GPT-4]{
        \includegraphics[width=0.23\linewidth]{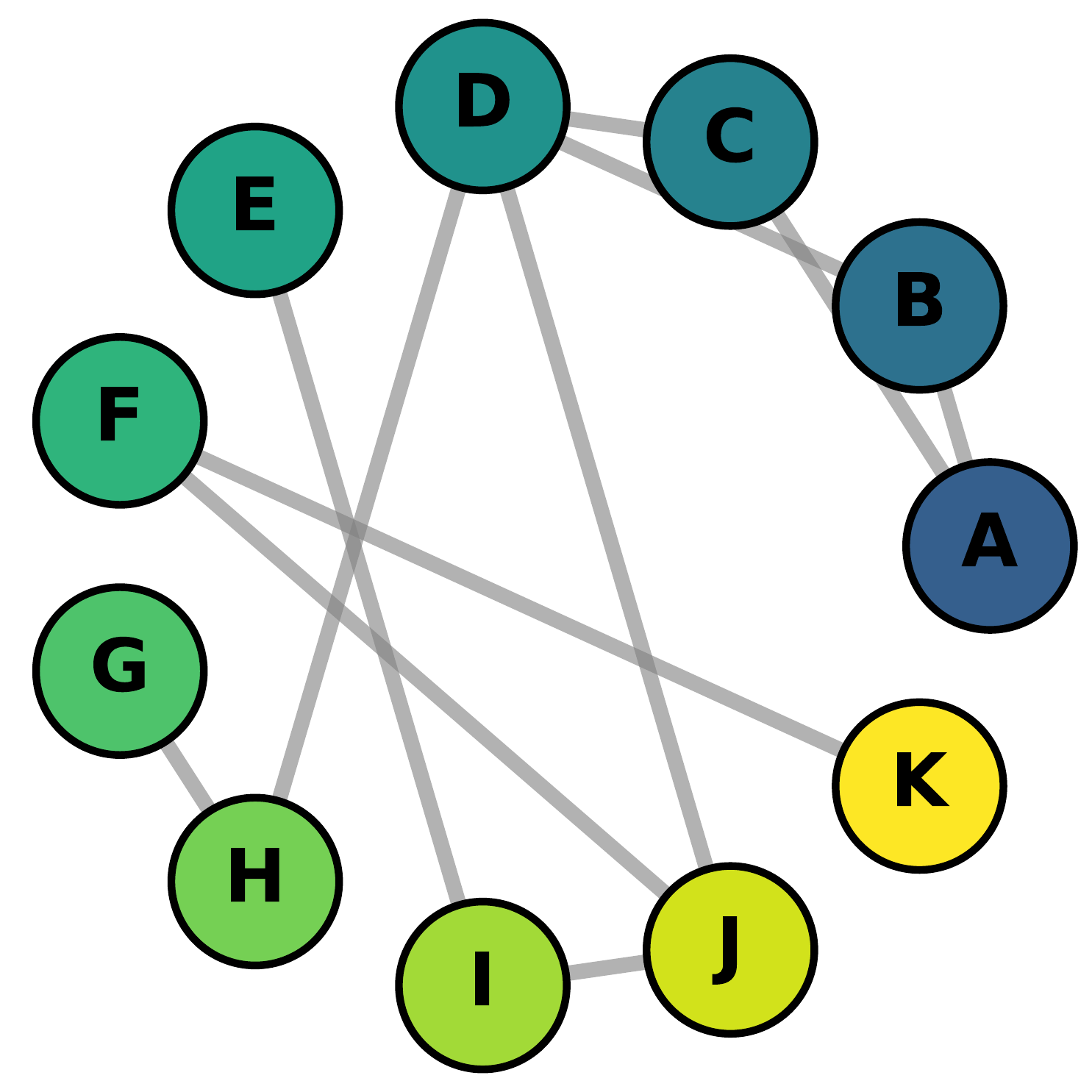}
    }
    \caption{Orange.}
    \label{fig:appx-orange}
\end{figure*}

\begin{figure*}[th]
    \centering
    \subfigure[Ground Truth]{
        \includegraphics[width=0.23\linewidth]{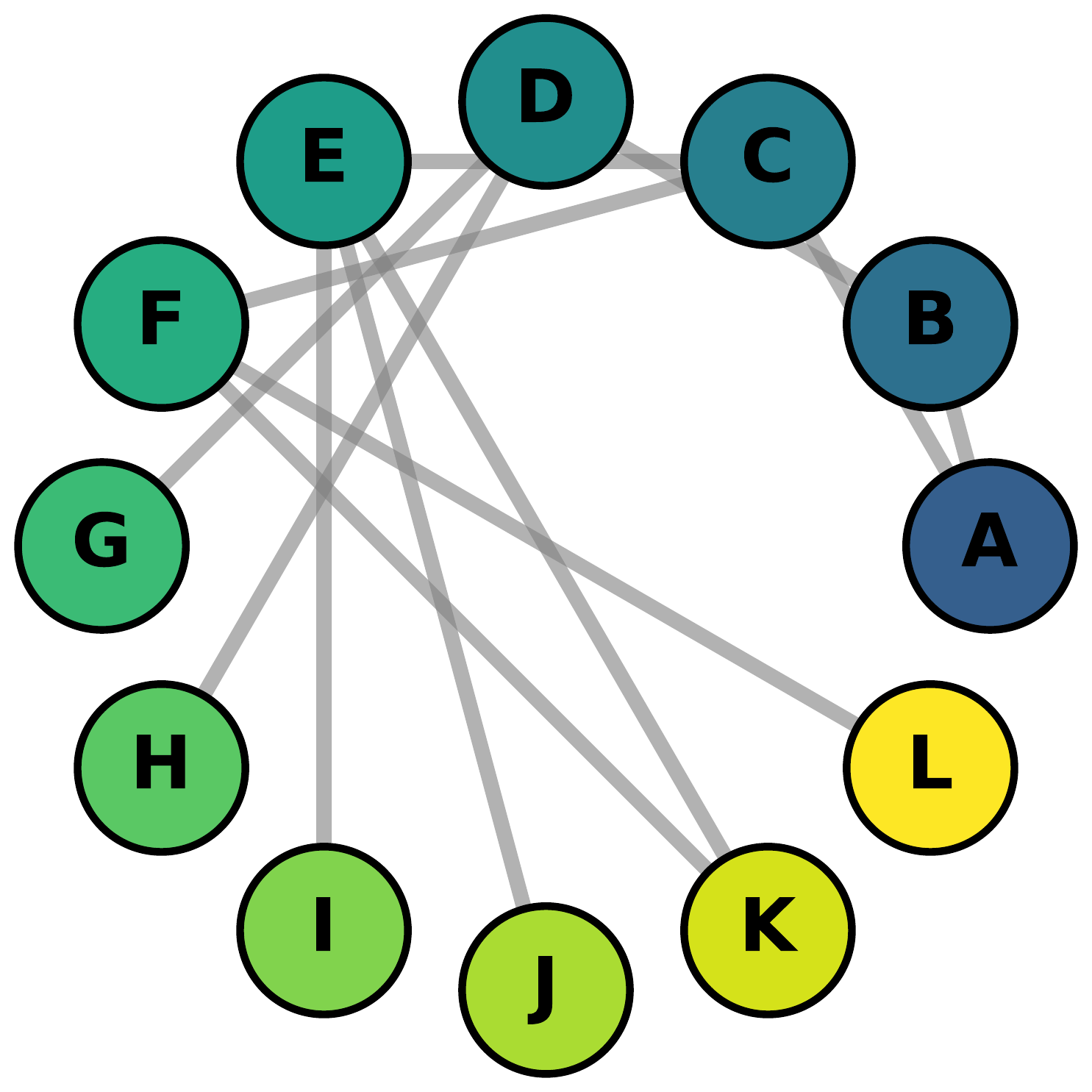}
    }
    \subfigure[LLAMA-2-70B]{
        \includegraphics[width=0.23\linewidth]{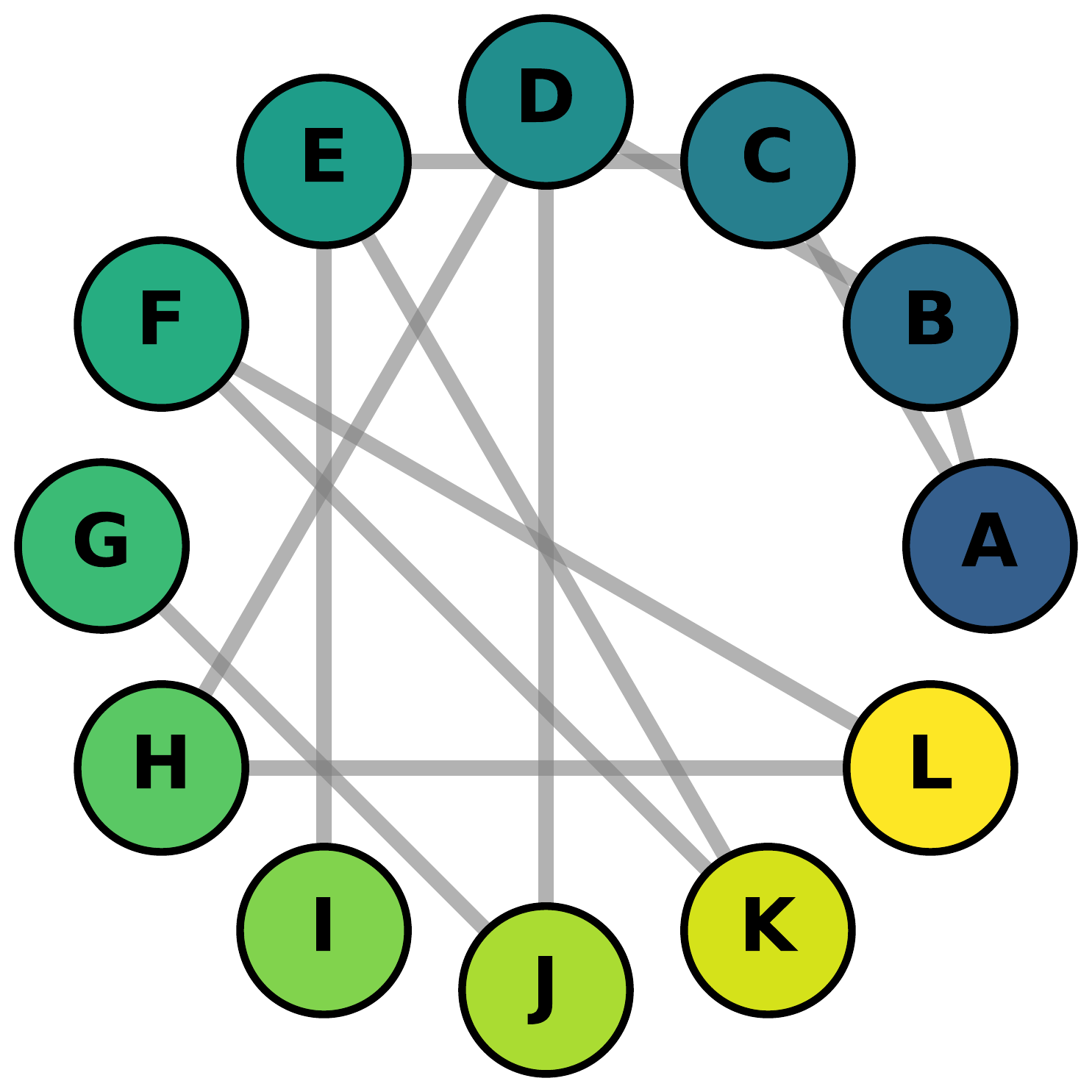}
    }
    \subfigure[GPT-3.5]{
        \includegraphics[width=0.23\linewidth]{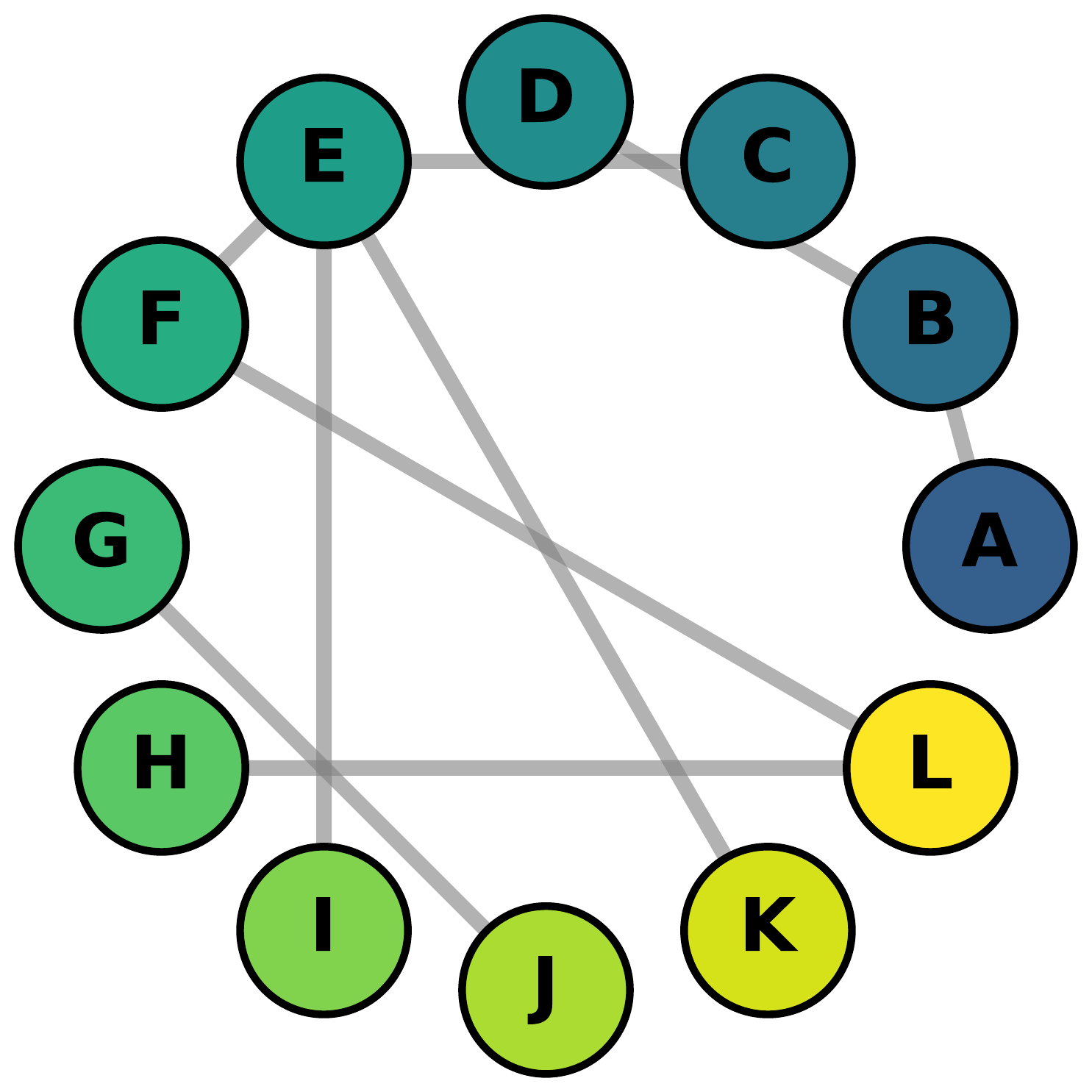}
    }
    \subfigure[GPT-4]{
        \includegraphics[width=0.23\linewidth]{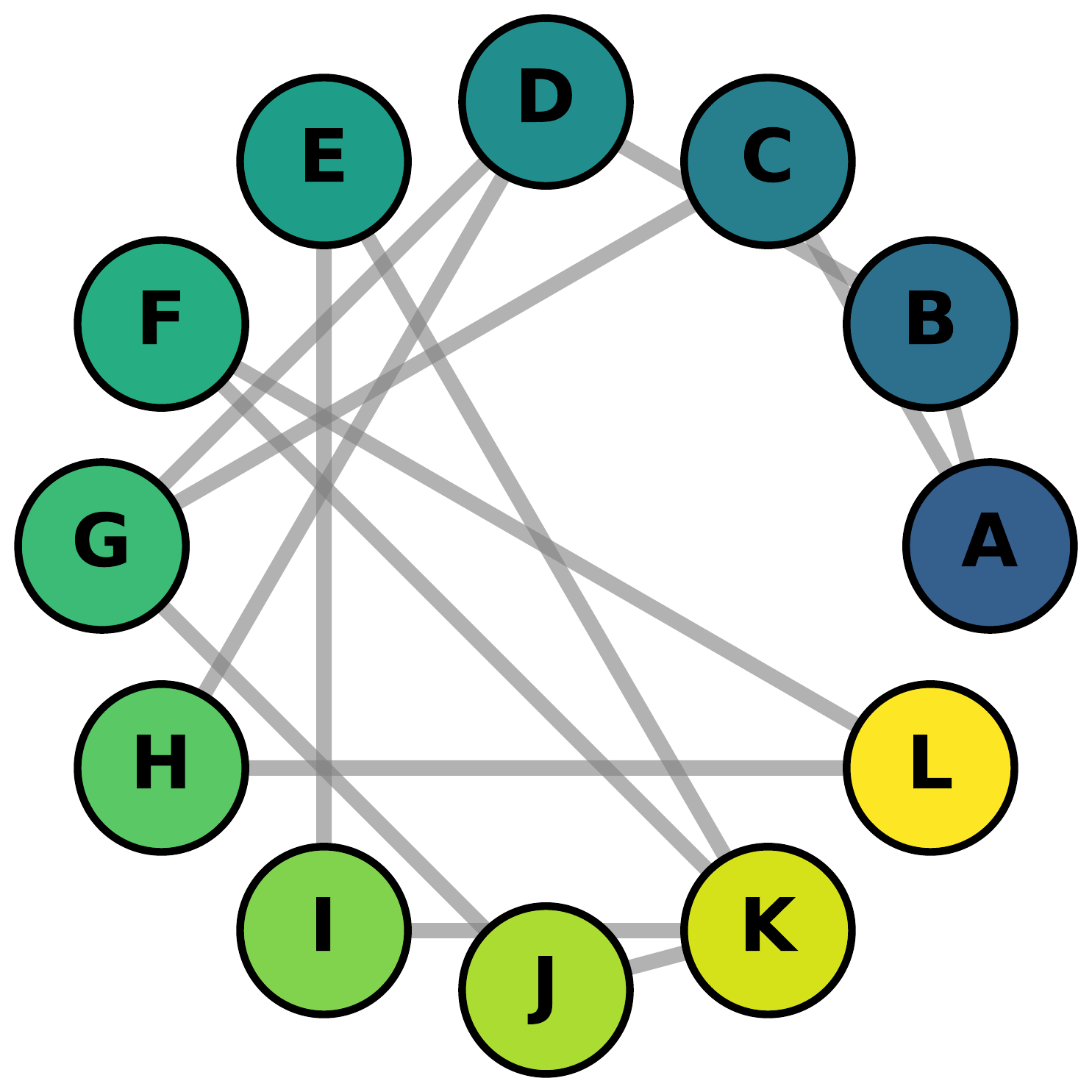}
    }
    \caption{Paper.}
    \label{fig:appx-paper}
\end{figure*}

\begin{figure*}[th]
    \centering
    \subfigure[Ground Truth]{
        \includegraphics[width=0.23\linewidth]{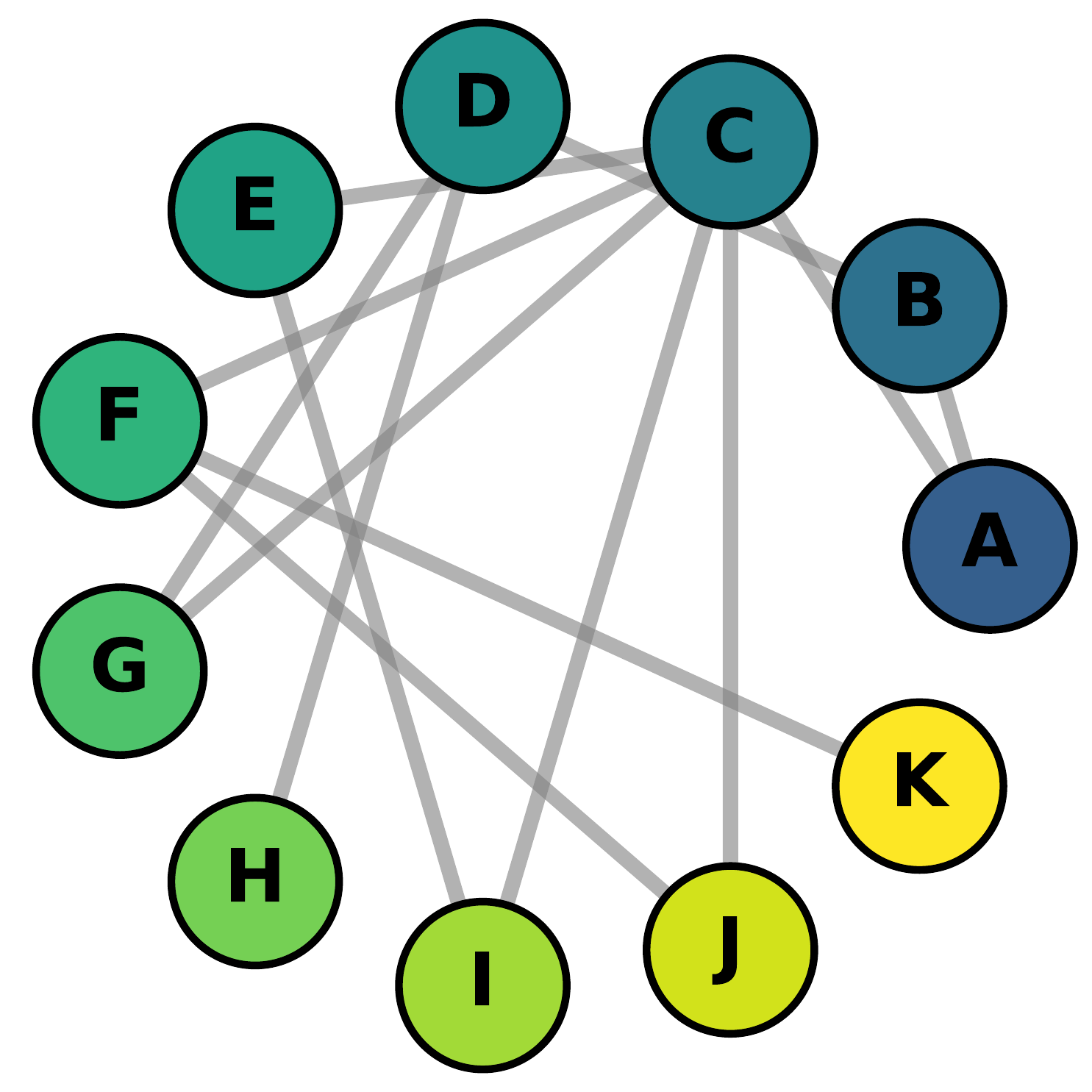}
    }
    \subfigure[LLAMA-2-70B]{
        \includegraphics[width=0.23\linewidth]{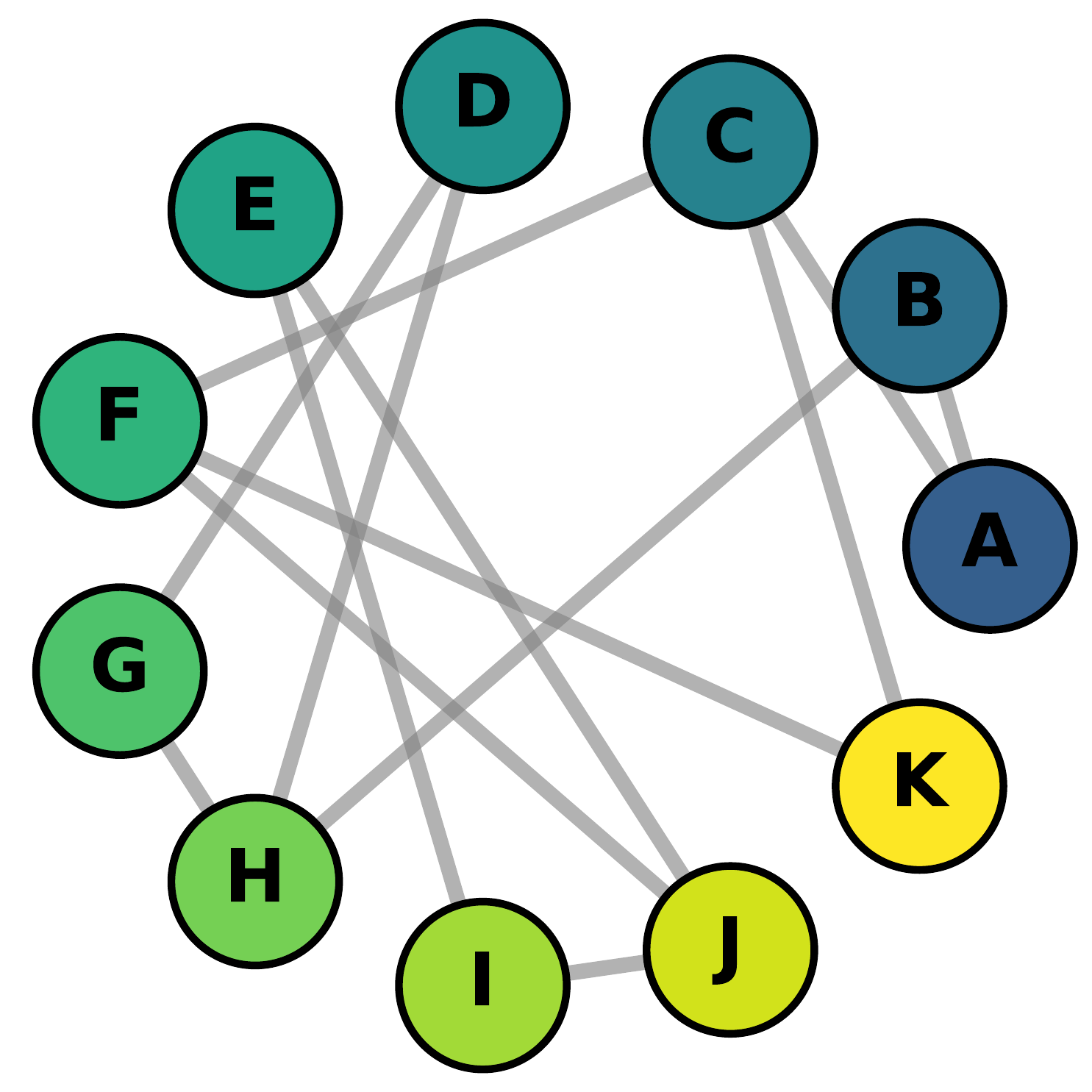}
    }
    \subfigure[GPT-3.5]{
        \includegraphics[width=0.23\linewidth]{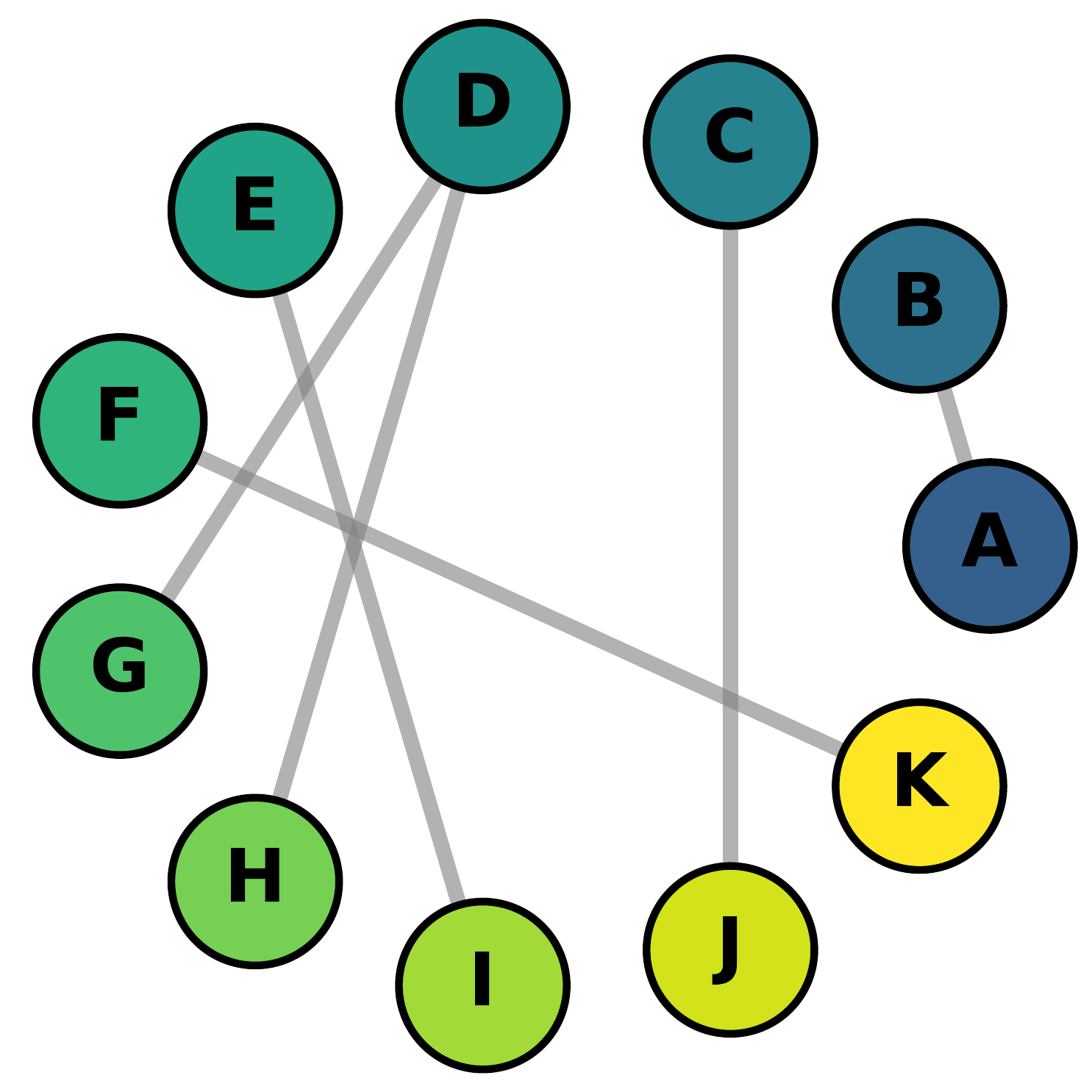}
    }
    \subfigure[GPT-4]{
        \includegraphics[width=0.23\linewidth]{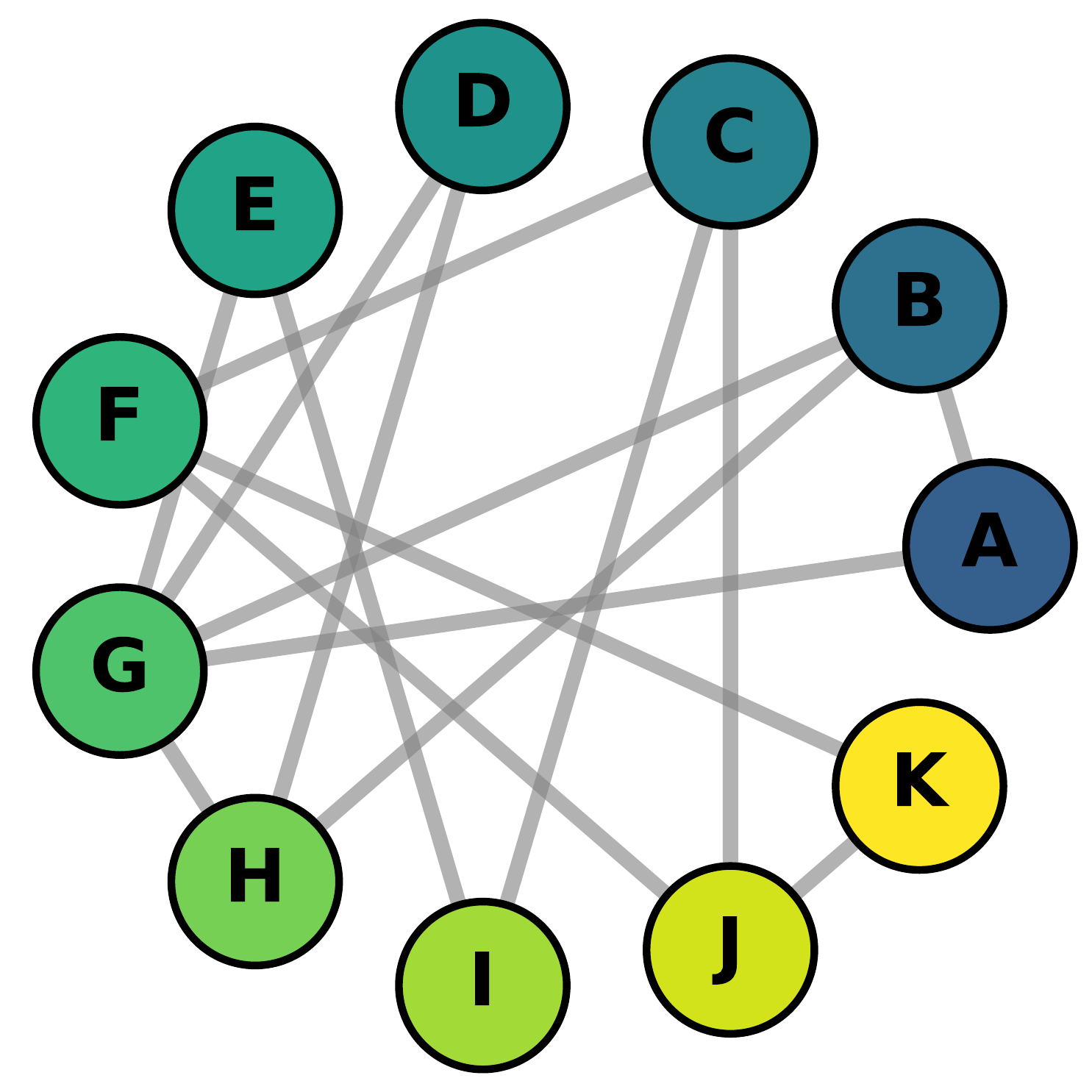}
    }
    \caption{Sea.}
    \label{fig:appx-sea}
\end{figure*}

\begin{figure*}[th]
    \centering
    \subfigure[Ground Truth]{
        \includegraphics[width=0.23\linewidth]{figures/table_max_depth=3_max_width=2_figure_direct_summary_Ground_Truth.pdf}
    }
    \subfigure[LLAMA-2-70B]{
        \includegraphics[width=0.23\linewidth]{figures/table_max_depth=3_max_width=2_figure_direct_summary_LLAMA-2-70B.pdf}
    }
    \subfigure[GPT-3.5]{
        \includegraphics[width=0.23\linewidth]{figures/table_max_depth=3_max_width=2_figure_direct_summary_GPT-3.5.pdf}
    }
    \subfigure[GPT-4]{
        \includegraphics[width=0.23\linewidth]{figures/table_max_depth=3_max_width=2_figure_direct_summary_GPT-4.pdf}
    }
    \caption{Table.}
    \label{fig:appx-table}
\end{figure*}

\begin{figure*}[th]
    \centering
    \subfigure[Ground Truth]{
        \includegraphics[width=0.23\linewidth]{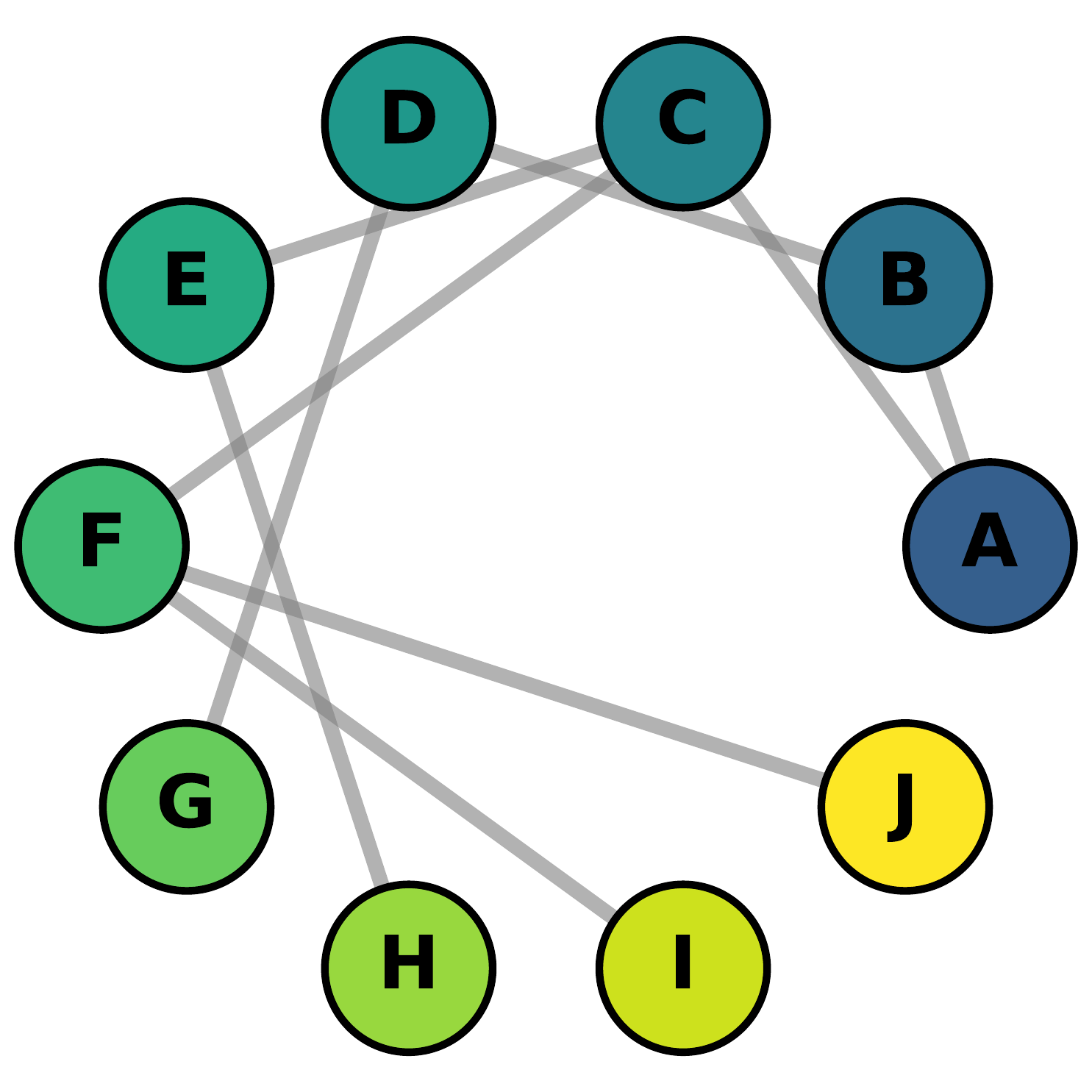}
    }
    \subfigure[LLAMA-2-70B]{
        \includegraphics[width=0.23\linewidth]{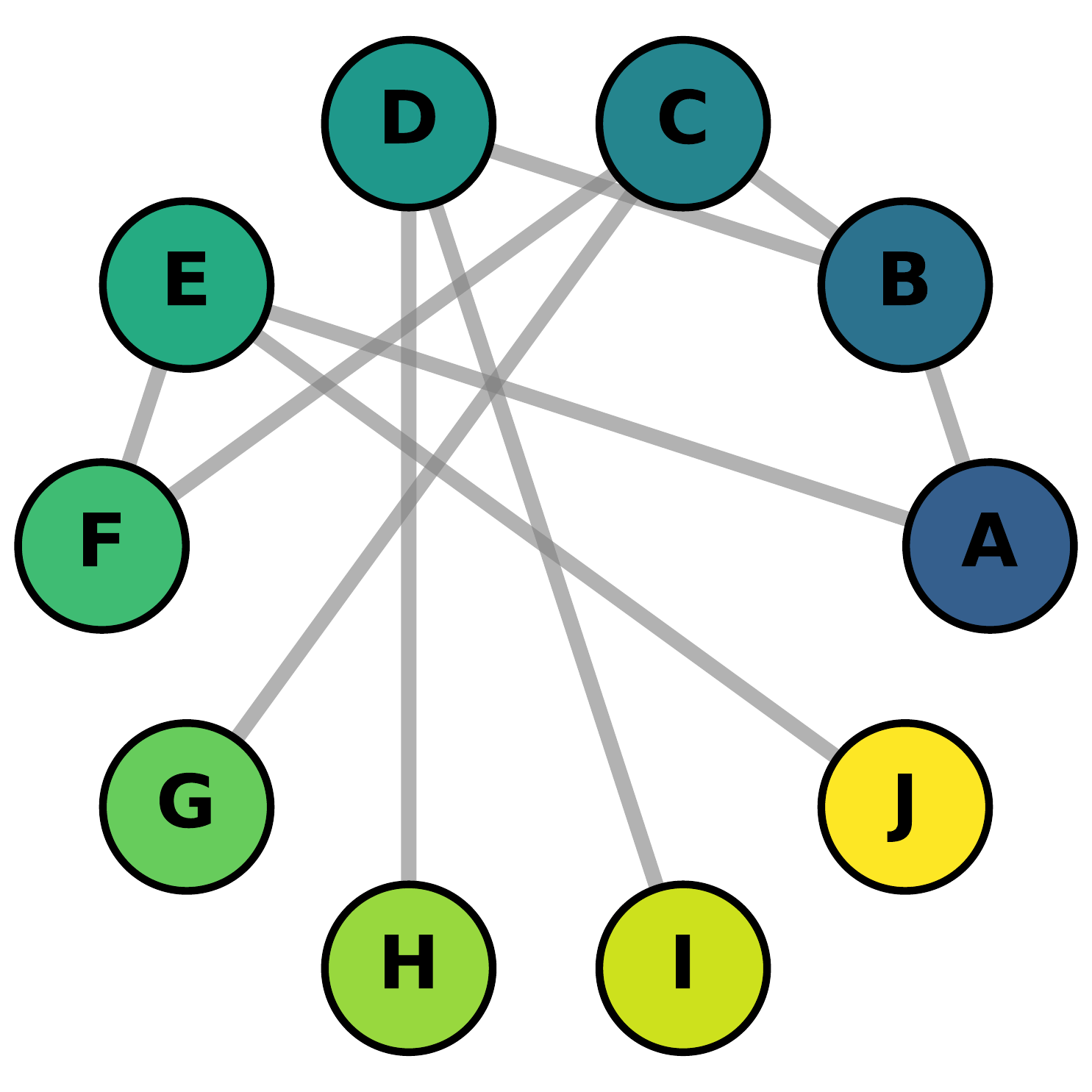}
    }
    \subfigure[GPT-3.5]{
        \includegraphics[width=0.23\linewidth]{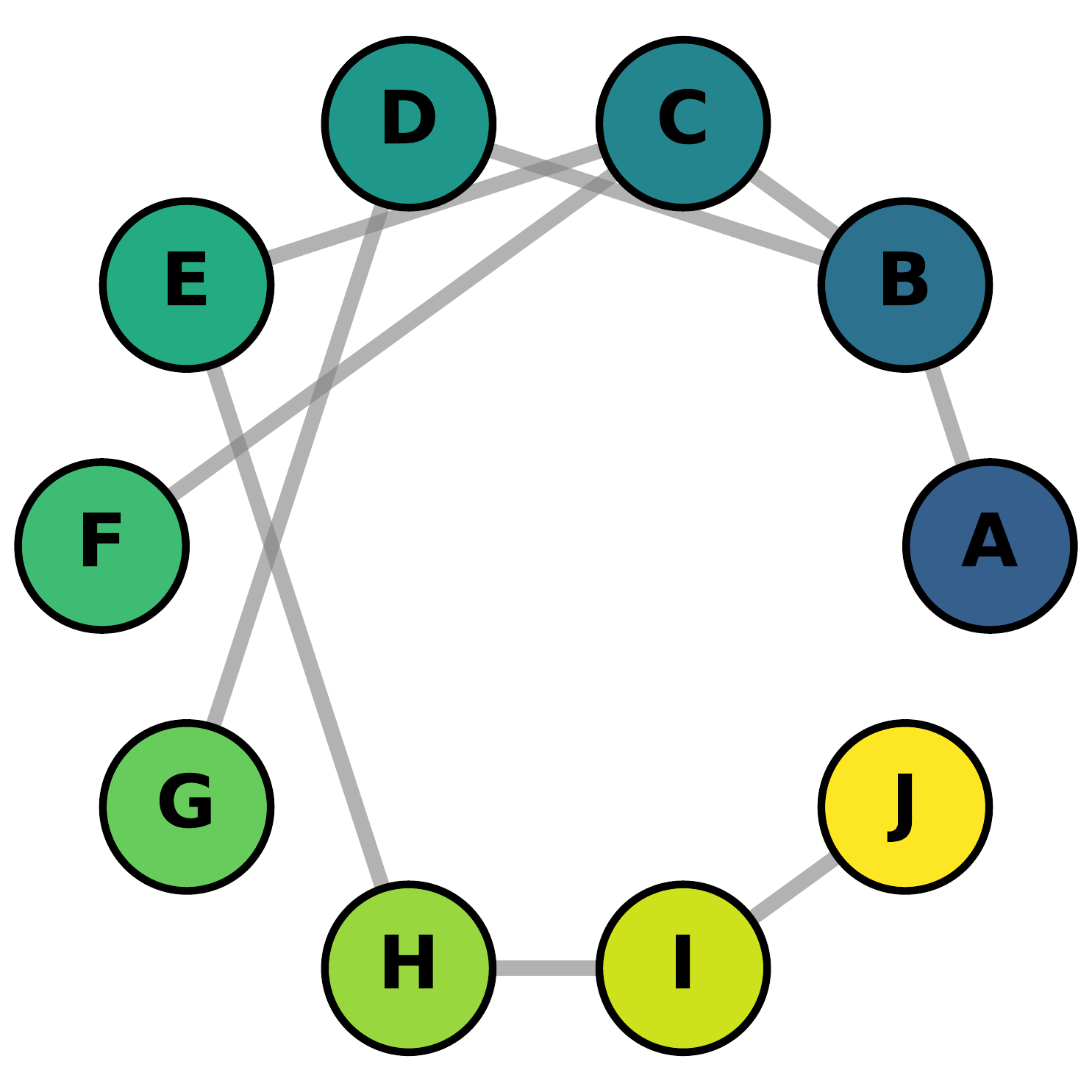}
    }
    \subfigure[GPT-4]{
        \includegraphics[width=0.23\linewidth]{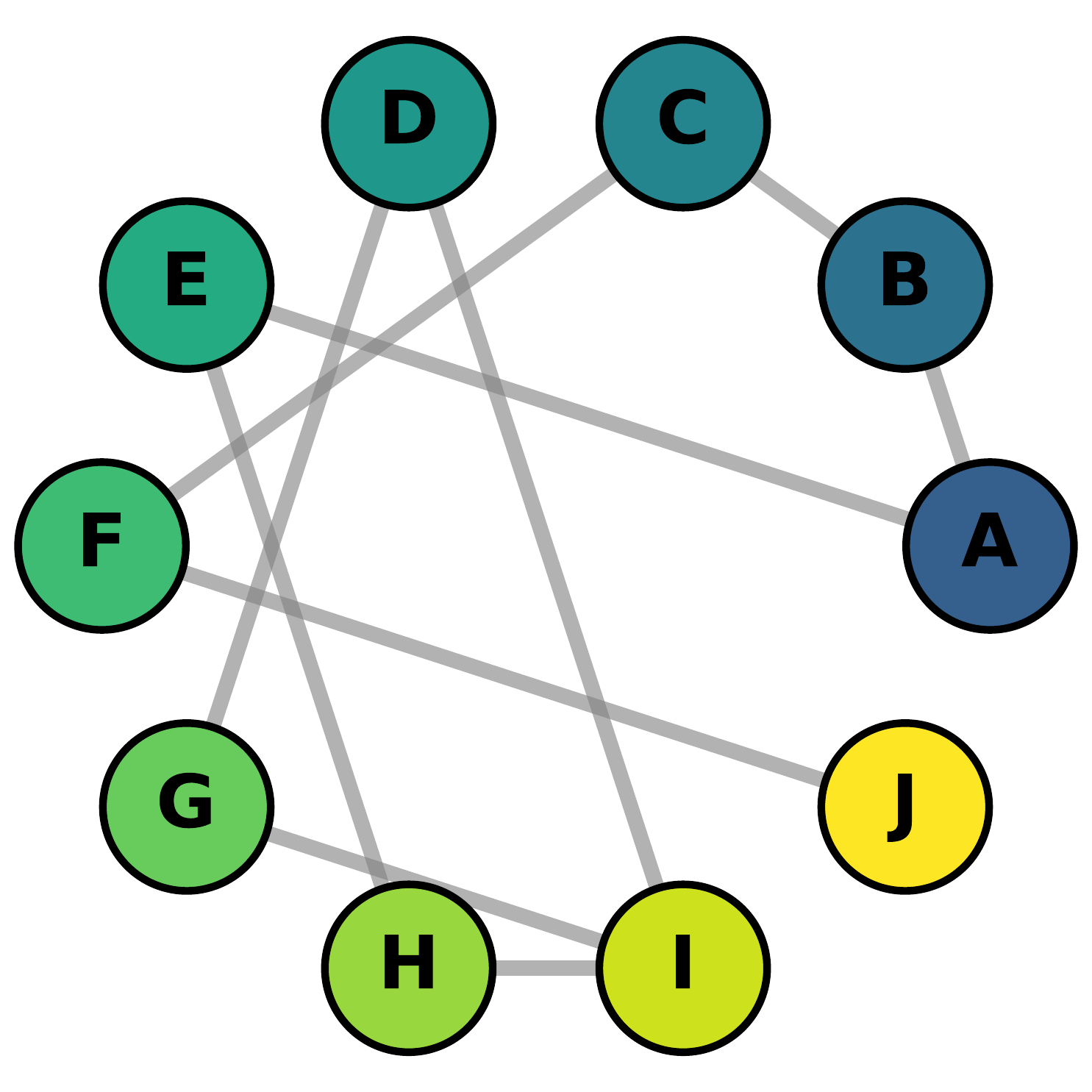}
    }
    \caption{Zoo.}
    \label{fig:appx-zoo}
\end{figure*}

\section{Additional Experiments}

\subsection{Additional Synthetic Relational Learning}

To show the applicability of our method to general graphs and its scalability to graphs of different magnitudes
we conduct additional synthetic relational learning in random graphs and subgraphs extracted from ConceptNet.

\subsubsection{Random Graphs}
We address synthetic relational learning tasks in random graphs of varying node counts. Specifically, we generate weighted connected random graphs (WCGNM) with  $n$ nodes and $m(n)=\frac{p n(n-1)}{2}$ edges, selected uniformly at random. We vary $n$ across five different magnitudes: $10, 20, 50, 100, and 200$, maintaining parameters $p=0.2$ and $\kappa=3.0$ for each. Each experimental setting is repeated $5$ times. The results are presented in \cref{fig:appx-srl-rg}.

\begin{figure*}[th]
    \centering
    \includegraphics[width=\textwidth]{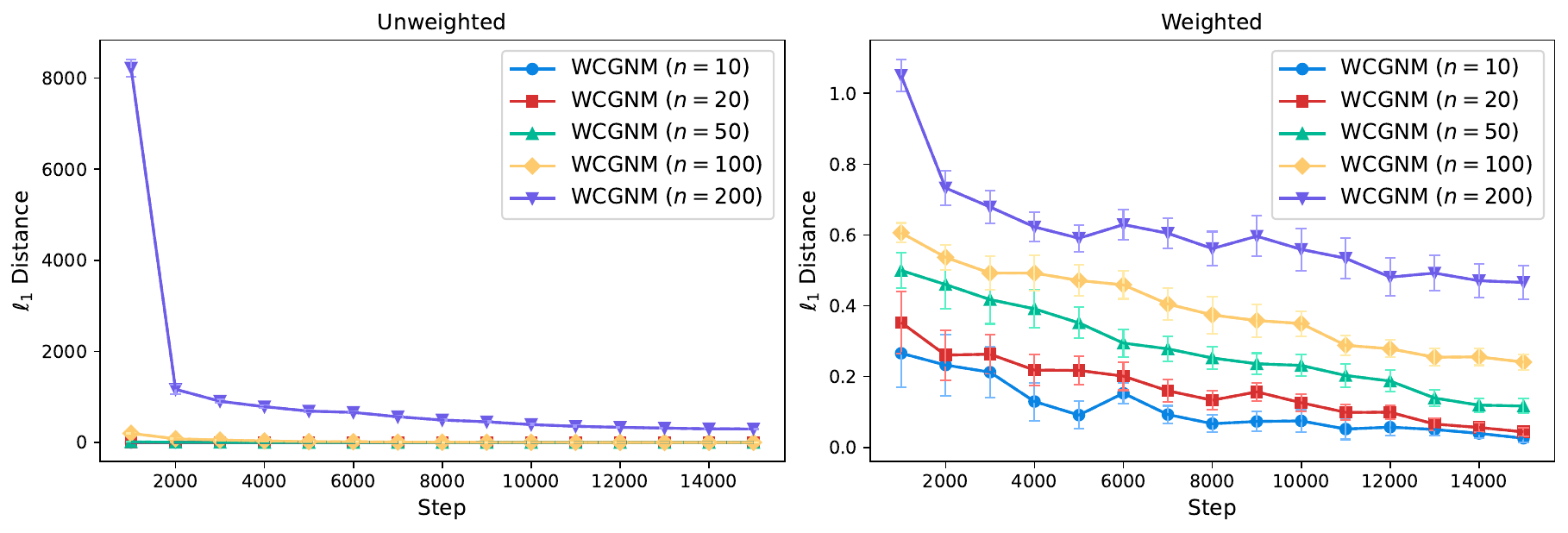}
    \caption{Synthetic relation learning in WCGNMs with different magnitudes.}
    \label{fig:appx-srl-rg}
\end{figure*}

\subsubsection{Subgraphs Extracted from ConceptNet}
We conduct synthetic relational learning tasks using relational graphs derived from ConceptNet, which represent more intricate real-world relational structures. These subgraphs are generated similarly to the real-world relation evaluation experiments in \cref{subsec:experiments-rwre} but include additional top-related pairs for each entity to increase complexity. Specifically, we focus on the three most related pairs of each entity. Each resulting subgraph comprises approximately 50 nodes, making them more complex than the specific structured graphs used in the experiments of \cref{subsec:experiments-srl}. The results are shown in \cref{fig:appx-srl-conceptnet}.

\begin{figure*}[th]
    \centering
    \includegraphics[width=\textwidth]{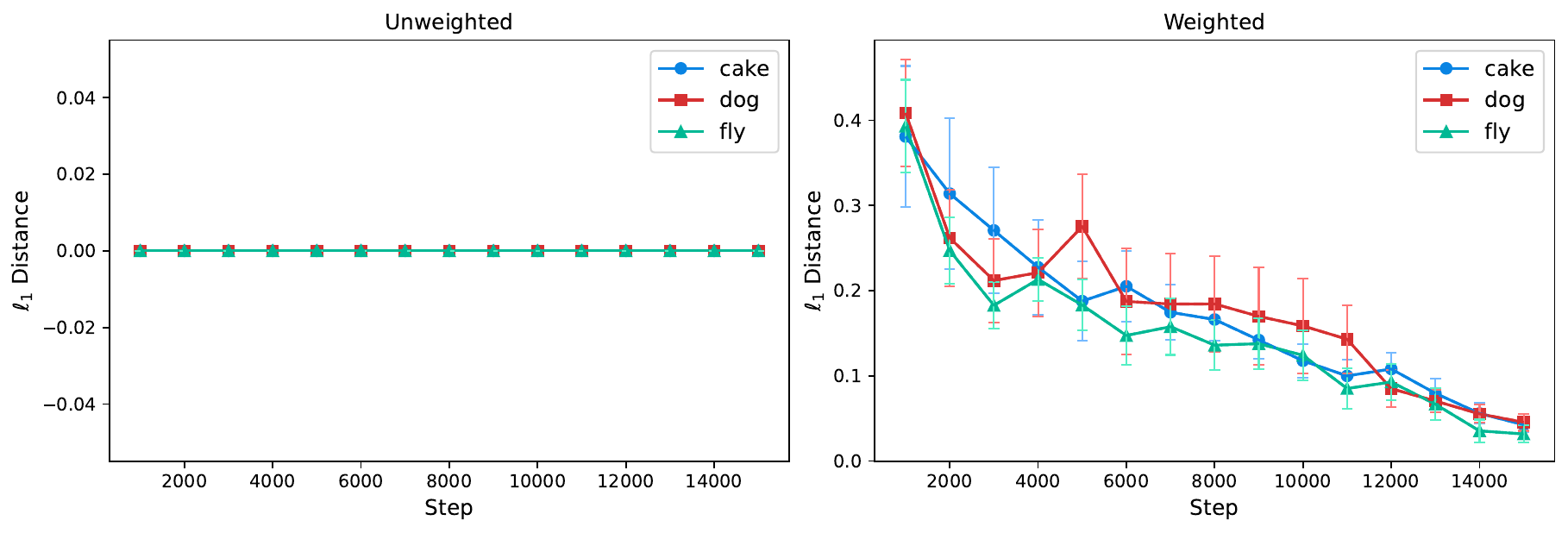}
    \caption{Synthetic relation learning in the subgraphs extracted from ConceptNet.}
    \label{fig:appx-srl-conceptnet}
\end{figure*}

\subsection{Additional Real-World Relation Evaluation}

\subsubsection{Relation Evaluation in WordNet}
We perform similar experiments to that in \cref{subappendix:rwre} in WordNet \citep{miller1995wordnet} to show that our method can be applied to relational learning scenarios beyond ConceptNet. Figures~\ref{fig:appx-rwre-wordnet-cake} -~\ref{fig:appx-rwre-wordnet-zoo} are the evaluation results, which are summarized in \cref{tab:appx-rwre-wordnet}. The correspondences between the entities and the letters used in the figures are summarized in Tables~\ref{tab:wordnet-entity-letter-1} and~\ref{tab:wordnet-entity-letter-2}.

\begin{table*}[th]
\caption{The correspondences between the entities and the letters for WordNet (Part 1).}
\label{tab:wordnet-entity-letter-1}
\begin{center}
\begin{sc}
\begin{tabular}{lcccccc}
\toprule
 & A & B & C & D & E & F  \\
\midrule
cake & cake & bar & patty & barroom & saloon & dish \\
dog & dog & frump & cad & bounder & blackguard & - \\
fly & fly & tent-fly & rainfly & - & - & - \\
paper & paper & newspaper & composition & newsprint & composing & constitution \\
zoo & zoo & menagerie & facility & collection & installation & adeptness \\
\bottomrule
\end{tabular}
\end{sc}
\end{center}
\end{table*}

\begin{table*}[th]
\caption{The correspondences between the entities and the letters for WordNet (Part 2).}
\label{tab:wordnet-entity-letter-2}
\begin{center}
\begin{sc}
\begin{tabular}{lcccccc}
\toprule
 & G & H & I & J & K & L \\
\midrule
cake & dishful & smasher & - & - & - & - \\
dog & - & - & - & - & - & - \\
fly & - & - & - & - & - & - \\
paper & placement & establishment & formation & - & - & - \\
zoo & aggregation & accumulation & installing & installment & adroitness & deftness \\
\bottomrule
\end{tabular}
\end{sc}
\end{center}
\end{table*}

\begin{table*}[th]
\caption{Some relation evaluation results in WordNet. Similarly, the subgraphs are generated from different source entities with $k=2$ and $d=3$. The dissimilarity measure is the same as that in ConceptNet.}
\label{tab:appx-rwre-wordnet}
\begin{center}
\begin{sc}
\begin{tabular}{lcccccccccc}
\toprule
 & cake & dog & fly & paper & zoo \\
\midrule
GPT-3.5     & $1.33$ & $1.00$ & $0.00$ & $0.75$ & $1.33$ \\
GPT-4       & $1.33$ & $1.00$ & $0.00$ & $1.00$ & $1.00$ \\
\bottomrule
\end{tabular}
\end{sc}
\end{center}
\end{table*}

\begin{figure*}[th]
    \centering
    \subfigure[Ground Truth]{
        \includegraphics[width=0.3\linewidth]{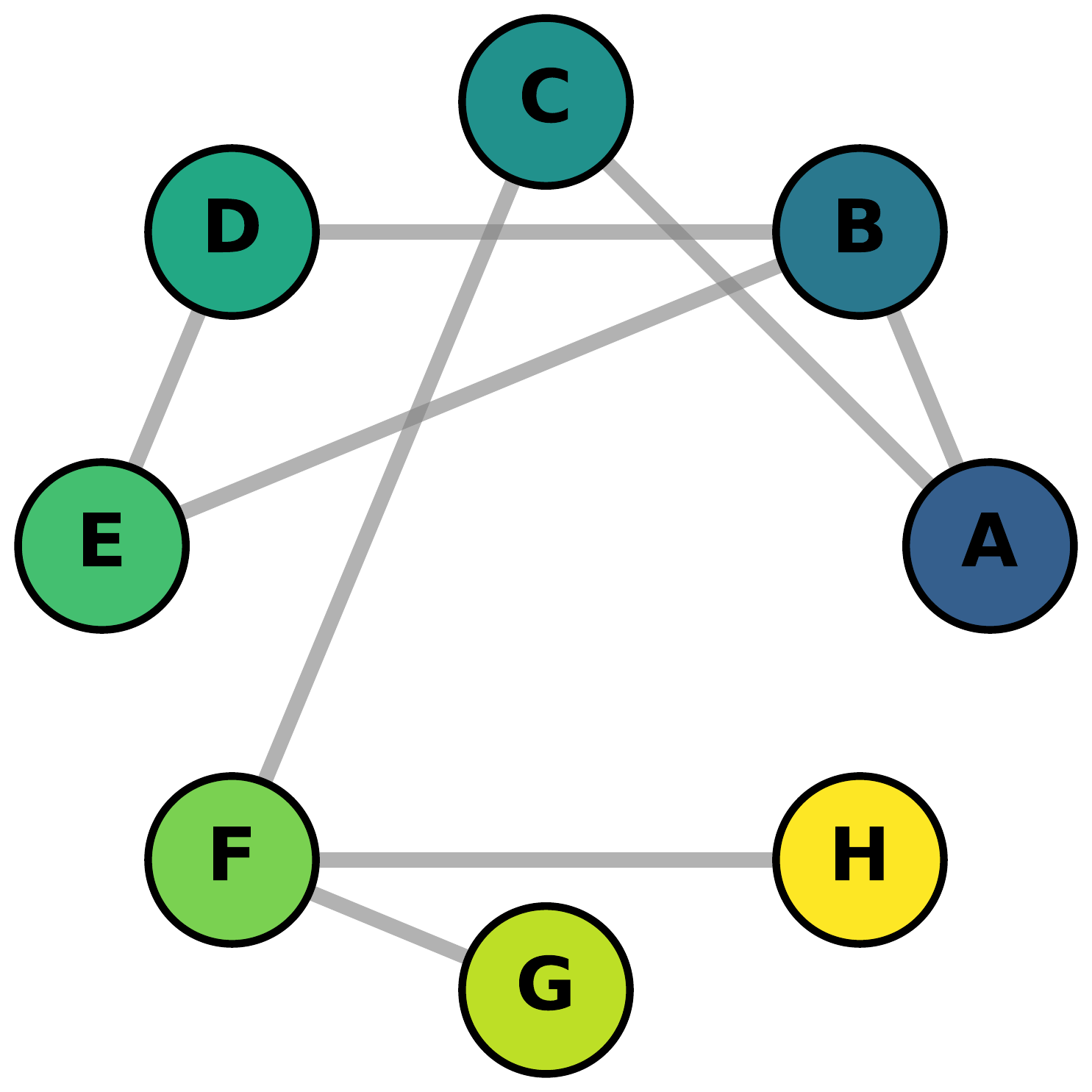}
    }
    \hfill
    \subfigure[GPT-3.5]{
        \includegraphics[width=0.3\linewidth]{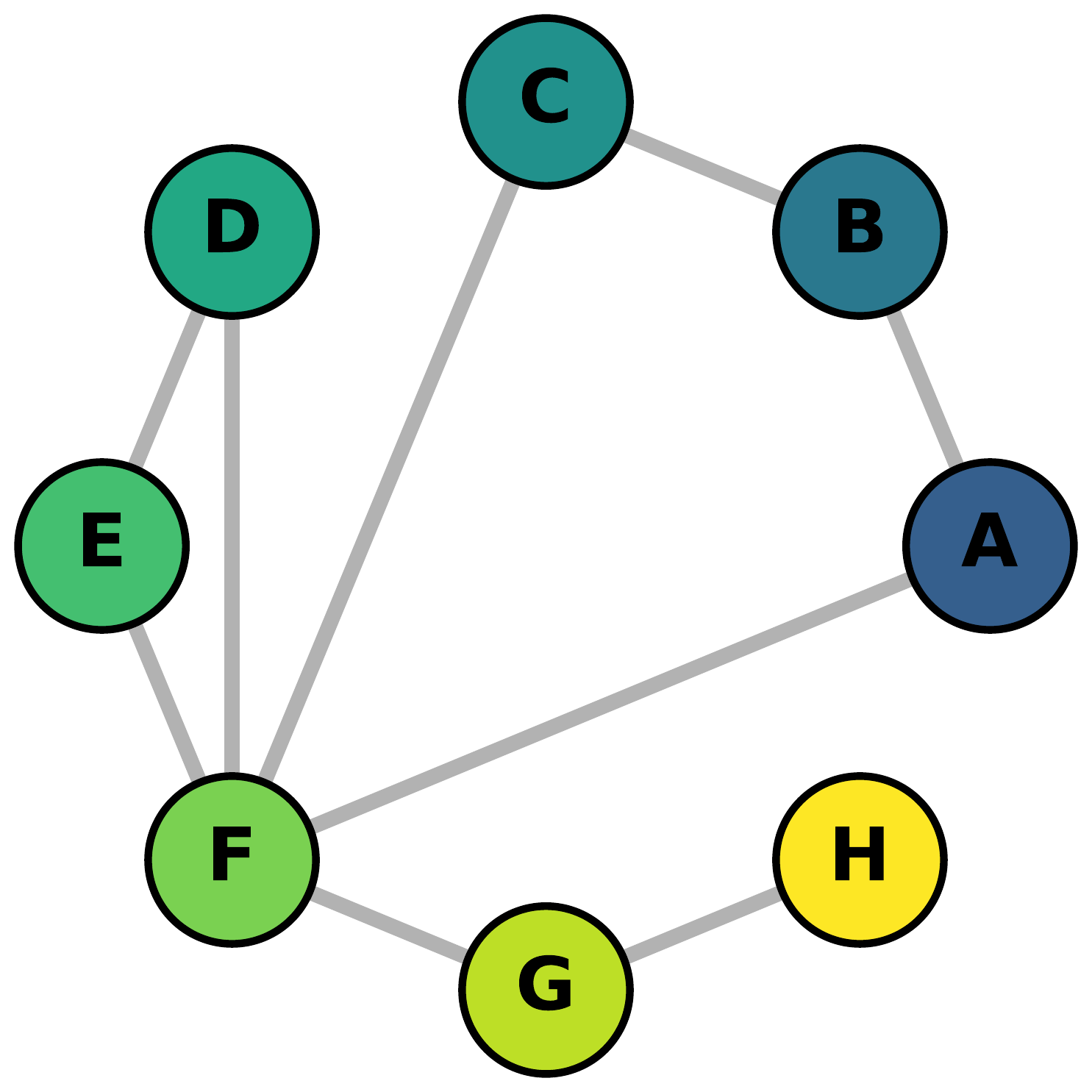}
    }
    \hfill
    \subfigure[GPT-4]{
        \includegraphics[width=0.3\linewidth]{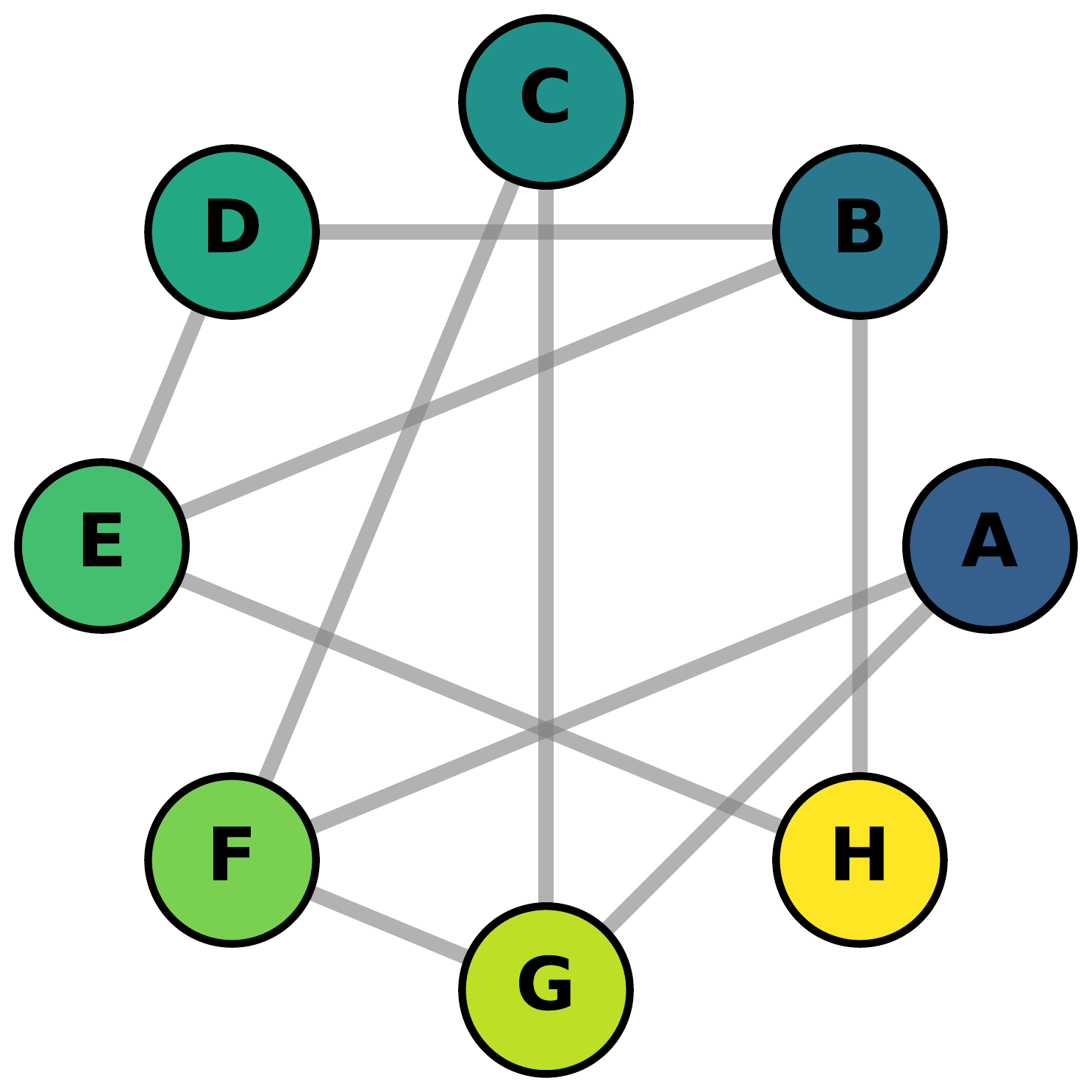}
    }
    \caption{Cake (WordNet).}
    \label{fig:appx-rwre-wordnet-cake}
\end{figure*}

\begin{figure*}[th]
    \centering
    \subfigure[Ground Truth]{
        \includegraphics[width=0.3\linewidth]{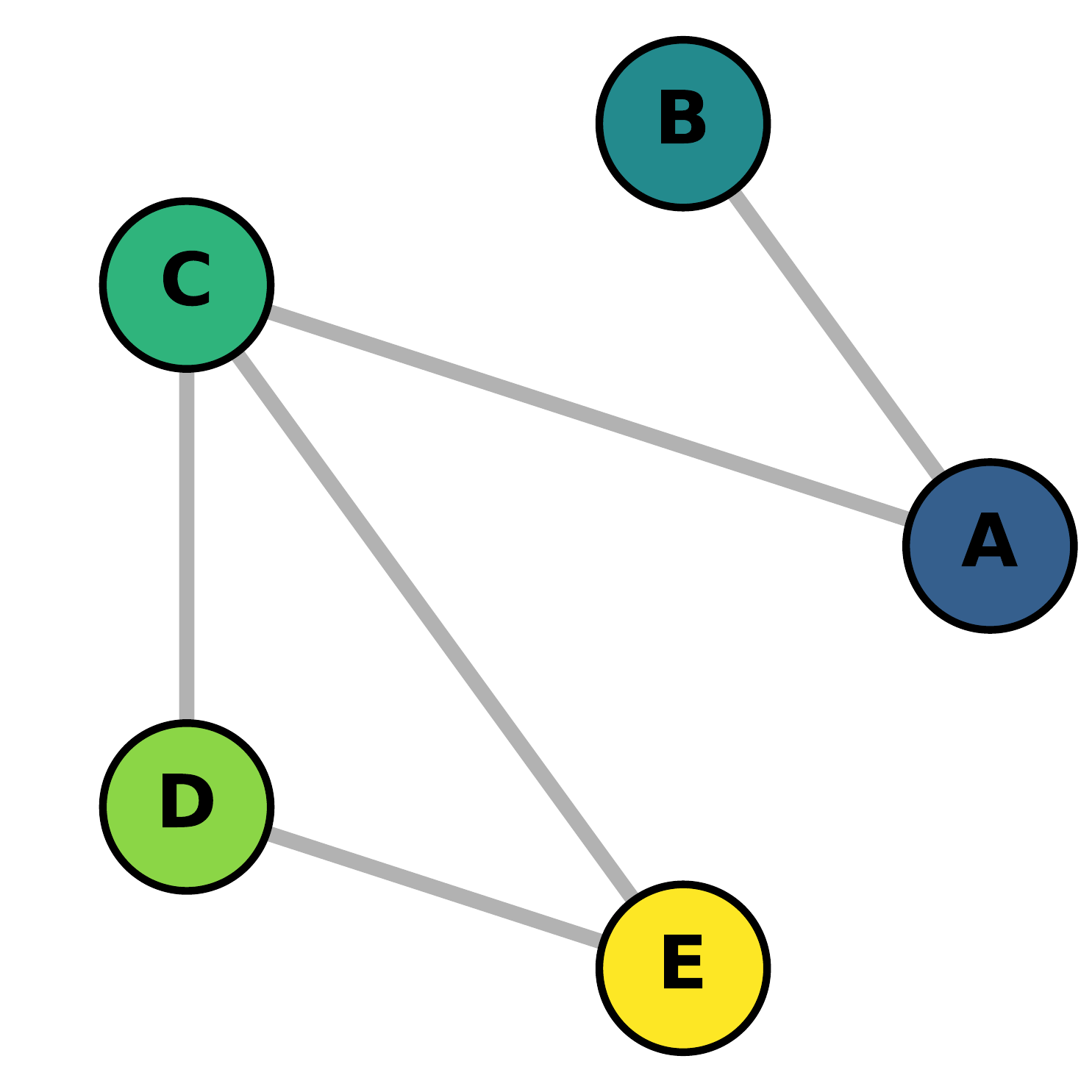}
    }
    \hfill
    \subfigure[GPT-3.5]{
        \includegraphics[width=0.3\linewidth]{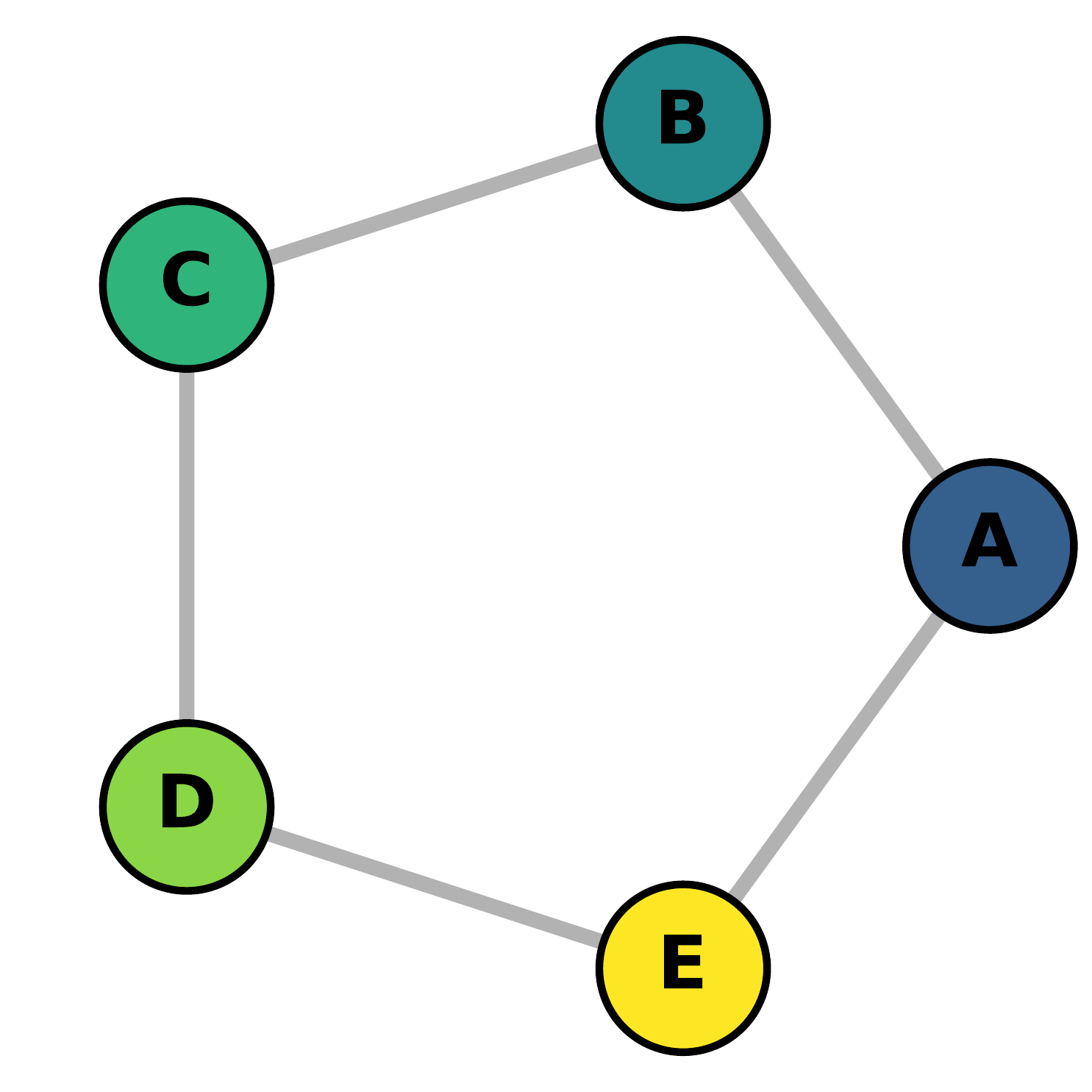}
    }
    \hfill
    \subfigure[GPT-4]{
        \includegraphics[width=0.3\linewidth]{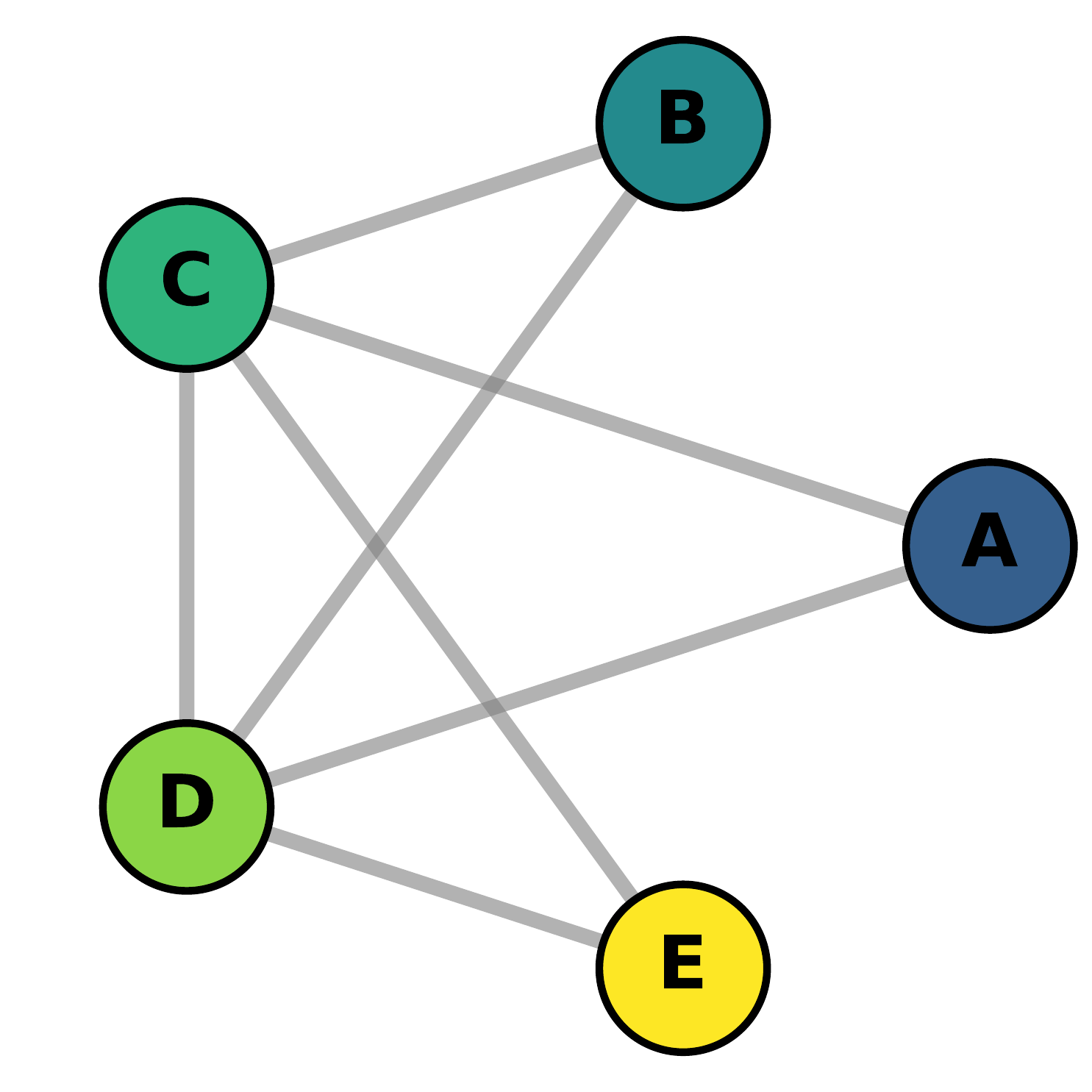}
    }
    \caption{Dog (WordNet).}
    \label{fig:appx-rwre-wordnet-dog}
\end{figure*}

\begin{figure*}[th]
    \centering
    \subfigure[Ground Truth]{
        \includegraphics[width=0.3\linewidth]{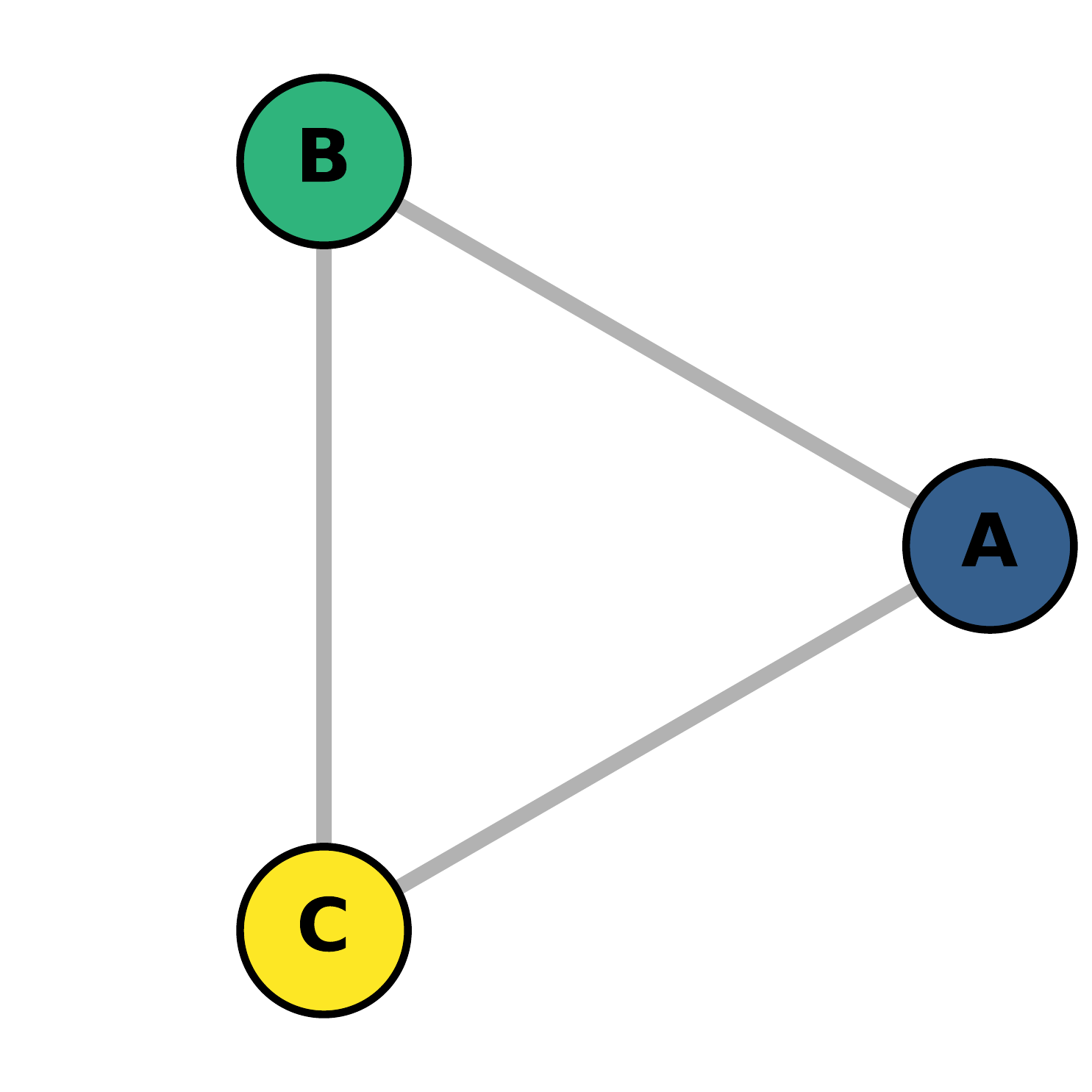}
    }
    \hfill
    \subfigure[GPT-3.5]{
        \includegraphics[width=0.3\linewidth]{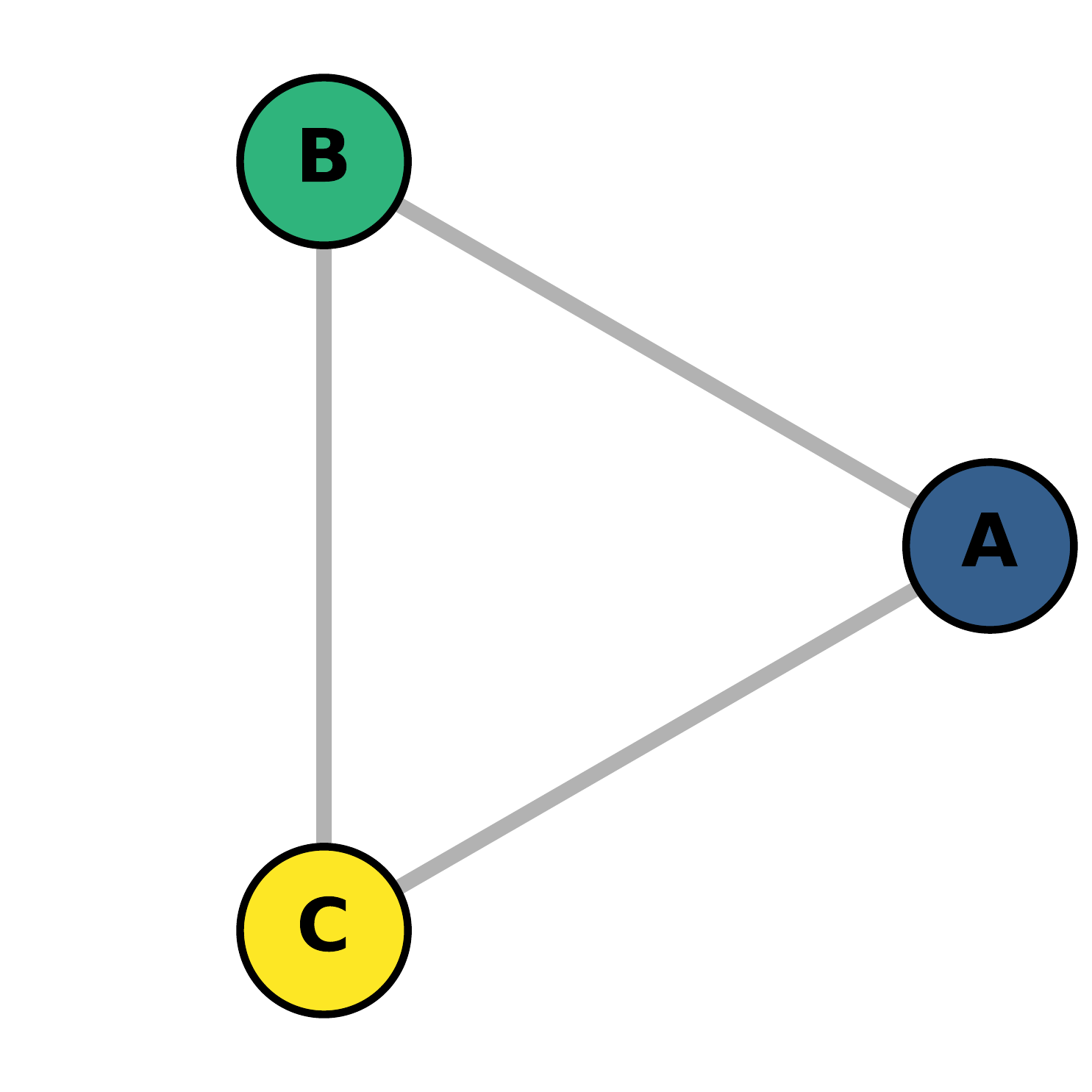}
    }
    \hfill
    \subfigure[GPT-4]{
        \includegraphics[width=0.3\linewidth]{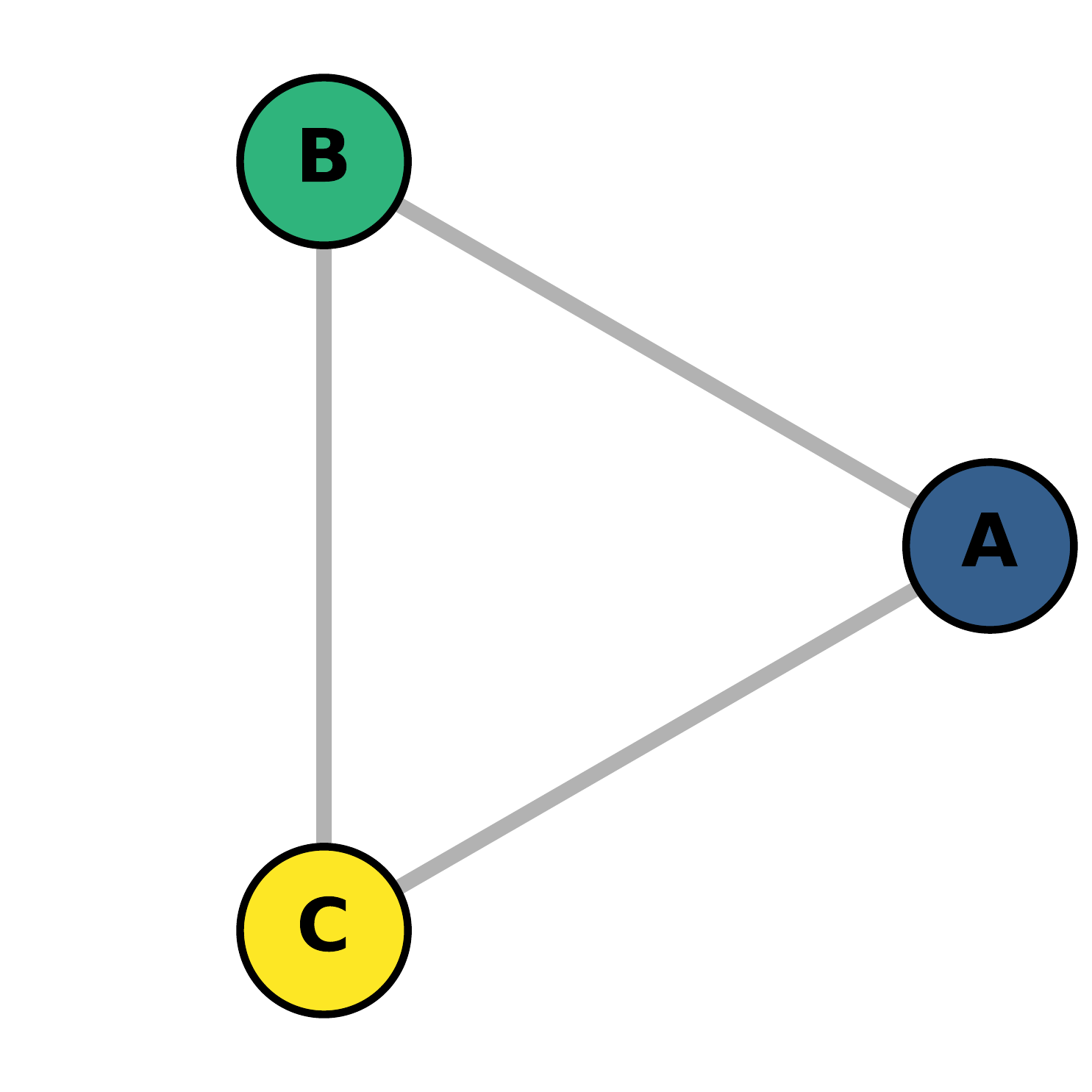}
    }
    \caption{Fly (WordNet).}
    \label{fig:appx-rwre-wordnet-fly}
\end{figure*}

\begin{figure*}[th]
    \centering
    \subfigure[Ground Truth]{
        \includegraphics[width=0.3\linewidth]{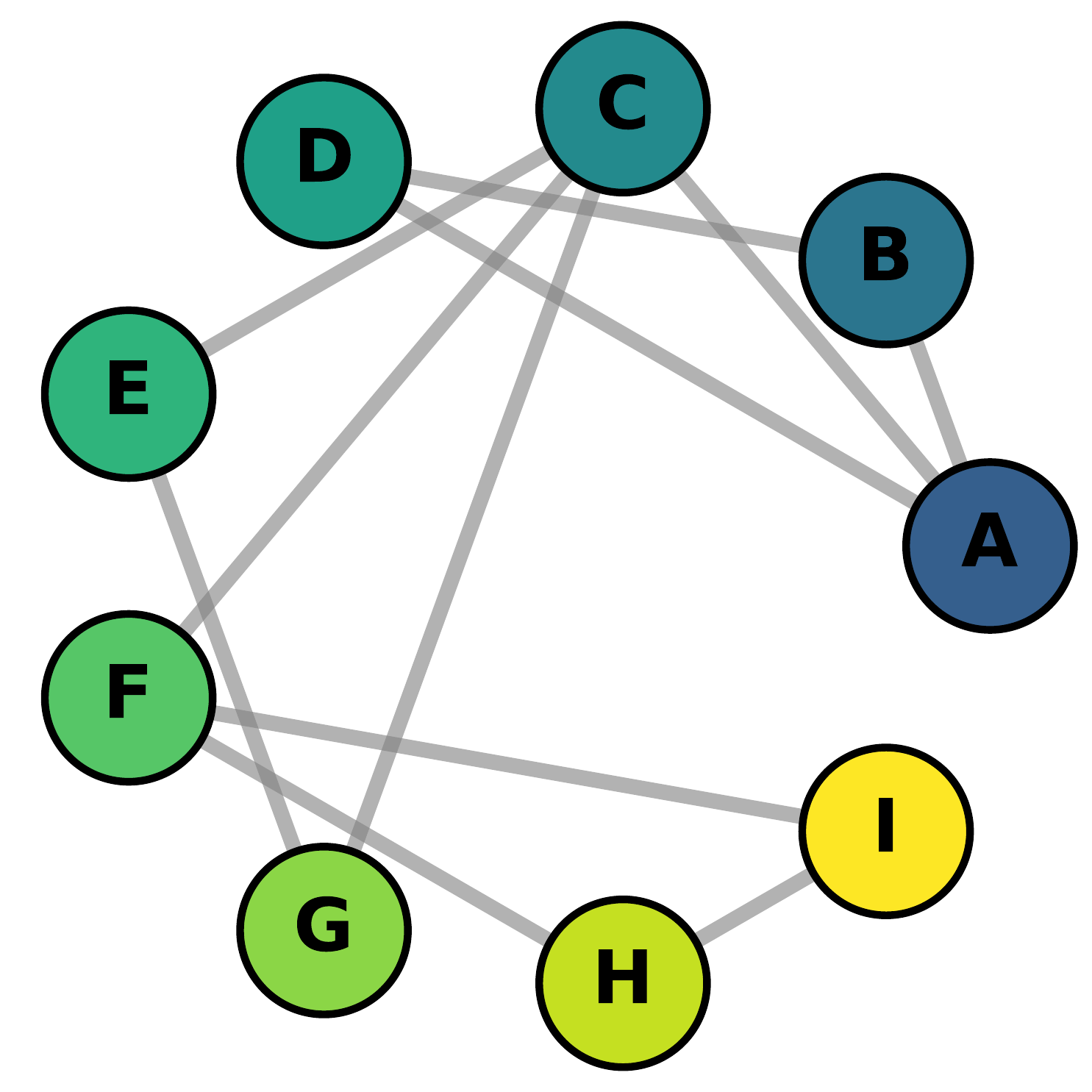}
    }
    \hfill
    \subfigure[GPT-3.5]{
        \includegraphics[width=0.3\linewidth]{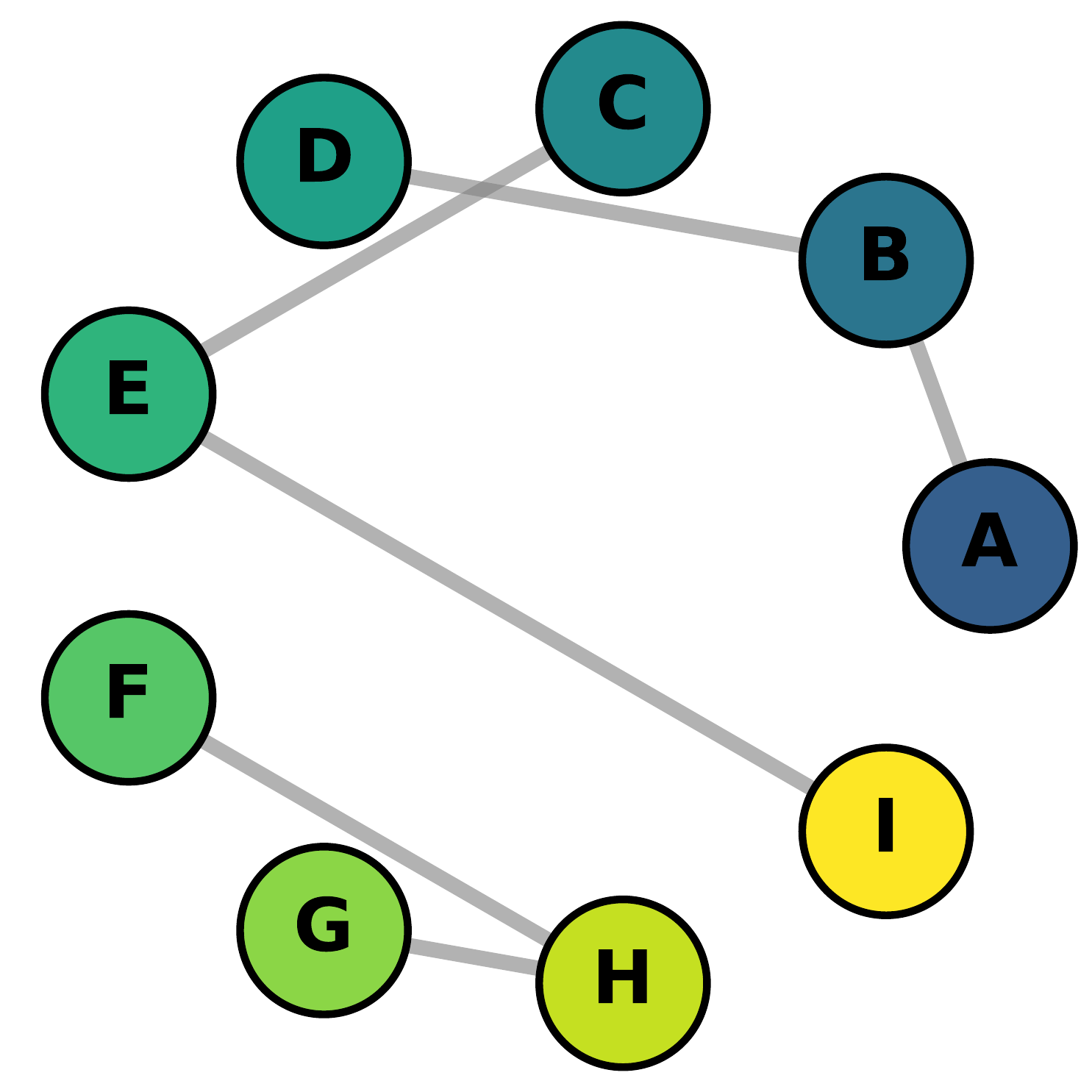}
    }
    \hfill
    \subfigure[GPT-4]{
        \includegraphics[width=0.3\linewidth]{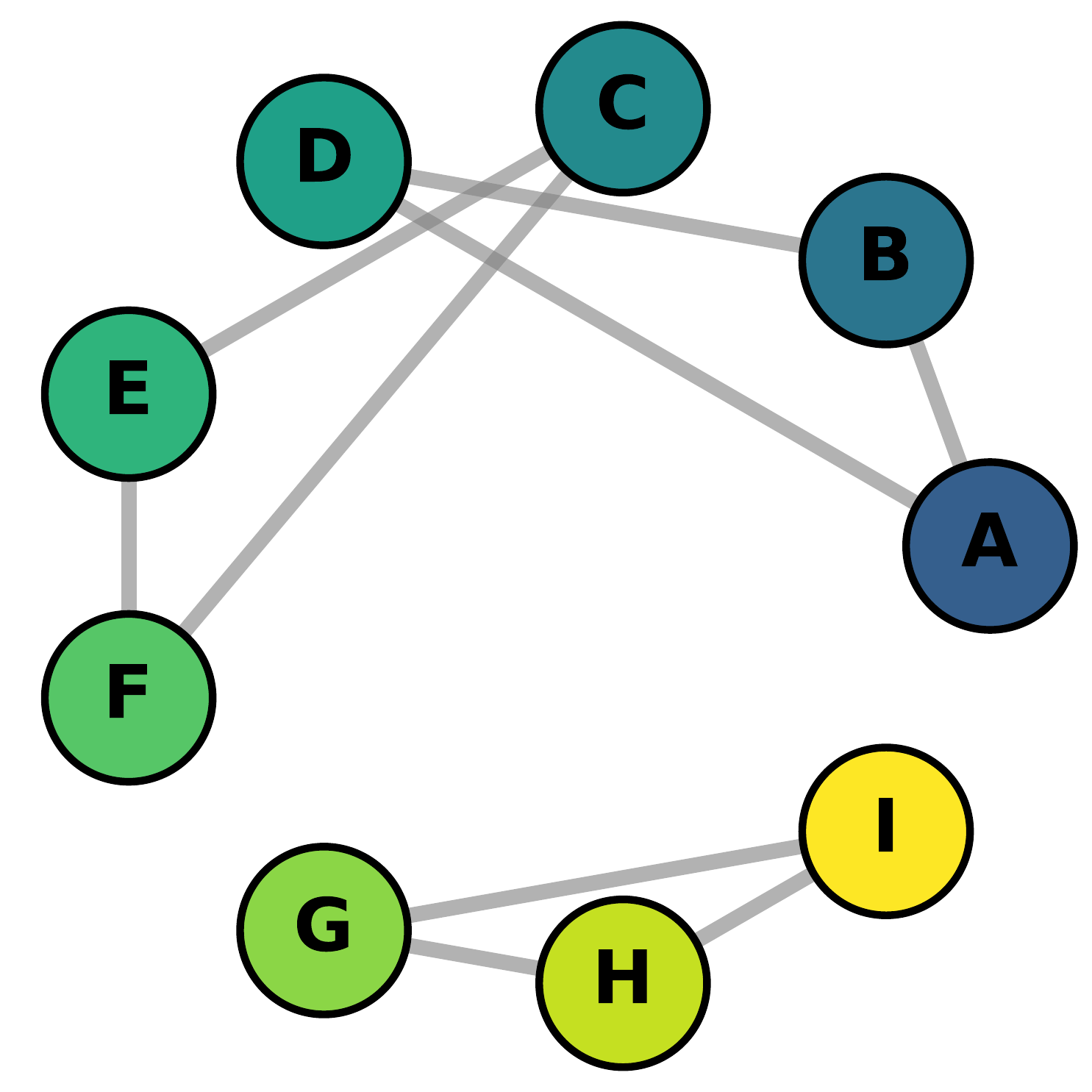}
    }
    \caption{Paper (WordNet).}
    \label{fig:appx-rwre-wordnet-paper}
\end{figure*}

\begin{figure*}[th]
    \centering
    \subfigure[Ground Truth]{
        \includegraphics[width=0.3\linewidth]{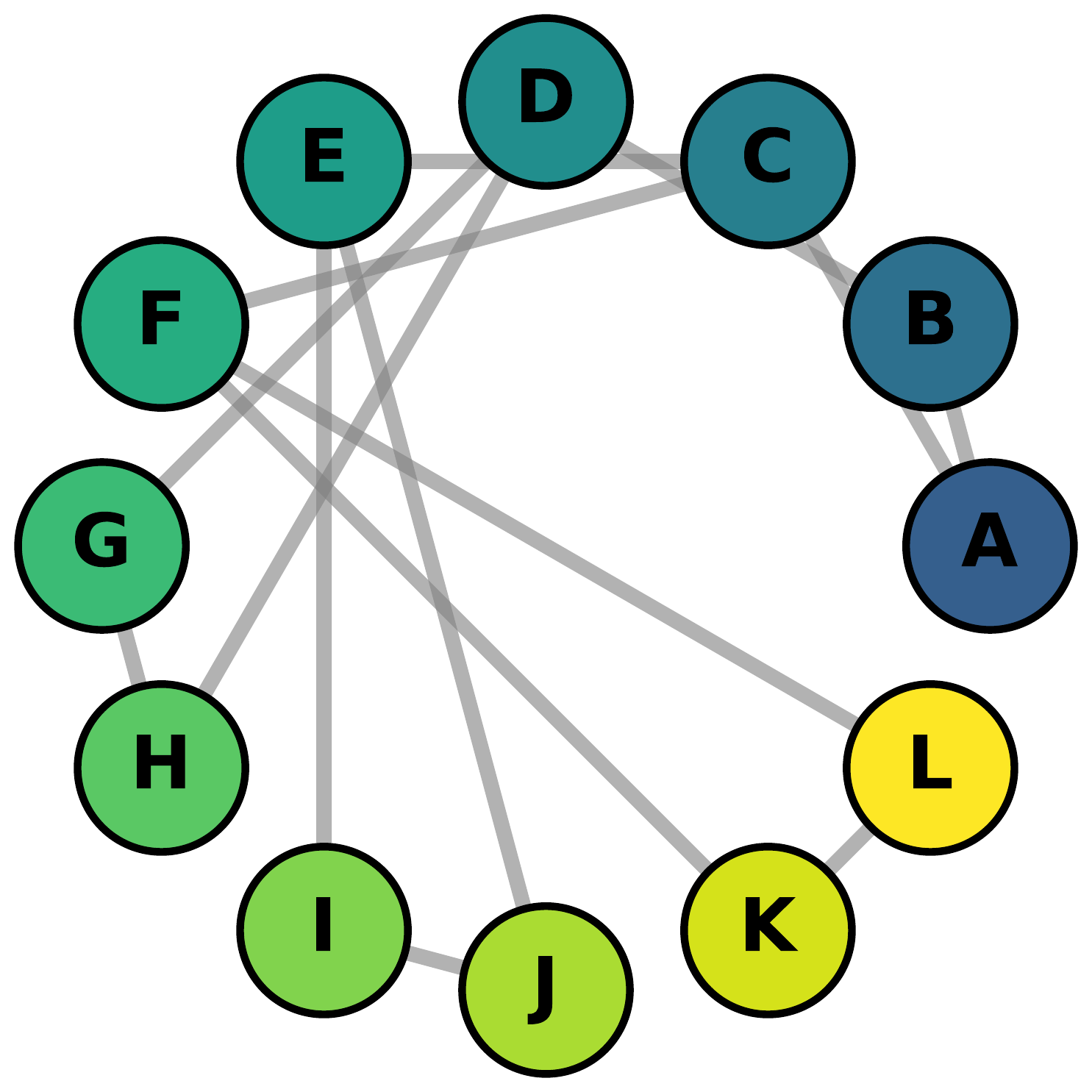}
    }
    \subfigure[GPT-3.5]{
        \includegraphics[width=0.3\linewidth]{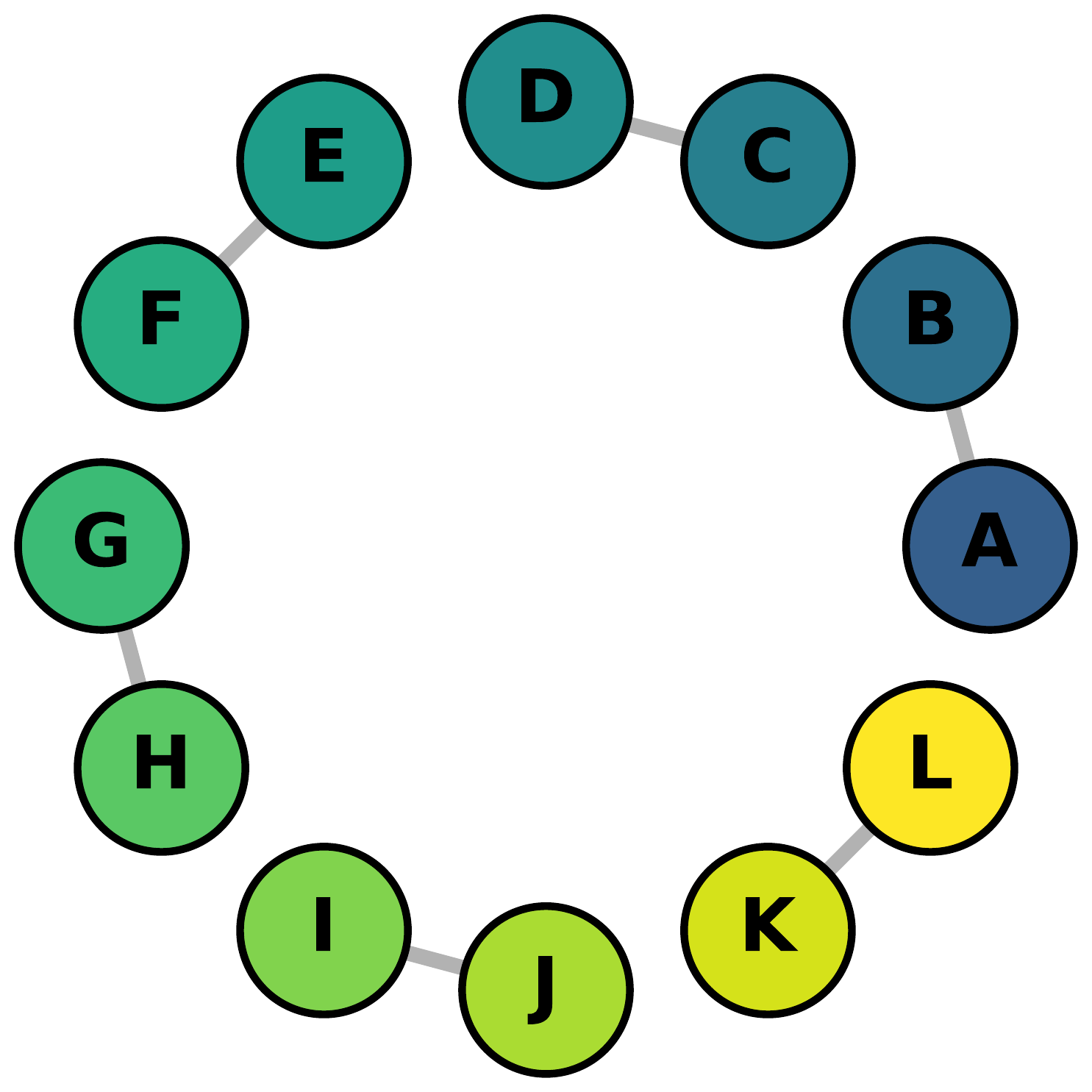}
    }
    \subfigure[GPT-4]{
        \includegraphics[width=0.3\linewidth]{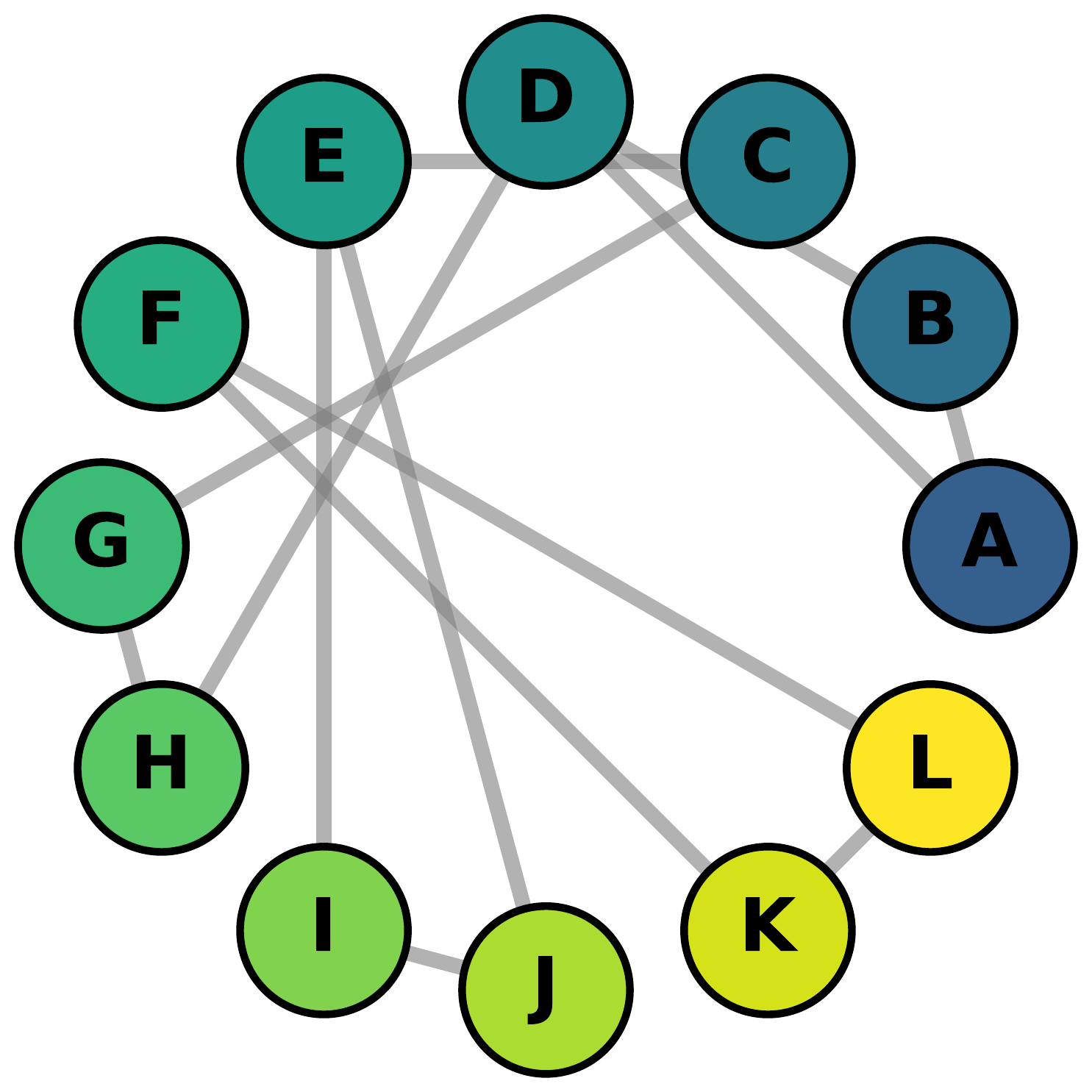}
    }
    \caption{Zoo (WordNet).}
    \label{fig:appx-rwre-wordnet-zoo}
\end{figure*}

\subsubsection{Relation Evaluation with Hypergraphs}
We focus on identifying pairwise relationships rather than the full hypergraph structures in \cref{subsec:experiments-rwre} primarily for the ease of making fair comparisons between the evaluated relations and the ground truth. To the best of our knowledge, there is no widely used database that characterizes entity relations in the form of hypergraphs. Although the evaluated hypergraphs cannot be directly compared with an established ground truth, the recovered relations align with our common knowledge.
The PTMs are only asked to identify the two most related entities in \cref{subsec:experiments-rwre}. We slightly adapt the prompts to make them recover the full hypergraph structures, instructing the PTMs to directly output the lists of hyperedges for the extracted entities in the prompts. We find that the PTMs are capable of reconstructing relational hypergraph structures that align with our existing knowledge. Figures~\ref{fig:appx-rwre-hypergraph-cake} -~\ref{fig:appx-rwre-hypergraph-zoo} are the evaluation results. The correspondences between the entities and the letters used in the figures are summarized in Tables~\ref{tab:entity-letter-1} and~\ref{tab:entity-letter-2}.

\begin{figure*}[th]
    \centering
    \includegraphics[width=0.5\textwidth]{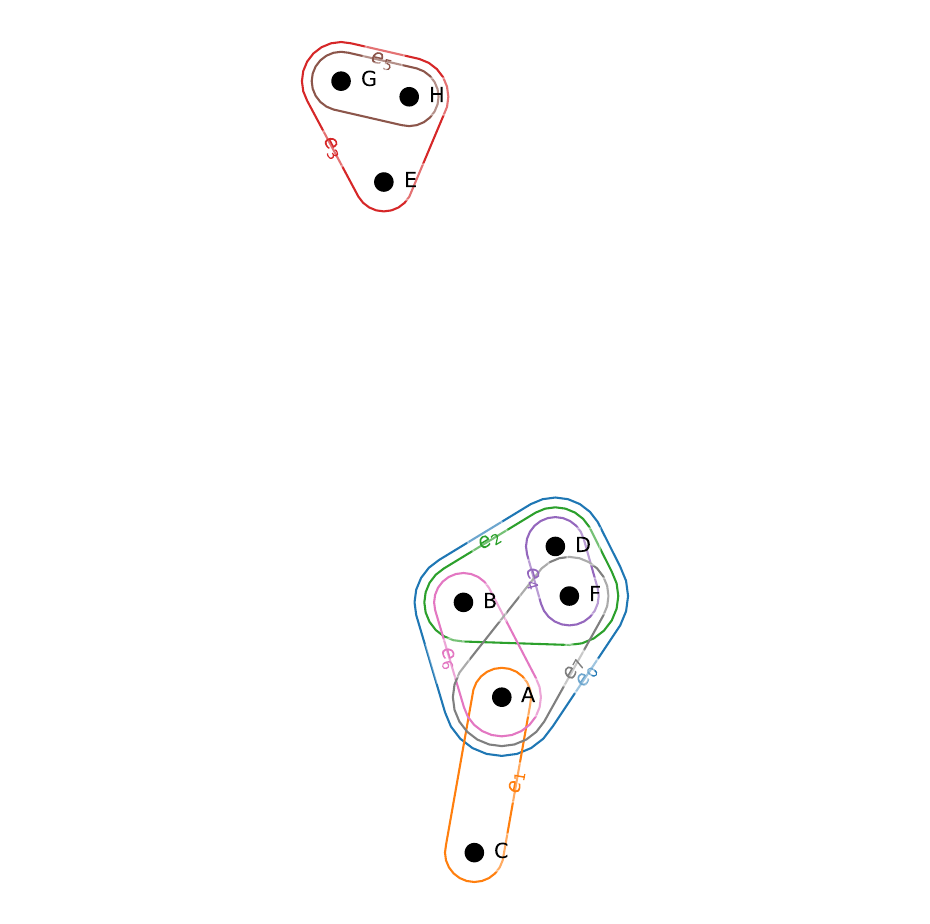}
    \caption{Cake.}
    \label{fig:appx-rwre-hypergraph-cake}
\end{figure*}

\begin{figure*}[th]
    \centering
    \includegraphics[width=0.5\textwidth]{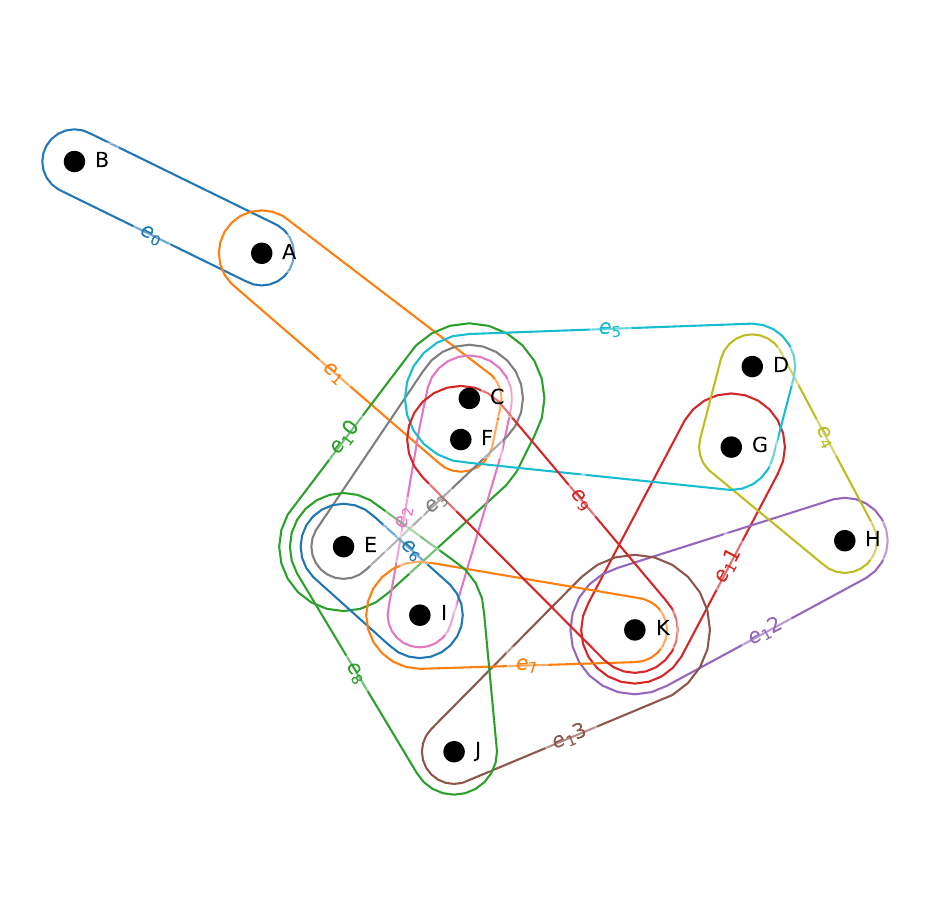}
    \caption{Dog.}
    \label{fig:appx-rwre-hypergraph-dog}
\end{figure*}

\begin{figure*}[th]
    \centering
    \includegraphics[width=0.5\textwidth]{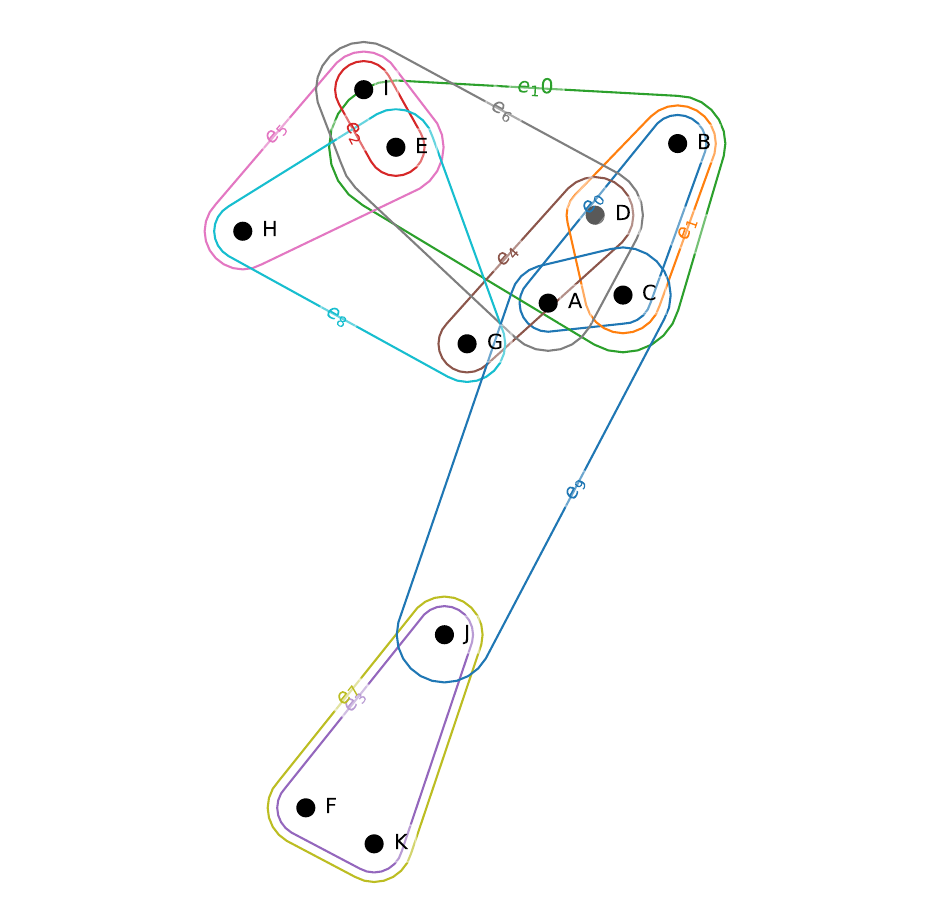}
    \caption{Fly.}
    \label{fig:appx-rwre-hypergraph-fly}
\end{figure*}

\begin{figure*}[th]
    \centering
    \includegraphics[width=0.5\textwidth]{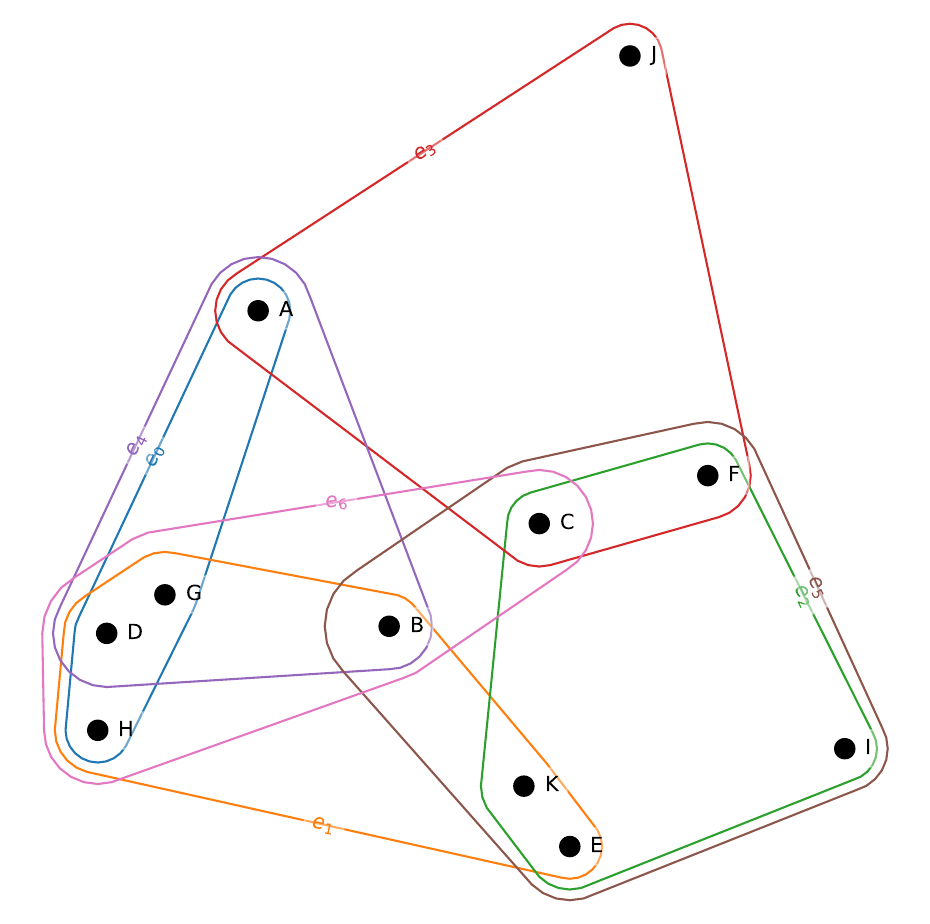}
    \caption{Human.}
    \label{fig:appx-rwre-hypergraph-human}
\end{figure*}

\begin{figure*}[th]
    \centering
    \includegraphics[width=0.5\textwidth]{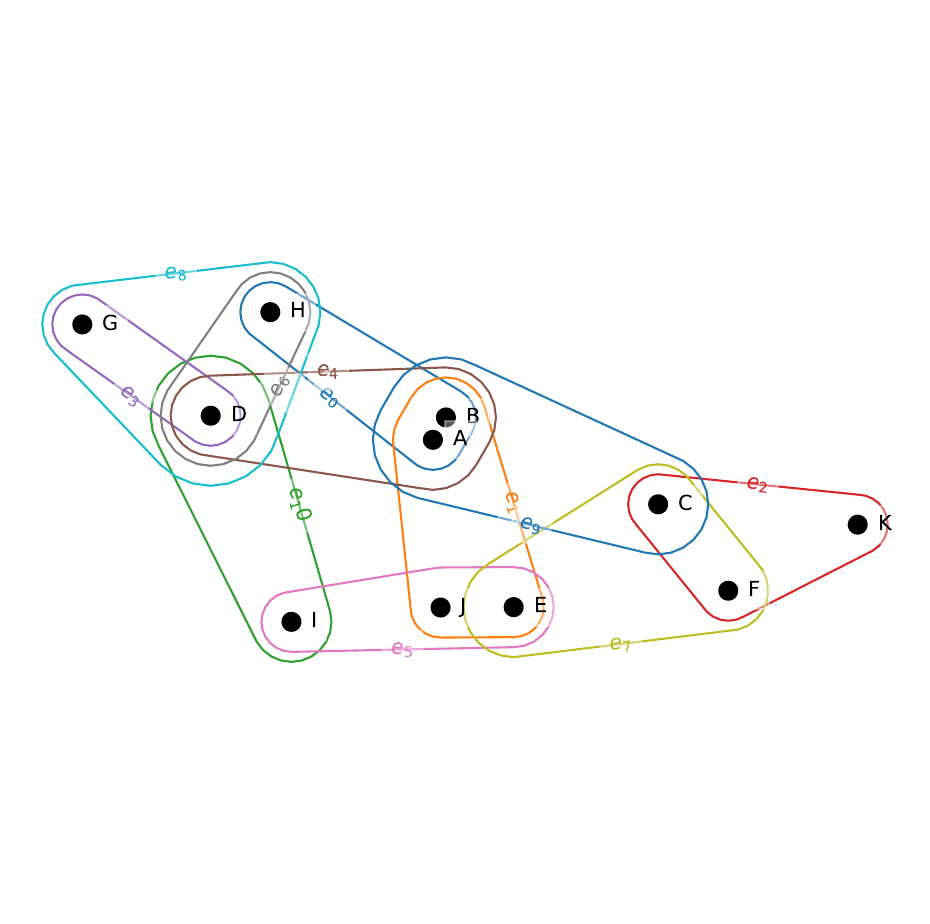}
    \caption{Jacket.}
    \label{fig:appx-rwre-hypergraph-jacket}
\end{figure*}

\begin{figure*}[th]
    \centering
    \includegraphics[width=0.5\textwidth]{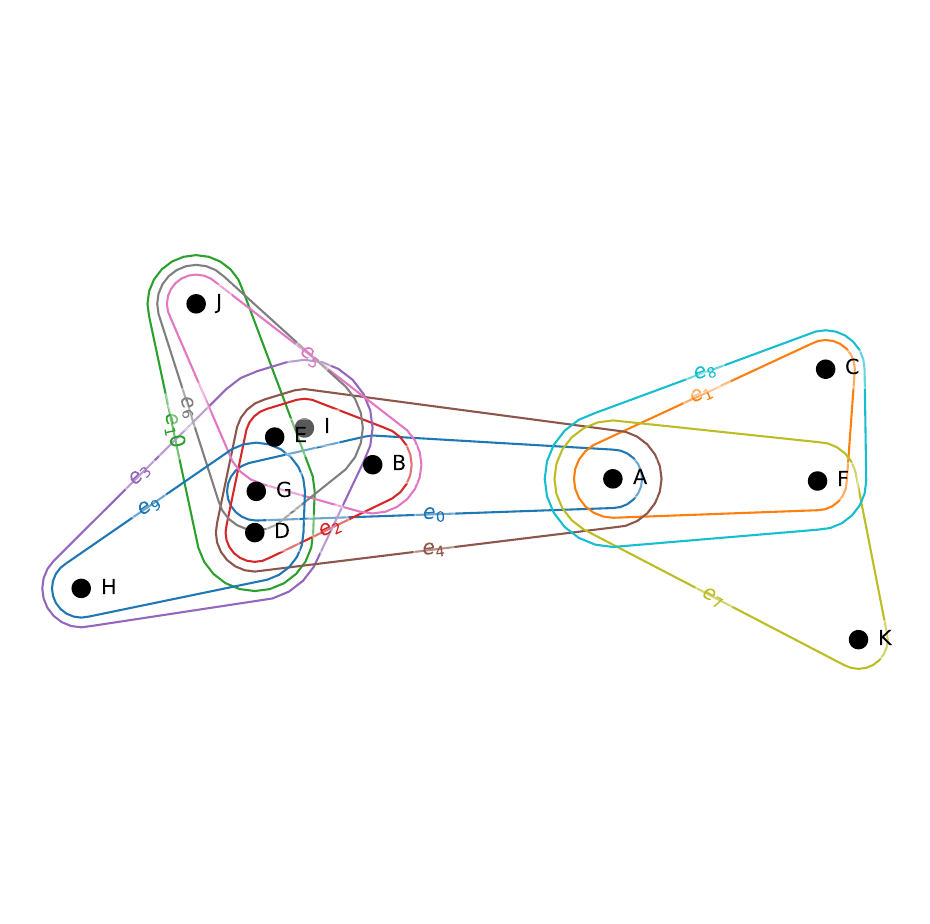}
    \caption{Orange.}
    \label{fig:appx-rwre-hypergraph-orange}
\end{figure*}

\begin{figure*}[th]
    \centering
    \includegraphics[width=0.5\textwidth]{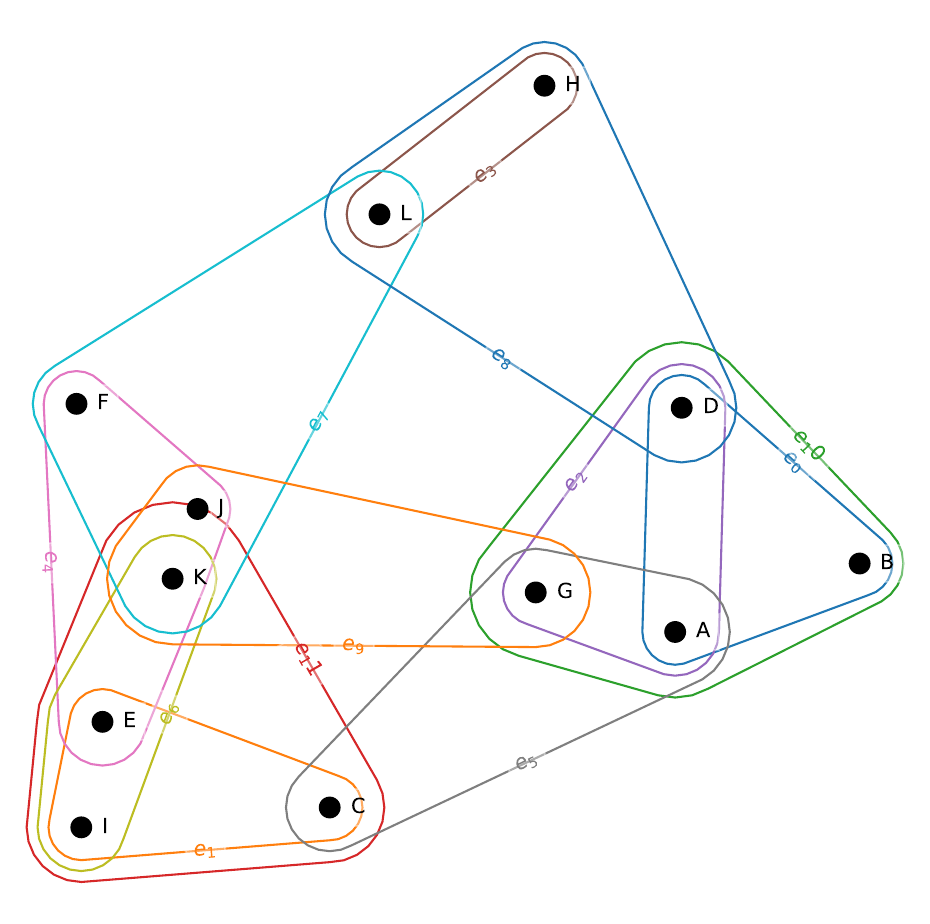}
    \caption{Paper.}
    \label{fig:appx-rwre-hypergraph-paper}
\end{figure*}

\begin{figure*}[th]
    \centering
    \includegraphics[width=0.5\textwidth]{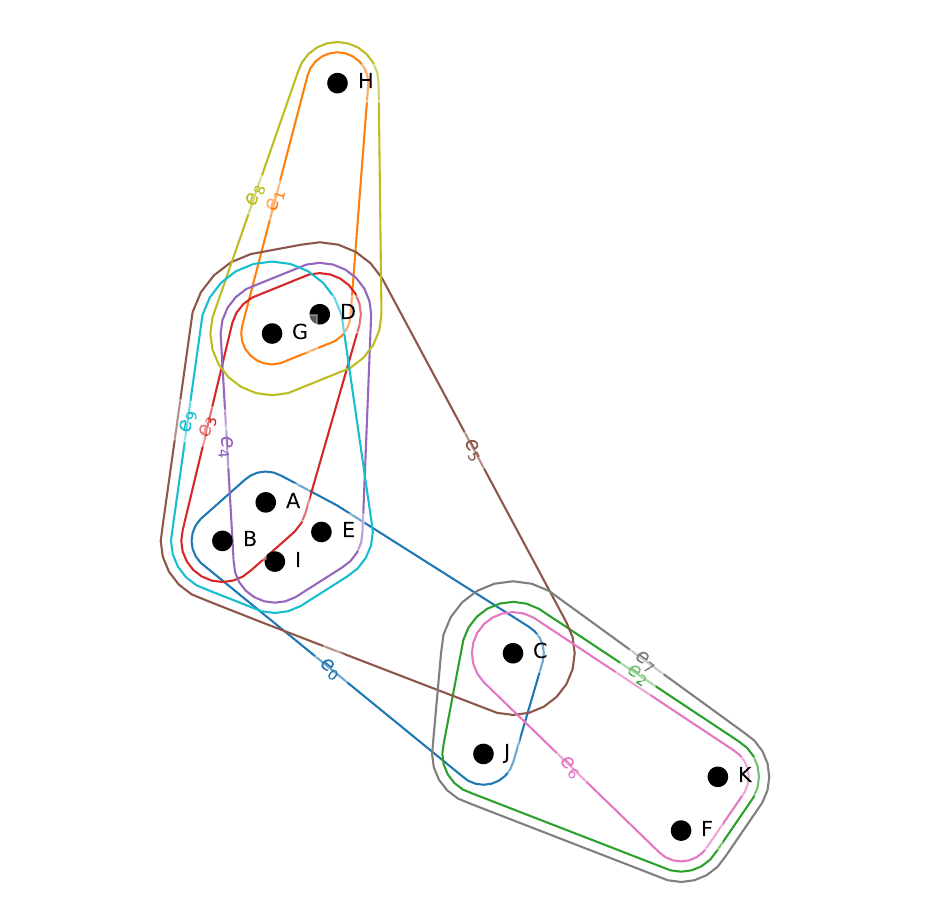}
    \caption{Sea.}
    \label{fig:appx-rwre-hypergraph-sea}
\end{figure*}

\begin{figure*}[th]
    \centering
    \includegraphics[width=0.5\textwidth]{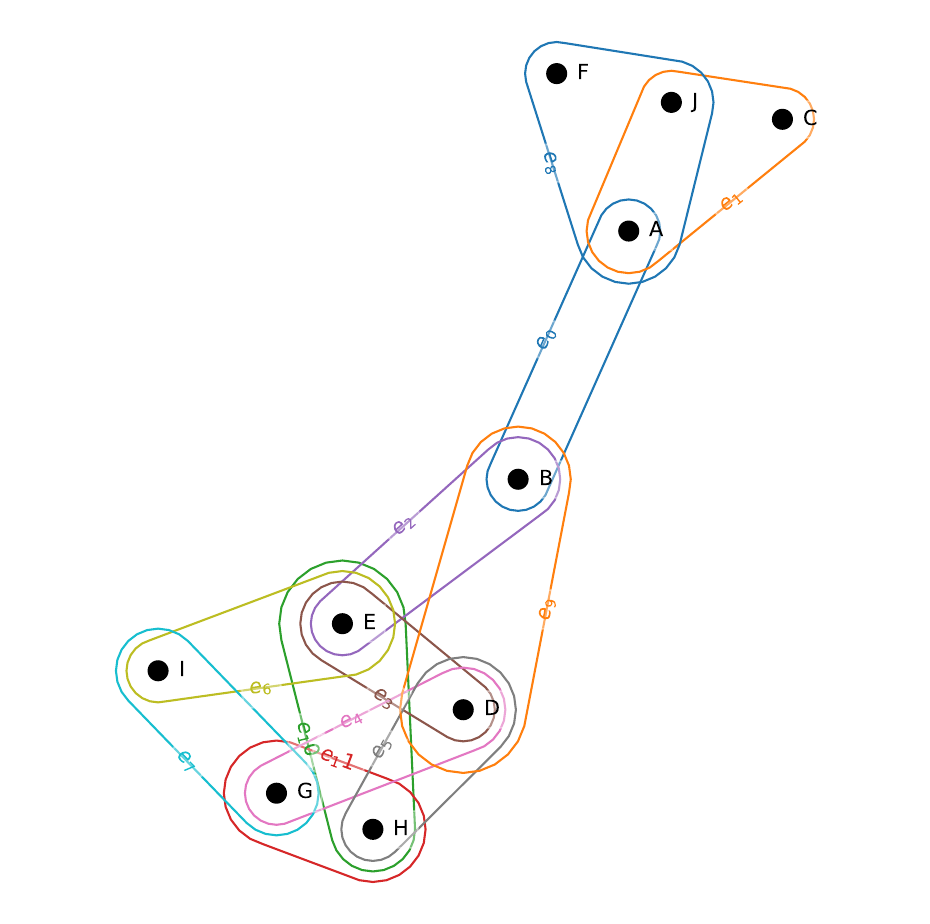}
    \caption{Table.}
    \label{fig:appx-rwre-hypergraph-table}
\end{figure*}

\begin{figure*}[th]
    \centering
    \includegraphics[width=0.5\textwidth]{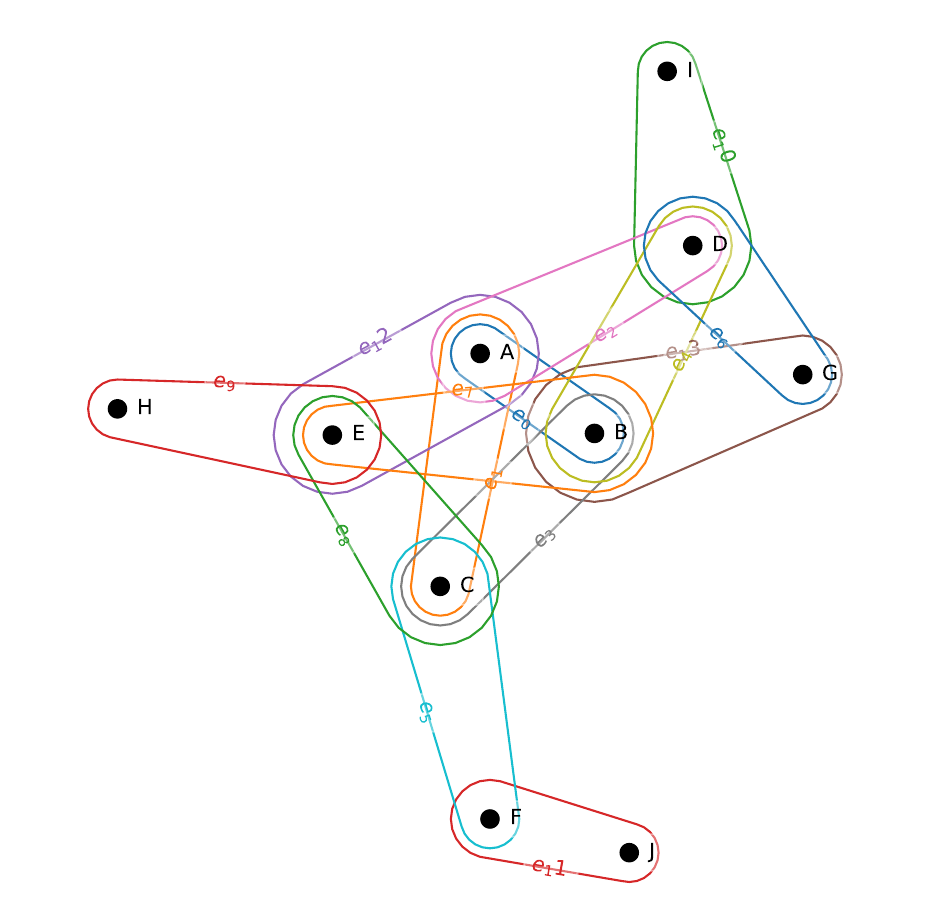}
    \caption{Zoo.}
    \label{fig:appx-rwre-hypergraph-zoo}
\end{figure*}


\end{document}